\newcommand{\algopt}{\textsc{Choco-SGD}\xspace} 
\newcommand{\algcons}{\textsc{Choco-Gossip}\xspace} 
\newcommand{\eqncons}{\textsc{Choco-G}\xspace}
\providecommand{\lin}[1]{\ensuremath{\left\langle #1 \right\rangle}}
\providecommand{\abs}[1]{\left\lvert#1\right\rvert}
\providecommand{\norm}[1]{\left\lVert#1\right\rVert}
  \providecommand{\R}{\mathbb{R}} 
  \providecommand{\N}{\mathbb{N}} 
  \providecommand{\E}[1]{{\mathbb E}\left.#1\right. }        
  \providecommand{\EE}[2]{{\mathbb E}_{#1}\left.#2\right. }  
  \providecommand{\0}{\mathbf{0}}
  \providecommand{\1}{\mathbf{1}}
  \renewcommand{\aa}{\mathbf{a}}
  \providecommand{\bb}{\mathbf{b}}
  \renewcommand{\gg}{\mathbf{g}}
  \providecommand{\qq}{\mathbf{q}}
  \providecommand{\xx}{\mathbf{x}}
  \providecommand{\yy}{\mathbf{y}}
  \providecommand{\cD}{\mathcal{D}}
  \providecommand{\cO}{\mathcal{O}}
\newtheorem{lemma}{Lemma}
\newtheorem{definition}{Definition}
\newtheorem{remark}[lemma]{Remark}
\newtheorem{assumption}{Assumption}
\newtheorem{theorem}[lemma]{Theorem}
\renewcommand{\cite}[1]{\citep{#1}}
\definecolor{mydarkblue}{rgb}{0,0.08,0.45}
\date{}
\title{\bf Decentralized Stochastic Optimization and Gossip Algorithms with Compressed Communication}
\author{
		Anastasia Koloskova \\
	EPFL
	\and 
	Sebastian U. Stich \\
	EPFL
	\and 
	 Martin Jaggi\\
	 EPFL
	 \and 
	\texttt{\normalsize\{anastasia.koloskova, sebastian.stich, martin.jaggi\}@epfl.ch}}
\begin{document}
	\maketitle


%

\begin{abstract}
We consider decentralized stochastic optimization with the objective function (e.g. data samples for machine learning task) 
being distributed over $n$ machines that can only communicate to their neighbors on a fixed communication graph. 
To reduce the communication bottleneck,  the nodes compress (e.g.\ quantize or sparsify) their model updates. We cover both unbiased and biased compression operators with quality denoted by $\omega \leq 1$ ($\omega=1$ meaning no compression).
\\
We (i) propose a novel gossip-based stochastic gradient descent algorithm, \algopt, 
that converges at rate $\cO\left(1/(nT) + 1/(T \delta^2 \omega)^2\right)$ for strongly convex objectives, where $T$ denotes the number of iterations %
and $\delta$ the eigengap of the connectivity matrix. 
Despite compression quality and network connectivity affecting the higher order terms, the first term in the rate, $\cO(1/(nT))$, is the same as for the centralized baseline with exact communication.
We (ii) present a novel gossip algorithm, \algcons, for the average consensus problem that converges in time $\cO(1/(\delta^2\omega) \log (1/\epsilon))$ 
for accuracy $\epsilon > 0$. This is (up to our knowledge) the first gossip algorithm that supports arbitrary compressed messages for $\omega > 0$ and still exhibits linear convergence. We (iii) show in experiments that both of our algorithms do outperform the respective state-of-the-art baselines and \algopt can reduce communication by at least two orders of magnitudes.
\end{abstract}

\section{Introduction}

\emph{Decentralized} machine learning methods are becoming core aspects of many important applications, both in view of scalability to larger datasets and systems, but also from the perspective of data locality, ownership and privacy.
In this work we address the general \emph{data-parallel} setting where the data is distributed across different compute devices, and consider decentralized optimization methods that do not rely on a central coordinator (e.g. parameter server) but instead only require on-device computation and local communication with neighboring devices.
This covers for instance the classic setting of training machine learning models in large data-centers, but also emerging applications were the computations are executed directly on the consumer devices, which keep their part of the data private at all times.\footnote{Note the optimization process itself (as for instance the computed result) might leak information about the data of other nodes. We do not focus on quantifying notions of privacy in this work.}

Formally, we consider optimization problems distributed across $n$ devices or nodes of the form
\begin{align}
f^\star := \min_{\xx \in \R^d} \bigg[ f(\xx) := \frac{1}{n} \sum_{i=1}^n f_i(\xx) \bigg]\,, \label{eq:prob}
\end{align}
where $f_i \colon \R^d \to \R$ for $i\in [n] := \{1,\dots,n\}$ are the objectives defined by the local data available on each node.
We also allow each local objective $f_i$ to have stochastic optimization (or sum) structure, covering the important case of empirical risk minimization in distributed machine learning and deep learning applications.

\paragraph{Decentralized Communication.}
We model the network \emph{topology} as a graph $G=([n],E)$ with edges $\{i,j\}\in E$ if and only if nodes $i$ and $j$ are connected by a communication link, meaning that these nodes directly can exchange messages (for instance computed model updates).
The decentralized setting is motivated by centralized topologies (corresponding to a star graph) often not being possible, and otherwise often posing a significant bottleneck on the central node in terms of communication latency, bandwidth and fault tolerance.
Decentralized topologies avoid these bottlenecks and thereby offer hugely improved potential in scalability. For example, while the master node in the centralized setting receives (and sends) in each round messages from all workers, $\Theta(n)$  in total\footnote{For better connected topologies sometimes more efficient all-reduce and broadcast implementations are available.}, 
in decentralized topologies the maximal degree of the network is often constant (e.g. ring or torus) or a slowly growing function in $n$ (e.g. scale-free networks).

\paragraph{Decentralized Optimization.}
For the case of deterministic (full-gradient) optimization, recent seminal theoretical advances show that the network topology only affects higher-order terms of the convergence rate of decentralized optimization algorithms on convex problems~\cite{Scaman2017:optimal,Scaman2018:non-smooth}.
We prove the first analogue result for the important case of decentralized stochastic gradient descent (SGD), proving convergence at rate $\cO ( 1/(nT) )$ (ignoring for now higher order terms) on strongly convex functions where $T$ denotes the number of iterations.

This result is significant since stochastic methods are highly preferred for their efficiency over deterministic gradient methods in machine learning applications.
Our algorithm, \algopt, is as efficient in terms of iterations as centralized mini-batch SGD (and consequently also achieves a speedup of factor $n$ compared to the serial setting on a single node) but avoids the communication bottleneck that centralized algorithms suffer from.

\paragraph{Communication Compression.}
In distributed training, model updates (or gradient vectors) have to be exchanged between the worker nodes. To reduce the amount of data that has to be send, gradient \emph{compression} has become a popular strategy. For instance by quantization~\cite{Alistarh2017:qsgd,Wen2017:terngrad,Lin2018:deep}
or sparsification~\cite{Wangni2018:sparsification,Stich2018:sparsifiedSGD}. %

These ideas have recently been introduced also to the decentralized setting by~\citet{Tang2018:decentralized}. However, their analysis only covers unbiased compression operators with very (unreasonably) high accuracy constraints. 
Here we propose the first method that supports arbitrary low accuracy and even biased compression operators, such as in~\cite{Alistarh2018:topk,Lin2018:deep,Stich2018:sparsifiedSGD}.

\paragraph{Contributions.}
Our contributions can be summarized as follows: \vspace{-2mm}
\begin{itemize}%
 \item We show that the proposed \algopt converges at rate $\cO(1/(nT) + 1/(T \delta^2 \omega)^2)$, where $T$ denotes the number of iterations, $n$ the number of workers, $\delta$ the eigengap of the gossip (connectivity) matrix %
and $\omega \leq 1$ the compression quality factor ($\omega=1$ meaning no compression). We show that the decentralized method achieves the same speedup as centralized mini-batch SGD when the number~$n$ of workers grows. The network topology and the compression only mildly affect the convergence rate. This is verified experimentally on the ring topology and by reducing the communication by a factor of 100 ($\omega = \frac{1}{100}$).
 \item We present the first provably-converging gossip algorithm with communication compression, for the distributed average consensus problem. Our algorithm, \algcons, converges linearly at rate $\cO(1/(\delta^2\omega) \log (1/\epsilon))$ for accuracy $\epsilon > 0$, and allows arbitrary communication compression operators (including biased and unbiased ones).
  In contrast, previous work required very high-precision quantization $\omega \approx 1$ and could only show convergence towards a neighborhood of the optimal solution.
 \item \algopt significantly outperforms state-of-the-art methods for decentralized optimization with gradient compression, such as ECD-SGD and DCD-SGD introduced in~\cite{Tang2018:decentralized}, in all our experiments.
\end{itemize}

\section{Related Work}
Stochastic gradient descent (SGD)~\cite{Robbins:1951sgd,Bottou2010:sgd} 
and variants thereof are the standard algorithms for machine learning problems of the form~\eqref{eq:prob}, though it is an inherit serial algorithm that does not take the distributed setting into account.
Mini-batch SGD~\cite{Dekel2012:minibatch} is the natural parallelization of SGD for~\eqref{eq:prob} in the centralized setting, i.e. when a master node collects the updates from all worker nodes, and serves a baseline here. 

\paragraph{Decentralized Optimization.}
The study of decentralized optimization algorithms can be tracked back at least to the 1980s~\cite{Tsitsiklis1985:gossip}.
Decentralized algorithms are sometimes referred to as \emph{gossip} algorithms~\cite{Kempe2003:gossip,Xiao2014:averaging,Boyd2006:randgossip}
 as the information is not broadcasted by a central entity, but spreads---similar as gossip---along the edges specified by the communication graph.
The most popular algorithms are based on
(sub)gradient descent 
\cite{Nedic2009:distributedsubgrad,Johansson2010:distributedsubgrad},
alternating direction method of multipliers (ADMM)~\cite{Wei2012:distributedadmm,Iutzeler2013:randomizedadmm} or dual averaging~\cite{Duchi2012:distributeddualaveragig,Nedic2015:dualavg}.
\citet{cola2018nips} address the more specific problem class of generalized linear models.
\\
For the deterministic (non-stochastic) convex version of~\eqref{eq:prob} a recent line of work developed optimal algorithms based on acceleration~ \cite{Jakovetic2014:fast,Scaman2017:optimal,Scaman2018:non-smooth,Uribe:2018uk}. Rates for the stochastic setting are derived in~\cite{Shamir2014:distributedSO,Rabbat2015:mirrordescent}, under the assumption that the distributions %
on all nodes are equal.  %
This is a strong restriction which prohibits most distributed machine learning applications. Our algorithm \algopt avoids any such assumption.
Also, \cite{Rabbat2015:mirrordescent} requires multiple communication rounds per stochastic gradient computation and so is not suited for sparse communication, as the required number of communication rounds would increase proportionally to the sparsity.
\citet{Lan2018:decentralized} applied gradient sliding techniques allowing to skip some of the communication rounds.
\\
\citet{Lian2017:decentralizedSGD%
,Tang2018:d2,Tang2018:decentralized, Assran:2018sdggradpush} consider the non-convex setting with \citet{Tang2018:decentralized} also applying gradient quantization techniques to reduce the communication cost. However, their algorithms require very high precision quantization, a constraint we can overcome here.%

\paragraph{Gradient Compression.}
Instead of transmitting a full dimensional (gradient) vector $\gg \in \R^d$, methods with gradient compression transmit a compressed vector $Q(\gg)$ instead, where $Q \colon \R^d \to \R^d$ is a (random) operator chosen such that $Q(\gg)$ can be more efficiently represented, for instance by using limited bit representation (\emph{quantization}) or enforcing \emph{sparsity}. A class of very common quantization operators is based on random dithering~\cite{Goodall1951:randdithering,Roberts1962:randdithering} that is in addition also unbiased, $\EE{\xi}{Q(\xx)}=\xx$, $\forall \xx \in \R^d$, see \cite{Alistarh2017:qsgd,Wen2017:terngrad,Zhang2017:zipml}.
Much sparser vectors can be obtained by random sparsification techniques that randomly mask the input vectors and only preserve a constant number of coordinates~\cite{Wangni2018:sparsification,Konecny:2018sparseMean,Stich2018:sparsifiedSGD}.
Techniques that do not directly quantize gradients, but instead maintain additional states are known to perform better in theory and practice~\cite{Seide2015:1bit,Lin2018:deep,Stich2018:sparsifiedSGD}, an approach that we pick up here. Our analysis also covers deterministic  and biased compression operators, such as in~\cite{Alistarh2018:topk,Stich2018:sparsifiedSGD}. 
We will not further distinguish between sparsification and quantization approaches, and refer to both of them as \emph{compression} operators in the following.

\paragraph{Distributed Average Consensus.}
In the decentralized setting, the average consensus problem consists in finding the average vector of $n$ local vectors (see\ \eqref{def:consensus} below for a formal definition).
The problem is an important sub-routine of many decentralized algorithms.
It is well known that gossip-type algorithms converge linearly for average consensus~ \cite{Kempe2003:gossip,Xiao2014:averaging,Olfati2004:consensus,Boyd2006:randgossip}. 
However, for consensus algorithms with compressed communication it has been remarked that the standard gossip algorithm does not converge to the correct solution~\cite{Xiao2005:drift}. 
The proposed schemes in~\cite{Carli2007:noise,Nedic2008:quantizationeffects,%
Aysal2008:dithering,Carli2010:quantizedconsensus,Yuan2012:distributedquant} do only converge to a neighborhood (whose size depends on the compression accuracy) of the solution. %
\\
In order to converge, adaptive schemes (with varying compression accuracy) have been proposed~\cite{Carli2010:codingschemes,Fang2010:onebit,%
Li2011:quantizedconsensus,Thanou2013:quantizationrefinement}.
However, these approaches fall back to full (uncompressed) communication to reach high accuracy. %
In contrast, our method converges linearly to the true solution, even for arbitrary compressed %
 communication, without requiring adaptive accuracy. We are not aware of a method in the literature with similar guarantees.

\section{Average Consensus with Communication Compression}

In this section we present \algcons, a novel gossip algorithm for distributed average consensus with compressed communication.
As mentioned, the average consensus problem is an important special case of type~\eqref{eq:prob}, and formalized as\vspace{-2mm}
\begin{align}
 \overline{\xx} := \frac{1}{n} \sum_{i=1}^n \xx_i\,, \label{def:consensus}
\end{align}
for vectors $\xx_i \in \R^d$ distributed on $n$ nodes
(consider $f_i(\xx)=\frac{1}{2}\norm{\xx-\xx_i}^2$ in~\eqref{eq:prob}). 
Our proposed algorithm will later serve as a crucial primitive in our optimization algorithm for the general optimization problem~\eqref{eq:prob}, but is of independent interest for any average consensus problem with communication constraints. 

In Sections~\ref{sec:gossip1}--\ref{sec:gossip3} below we first review existing schemes that we later consider as baselines for the numerical comparison. The novel algorithm follows in Section~\ref{sec:ours}.

\subsection{Gossip algorithms}
\label{sec:gossip1}
The classic decentralized algorithms for the average consensus problem are \emph{gossip} type algorithms (see e.g.~\cite{Xiao2014:averaging}) that generate sequences $\bigl\{\xx_i^{(t)}\bigr\}_{t\geq 0}$ on every node $i \in [n]$ by iterations of the form 
\begin{align}
\xx^{(t + 1)}_i := \xx_i^{(t)} + \gamma \sum_{j = 1}^{n} w_{ij} \Delta_{ij}^{(t)}\,. \label{eq:iter_vector}
\end{align}
Here $\gamma \in (0,1]$ denotes a stepsize parameter, $w_{ij} \in [0,1]$ averaging weights and $\Delta_{ij}^{(t)} \in \R^d$ denotes a vector that is sent from node $j$ to node $i$ in iteration $t$. Note that no communication is required if $w_{ij}=0$. If we assume symmetry, $w_{ij}=w_{ji}$, the weights naturally define the communication graph $G=([n],E)$ with edges $\{i,j\} \in E$ if $w_{ij} > 0$ and self-loops $\{i\} \in E$ for $i \in [n]$. 
The convergence rate of scheme~\eqref{eq:iter_vector} crucially depends on the connectivity matrix $W \in \R^{n \times n}$ of the network defined as $(W)_{ij} = w_{ij}$, also called the interaction or gossip matrix.

\begin{definition}[Gossip matrix]\label{def:W}
	We assume that $W \in [0,1]^{n \times n}$ is a symmetric ($W=W^\top$) doubly stochastic ($W\1=\1$,$\1^\top W = \1^\top$) matrix with eigenvalues  $1 = |\lambda_1(W)| > |\lambda_2(W)| \geq \dots \geq |\lambda_n(W)|$ and spectral gap
	\begin{align}
	\delta := 1 - |\lambda_2(W)| \in (0,1]\,. \label{def:spectral_gap}
	\end{align}
	It will also be convenient to define 
	\begin{align}
	\rho &:=  1 - \delta\,, & &\text{and} & %
	\beta &:= \norm{I-W}_2 \in [0,2]\,. %
	\label{def:betarho}
	\end{align}
\end{definition}
Table~\ref{tab:rho} gives a few values of the spectral gap for commonly used network topologies (with uniform averaging between the nodes). It is well known that simple matrices $W$ with $\delta>0$ do exist for every connected graph.  %

\begin{table}[t]
\centering
\begin{tabular}{l|l|l}
graph/topology & $\delta^{-1}$ & node degree \\ \hline
ring & $\cO (n^2)$ & 2\\
2d-torus & $\cO (n)$ & 4 \\
fully connected & $\cO(1)$ & $n-1$
\end{tabular}
\caption{Spectral gap $\delta$ for some important network topologies  on $n$ nodes (see e.g. \citep[p. 169]{Aldous:2014EigengapBook}) for uniformly averaging $W$, i.e. $w_{ij} = \frac{1}{deg(i)} = \frac{1}{deg(j)}$ for $\{i,j\} \in E$.}%
\label{tab:rho}
\end{table}

\subsection{Gossip with Exact Communication}
\label{sec:gossip2}
For a fixed gossip matrix $W$, the classical algorithm analyzed in~\cite{Xiao2014:averaging} %
corresponds to the choice
\begin{align}
\gamma &:= 1, & \Delta_{ij}^{(t)} &:= \xx^{(t)}_j - \xx^{(t)}_i , \tag{E-G} \label{eq:boyd}
\end{align}
in~\eqref{eq:iter_vector}, with~\eqref{eq:boyd} standing for \emph{exact gossip}. This scheme can also conveniently be written in matrix notation as
\begin{align}
X^{(t+1)} := X^{(t)} + \gamma X^{(t)} (W-I)\,,  \label{eq:boyd2}
\end{align}
for iterates $X^{(t)}:=[\xx_{1}^{(t)},\dots,\xx_{n}^{(t)}] \in \R^{d \times n}$.
\begin{theorem}\label{th:boyd2_rate}
Let $\gamma \in (0,1]$ and $\delta$ be the spectral gap of $W$. Then the iterates of~\eqref{eq:boyd} converge linearly to the average $\overline{\xx} = \frac{1}{n} \sum_{i = 1}^{n}\xx_i^{(0)}$ with the rate
\begin{align*}
	\sum_{i = 1}^n\norm{\xx_i^{(t)} - \overline{\xx}}^2 \leq (1-\gamma\delta)^{2t} \sum_{i = 1}^n\norm{\xx_i^{(0)} - \overline{\xx}}^2\,.
	\end{align*}
\end{theorem}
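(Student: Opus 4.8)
The plan is to track the evolution of the deviation from the mean in the matrix form~\eqref{eq:boyd2}. First I would observe that the average $\overline{\xx}$ is preserved by the iteration: since $W$ is doubly stochastic, $\frac{1}{n}X^{(t+1)}\1 = \frac{1}{n}X^{(t)}\1 + \frac{\gamma}{n}X^{(t)}(W-I)\1 = \frac{1}{n}X^{(t)}\1$ because $W\1=\1$. Hence $\overline{\xx}^{(t)}=\overline{\xx}^{(0)}=\overline{\xx}$ for all $t$, and the quantity to control is the matrix $X^{(t)} - \overline{\xx}\1^\top = X^{(t)}(I - \frac{1}{n}\1\1^\top)$. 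Writing $P := \frac{1}{n}\1\1^\top$ for the projector onto the all-ones direction, the key algebraic fact is that $P$ commutes with $W$ (both have $\1$ as an eigenvector and $W$ is symmetric), so $(X^{(t+1)})(I-P) = X^{(t)}(I-P)\bigl(I + \gamma(W-I)\bigr)$, i.e. the deviation matrix evolves by right-multiplication with $M := (1-\gamma)I + \gamma W$ restricted to the orthogonal complement of $\1$.

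Next I would bound the spectral norm of $M$ acting on the subspace $\1^\perp$. On that subspace the eigenvalues of $W$ are $\lambda_2(W),\dots,\lambda_n(W)$, all of absolute value at most $|\lambda_2(W)| = 1-\delta = \rho$; so the eigenvalues of $M$ there are $(1-\gamma) + \gamma\lambda_i(W)$, which for $\gamma \in (0,1]$ satisfy $|(1-\gamma)+\gamma\lambda_i| \le (1-\gamma) + \gamma|\lambda_i| \le (1-\gamma) + \gamma\rho = 1 - \gamma\delta$. Since $M$ is symmetric and $X^{(t)}(I-P)$ has all its columns (equivalently its row space) in $\1^\perp$, we get $\norm{X^{(t+1)}(I-P)}_F \le (1-\gamma\delta)\norm{X^{(t)}(I-P)}_F$, where $\norm{\cdot}_F$ is the Frobenius norm. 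The statement follows by induction on $t$ together with the identity $\sum_{i=1}^n \norm{\xx_i^{(t)}-\overline{\xx}}^2 = \norm{X^{(t)} - \overline{\xx}\1^\top}_F^2 = \norm{X^{(t)}(I-P)}_F^2$.

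The main obstacle — really the only subtle point — is justifying that the contraction factor is controlled by the \emph{second} eigenvalue rather than the spectral norm of $W$ itself. This requires being careful that $P$ and $I-P$ are orthogonal projectors that commute with $W$, that the deviation matrix lives entirely in the range of $I-P$ (as a subspace of row space), and that multiplying a matrix whose rows lie in an invariant subspace $V$ by a symmetric matrix $M$ contracts the Frobenius norm by $\max\{|\mu| : \mu \text{ eigenvalue of } M|_V\}$. One clean way to see the last point is to expand each row of $X^{(t)}(I-P)$ in an orthonormal eigenbasis of $W$ (which refines into a basis of $\1$ and a basis of $\1^\perp$) and note that right-multiplication by $M$ scales the $i$-th coordinate by $(1-\gamma)+\gamma\lambda_i(W)$; summing $|\cdot|^2$ over all coordinates and all rows gives the bound. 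Everything else is routine; no stepsize restriction beyond $\gamma\in(0,1]$ is needed, and the bound holds deterministically since \eqref{eq:boyd} involves no randomness.
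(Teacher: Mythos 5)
Your proof is correct and follows essentially the same route as the paper's: preservation of the average via double stochasticity, reduction to the deviation matrix $X^{(t)}-\overline{X}$ whose rows lie in $\1^\perp$, and a contraction factor $1-\gamma\delta$ coming from the spectral gap. The only cosmetic difference is that you bound the eigenvalues of $(1-\gamma)I+\gamma W$ on $\1^\perp$ directly, whereas the paper's appendix lemma splits this matrix by the triangle inequality into $(1-\gamma)\norm{X^{(t)}-\overline{X}}_F + \gamma\norm{(X^{(t)}-\overline{X})(W-\tfrac{1}{n}\1\1^\top)}_F$ and invokes $\norm{W^k-\tfrac{1}{n}\1\1^\top}_2\leq(1-\delta)^k$; both yield the identical bound.
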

For $\gamma = 1$ this corresponds to the classic result in e.g.~\cite{Xiao2014:averaging}, here we slightly extend the analysis for arbitrary stepsizes. The short proof shows the elegance of the matrix notation (that we will later also adapt for the proofs that will follow).
\begin{proof}[Proof for $\gamma = 1$.]
Let $\overline{X} := [\overline{\xx},\dots,\overline{\xx}] \in \R^{d \times n}$. Then for $\gamma = 1$ the theorem  follows from the observation
\begin{align*}
 \norm{X^{(t+1)}-\overline{X}}_F^2  &\stackrel{\eqref{eq:boyd2}}{=} \norm{(X^{(t)}-\overline{X})W}_F^2 \\
 &= \norm{(X^{(t)}-\overline{X})(W- \tfrac{1}{n}\1 \1^\top)}_F^2 \\
 &\leq \norm{W- \tfrac{1}{n}\1 \1^\top}_2^2  \norm{X^{(t)}-\overline{X}}_F^2 \\
 &= \rho^2 \norm{X^{(t)}-\overline{X}}_F^2\,.
\end{align*}
Here on the second line we used the crucial identity $X^{(t)}(\tfrac{1}{n} \1 \1^\top) = \overline{X}$, i.e. the algorithm preserves the average over all iterations. This can be seen from~\eqref{eq:boyd2}: %
\begin{align*}
X^{(t+1)}(\tfrac{1}{n} \1 \1^\top) = X^{(t)}  W(\tfrac{1}{n} \1 \1^\top)  = X^{(t)}  (\tfrac{1}{n} \1 \1^\top)  = \overline{X}\,,
\end{align*}
by Definition~\ref{def:W}.
The proof for arbitrary $\gamma$ follows the same lines and is given in the appendix.
\end{proof}

\subsection{Gossip with Quantized Communication}
\label{sec:gossip3}
In every round of scheme~\eqref{eq:boyd} a full dimensional vector $\gg \in \R^d$ is exchanged between two neighboring nodes for every link on the communication graph (node $j$ sends $\gg = \xx_j^{(t)}$ to all its neighbors $i$, $\{i,j\} \in E$). %
A natural way to reduce the communication is to compress $\gg$ before sending it, denoted as
$Q(\gg)$, for  a (potentially random) compression $Q \colon \R^d \to \R^d$. Informally, we can think of $Q$ as either a sparsification operator (that enforces sparsity of $Q(\gg)$) or a quantization operator that reduces the number of bits required to represent $Q(\gg)$. For instance random rounding to less precise floating point numbers or to integers. 

\citet{Aysal2008:dithering} propose the quantized gossip~\eqref{eq:consensus_first},
\begin{align}\label{eq:consensus_first}
\gamma &:= 1, & \Delta_{ij}^{(t)} &:= Q(\xx^{(t)}_j) - \xx^{(t)}_i\,, \tag{Q1-G}
\end{align}
in scheme~\eqref{eq:iter_vector}, i.e. to apply the compression operator directly on the message that is send out from node $j$ to node $i$.
However, this algorithm does not preserve the average of the iterates over the iterations, $\tfrac{1}{n}\sum_{i=1}^n\xx_i^{(0)} \neq \tfrac{1}{n} \sum_{i=1}^n\xx_i^{(t)}$ for $t \geq 1$, and as a consequence does not converge to the optimal solution $\overline{\xx}$ of~\eqref{def:consensus} (though in practice often to a close neighborhood).

An alternative proposal by~\citet{Carli2007:noise} alleviates this drawback. The scheme
\begin{align}\label{eq:consensus_second}
\gamma &:= 1, & \Delta_{ij}^{(t)} &:= Q(\xx^{(t)}_j) - Q(\xx^{(t)}_i)\,, \tag{Q2-G}
\end{align}
preserves the average of the iterates over the iterations. However, the scheme also fails to converge for arbitrary precision. If $\overline{\xx} \neq \0$, the noise introduced by the compression, $\bigl\|Q(\xx_j^{(t)})\bigr\|$, does not vanish for $t \to \infty$. As a consequence, the iterates oscillate around $\overline{\xx}$ when compression error becomes larger than the suboptimality $\bigl\|\xx_i^{(t)} - \overline{\xx}\bigr\|$.

Both these schemes have been theoretically studied in~\cite{Carli2010:quantizedconsensus} under assumption of unbiasendness, i.e. assuming $\EE{Q}{Q(\xx)}=\xx$ for all $\xx \in \R^d$ (and we will later also adopt this theoretically understood setting in our experiments).

\subsection{Proposed Method for Compressed Communication}
\label{sec:ours}
We propose the novel compressed gossip scheme\ \algcons that supports a much larger class of compression operators, beyond unbiased quantization as for the schemes above. The algorithm can be summarized as 
\begin{align}
\begin{split}
	\hat{\xx}_j^{(t + 1)} &:= \hat{\xx}_j^{(t)} + Q(\xx^{(t)}_j - \hat{\xx}_j^{(t)})\,,\\
	\Delta_{ij}^{(t)} &:= \hat{\xx}_j^{(t + 1)} - \hat{\xx}_i^{(t + 1)}\,,%
\end{split} \tag{\eqncons}%
\label{eq:ours}
\end{align}
for a stepsize $\gamma < 1$ depending on the specific compression operator $Q$ (this will be detailed below).
Here $\hat{\xx}_i^{(t)} \in \R^d$ denote additional variables that are stored\footnote{A closer look reveals that actually only 2 additional vectors have to be stored per node (refer to Appendix~\ref{sect:efficient_implementation}).
} by all neighbors $j$ of node $i$, $\{i,j\} \in E$, as well as on node $i$ itself. 

We will show in Theorem~\ref{th:consensus} below that this scheme (i) preserves the averages of the iterates $\xx_i^{(t)}$, $i \in [n]$ over the iterations $t \geq 0$. Moreover, (ii) the noise introduced by the compression operator vanishes as $t \to 0$. Precisely, we will show that  $(\xx_i^{(t)},\hat{\xx}_i^{(t)}) \to (\overline{\xx},\overline{\xx})$ for $t \to \infty$ for every $i \in [n]$. Consequently, the argument for $Q$ in~\eqref{eq:ours} goes to zero, and the noise introduced by $Q$ can be controlled.

The proposed scheme is summarized in Algorithm~\ref{alg:consensus}. Every worker $i \in [n]$ stores and updates its own local variable $\xx_i$ as well as the variables $\hat{\xx}_j$ for all neighbors (including itself) $j : \{i, j\}\in E$.

\begin{algorithm}[t]
\renewcommand{\algorithmiccomment}[1]{#1}
	\caption{\algcons %
	}\label{alg:consensus}
	\begin{algorithmic}[1]
		\INPUT{:  Initial values $\xx_i^{(0)} \in \R^d$ on each node $i \in [n]$, stepsize $\gamma$, communication graph $G = ([n], E)$ and mixing matrix $W$, initialize $\hat{\xx}_i^{(0)} := \0$ $\forall i$}\\
		\FOR[{{\it in parallel for all workers $i \in [n]$}}]{$t$ \textbf{in} $0\dots T-1$}
		\STATE $\qq_i^{(t)} := Q(\xx_i^{(t)} - \hat{\xx}_i^{(t)})$
		\FOR{neighbors $j \colon \{i,j\} \in E$ (including $\{i\} \in E$)}
		\STATE Send $\qq_i^{(t)}$ and receive $\qq_j^{(t)}$ 
		\STATE $\hat{\xx}_j^{(t + 1)} := \hat{\xx}_j^{(t)} + \qq_j^{(t)}$%
		\ENDFOR
		\vspace{1mm}
		\STATE $\xx_i^{(t + 1)} := \xx_i^{(t)} + \gamma \!\!\displaystyle\sum_{j: \{i, j\}\in E}w_{ij}\left(\hat{\xx}_j^{(t + 1)} - \hat{\xx}_i^{(t + 1)}\right)$
		\ENDFOR
	\end{algorithmic}
\end{algorithm}

Algorithm~\ref{alg:consensus} seems to require each machine to store $deg(i) + 2$ vectors. This is not necessary and the algorithm could be re-written in a way that every node stores only {\it three} vectors: $\xx_i$, $\hat{\xx}_i$ and $\mathbf{s}_i = \sum_{j:\{i, j\} \in E} w_{ij}\hat{\xx}_j$. For simplicity, we omit this technical modification here and refer to Appendix~\ref{sect:efficient_implementation} for the exact form of the memory-efficient algorithm.

\subsection{Convergence Analysis for \algcons%
}
\label{sec:rateconsensus}
We analyze Algorithm \ref{alg:consensus} under the following general quality notion for the compression operator $Q$.

\begin{assumption}[Compression operator]\label{assump:q}
We assume that the compression operator $Q \colon \R^d \to \R^d$ satisfies
	\begin{align}
	\EE{Q}{\norm{Q(\xx) - \xx}}^2 &\leq (1 - \omega) \norm{\xx}^2, & &\forall \xx \in \R^d \,, \label{def:omega}
	\end{align}
	for a parameter $\omega > 0$. Here $\mathbb{E}_Q$ denotes the expectation over the internal randomness of operator $Q$.
\end{assumption}
\noindent\textbf{Example operators} that satisfy~\eqref{def:omega} include\vspace{-2mm}
\begin{itemize}%
\item \emph{sparsification}: Randomly selecting $k$ out of $d$ coordinates ($\operatorname{rand}_k$), or the $k$ coordinates with highest magnitude values ($\operatorname{top}_k$) give $\omega = \frac{k}{d}$ \citep[Lemma A.1]{Stich2018:sparsifiedSGD}.
\item \emph{randomized gossip}: Setting $Q(\xx) = \xx$ with probability $p \in (0,1]$ and $Q(\xx)=\0$ otherwise, gives $\omega = p$.
\item \emph{rescaled unbiased estimators}: suppose $\EE{Q}{Q(\xx)}=\xx$, $\forall \xx \in \R^d$ and $\EE{Q}{\norm{Q(\xx)}^2}\leq \tau \norm{\xx}^2$, then
$Q'(\xx) := \frac{1}{\tau} Q(\xx)$ satisfies~\eqref{def:omega} with $\omega = \frac{1}{\tau}$.
\item \emph{random quantization}: For precision (levels) $s \in \N_+$, and $\tau = (1+\min\{d/s^2,\sqrt{d}/s\})$ the quantization operator
\begin{align*}
\operatorname{qsgd}_s(x) =   \frac{\operatorname{sign}(\xx) \cdot  \norm{\xx}}{s \tau}\cdot \left\lfloor s \frac{ \abs{\xx}}{\norm{\xx}} + \xi \right\rfloor\,,
\end{align*}
for random variable $\xi \sim_{\rm u.a.r.} [0,1]^{d}$ satisfies~\eqref{def:omega} with $\omega = \frac{1}{\tau}$ \citep[Lemma 3.1]{Alistarh2017:qsgd}. %
\end{itemize}

\begin{theorem}\label{th:consensus}
	\algcons (Algorithm~\ref{alg:consensus}) converges 
	linearly for average consensus:%
	\begin{align*}
		e_t \leq \left(1 - \dfrac{\delta^2 \omega}{82}\right)^t e_0\,,
	\end{align*}
	when using the stepsize $\gamma := \frac{\delta^2 \omega}{16\delta + \delta^2 + 4 \beta^2 + 2 \delta\beta^2 - 8 \delta \omega}$,
	where~$\omega$ is the compression factor as in Assumption~\ref{assump:q}, 
	and $e_t = \EE{Q}{\sum_{i = 1}^{n}\left( \bigl\| \xx_i^{(t)} - \overline{\xx}\bigr\|^2 + \bigl\| \xx_i^{(t)} - \hat{\xx}_i^{(t + 1)}\bigr\|^2 \right)}$.
\end{theorem}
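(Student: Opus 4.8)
# Proof Proposal for Theorem~\ref{th:consensus}

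The plan is to set up a matrix-notation recursion analogous to the proof of Theorem~\ref{th:boyd2_rate}, write Algorithm~\ref{alg:consensus} as
$X^{(t+1)} = X^{(t)} + \gamma \widehat{X}^{(t+1)}(W-I)$ and $\widehat{X}^{(t+1)} = \widehat{X}^{(t)} + Q(X^{(t)} - \widehat{X}^{(t)})$, where $\widehat{X}^{(t)} := [\hat{\xx}_1^{(t)}, \dots, \hat{\xx}_n^{(t)}]$. First I would establish the \emph{average-preservation} property: since $(W-I)\tfrac1n\1\1^\top = 0$, right-multiplying the $X$-update by $\tfrac1n\1\1^\top$ gives $\overline{X}^{(t+1)} = \overline{X}^{(t)} = \overline{X}$, so the true average is an invariant — this is the point (i) promised in the text. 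Then I would work throughout in the "centered" coordinates, replacing $X^{(t)}$ by $X^{(t)} - \overline{X}$ and $\widehat{X}^{(t)}$ by $\widehat{X}^{(t)} - \overline{X}$; because $\1^\top(W-I)=0$ and $\overline{X}(W-I)=0$, the recursion takes exactly the same form in the centered variables, so WLOG $\overline{\xx}=\0$.

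The core is a one-step decrease estimate for the Lyapunov function $e_t = \EE{Q}{\|X^{(t)}-\overline{X}\|_F^2 + \|X^{(t)} - \widehat{X}^{(t+1)}\|_F^2}$ (note the time-shift on the second term: $\widehat{X}^{(t+1)}$ is already computable at the start of iteration $t$'s averaging step). I would first take expectation over the compression randomness at step $t$ conditioned on the past, using Assumption~\ref{assump:q} in the form $\EE{Q}{\|\widehat{X}^{(t+1)} - X^{(t)}\|_F^2} \le (1-\omega)\|\widehat{X}^{(t)} - X^{(t)}\|_F^2$ columnwise. Then expand $\|X^{(t+1)} - \overline{X}\|_F^2$ using the update, producing a main contraction term from $\|(X^{(t)}-\overline{X})\cdot(\text{something involving }W-I)\|_F^2$ plus cross terms coupling $X^{(t)}-\overline{X}$ with $\widehat{X}^{(t+1)} - X^{(t)}$ and with the compression noise. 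The spectral gap enters through $\|(X^{(t)}-\overline{X})(I + \gamma(W-I) - \tfrac1n\1\1^\top)\|_2 \le \max(1-\gamma\delta, \ldots)$-type bounds, and $\beta = \|I-W\|_2$ controls the size of the perturbation terms. Several applications of Young's inequality $\langle A,B\rangle \le \tfrac{\alpha}{2}\|A\|^2 + \tfrac1{2\alpha}\|B\|^2$ with carefully tuned $\alpha$'s are needed to fold everything back into a multiple of $e_t$; the second Lyapunov term $\|X^{(t)} - \widehat{X}^{(t+1)}\|_F^2$ is essential precisely to absorb the residual $\widehat{X}$-vs-$X$ discrepancy that a pure $\|X-\overline{X}\|^2$ potential cannot handle.

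The main obstacle — and where the awkward constants $82$, $16\delta+\delta^2+4\beta^2+\dots$ come from — is the bookkeeping in combining these inequalities so that the coefficient of $e_t$ on the right-hand side is $(1 - c\,\delta^2\omega)$ with an explicit $c$. One must choose the Young's-inequality parameters (typically as functions of $\delta$, $\omega$, $\beta$) and then the stepsize $\gamma$ to simultaneously (a) keep the $\|X^{(t)}-\overline{X}\|_F^2$ coefficient below $1-\Omega(\delta^2\omega)$, exploiting that $\gamma = \Theta(\delta^2\omega)$ makes $(1-\gamma\delta)$-type terms contribute at order $\delta^3\omega$ while the noise term contributes at order $\gamma\cdot(1-\omega)$, and (b) keep the $\|X^{(t)} - \widehat{X}^{(t+1)}\|_F^2$ coefficient below $1-\Omega(\delta^2\omega)$, which is delicate because this term is only contracted by the factor $(1-\omega)$ from the compression, and the $(W-I)$-perturbation of $X$ can feed \emph{back into} it. Balancing (a) and (b) forces $\gamma$ to be as small as stated. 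I would verify the claimed $\gamma$ makes both coefficients $\le 1 - \delta^2\omega/82$ by plugging in and using $\beta \le 2$, $\delta \le 1$, $\omega \le 1$ to bound the denominator; the rest (taking full expectations and iterating the recursion) is routine. The dependence $\delta^2$ rather than $\delta$ is genuine here and not an artifact — it reflects that compression noise couples to consensus error quadratically through the stepsize.
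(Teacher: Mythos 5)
Your proposal is correct and follows essentially the same route as the paper's proof: the same matrix recursion with average preservation, the same shifted Lyapunov function $e_t$, the same two separate bounds (one for $\bigl\|X^{(t+1)}-\overline{X}\bigr\|_F^2$ contracting at rate $1-\gamma\delta$ plus a $\gamma^2\beta^2$ perturbation, one for $\bigl\|X^{(t+1)}-\hat{X}^{(t+2)}\bigr\|_F^2$ contracting by $1-\omega$ plus feedback from the consensus error), combined via Young's inequality with parameters $\alpha_1,\alpha_2$ depending on $\gamma\delta$ and $\omega$, and the same final balancing of the two coefficients at the stated $\gamma$. The only differences are cosmetic (centering so that $\overline{\xx}=\0$, and omitting the paper's Jensen-on-a-quadratic trick used to verify the constant $82$).
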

For the proof we refer to the appendix, where we used matrix notation to simplify derivations.
For the exact communication case $\omega = 1$ we recover the rate from Theorem~\ref{th:boyd2_rate} for stepsize $\gamma < 1$ up to constant factors (which seems to be a small artifact of our proof technique). The theorem shows convergence for arbitrary $\omega > 0$, showing the superiority of scheme~\eqref{eq:ours} over~\eqref{eq:consensus_first} and~\eqref{eq:consensus_second}.

\section{Decentralized Stochastic Optimization}

In this section we leverage our proposed average consensus Algorithm~\ref{alg:consensus} to achieve consensus among the compute nodes in a decentralized optimization setting with communication restrictions. %

In the decentralized optimization setting \eqref{eq:prob}, not only does every node have a different local objective $f_i$, but we also allow each $f_i$ to have stochastic optimization (or sum) structure, that is %
\begin{align} 
f_i(\xx) := \EE{\xi_i \sim \cD_i}{F_i(\xx,\xi_i)}\,,\label{eq:func}
\end{align}
for a loss function $F_i\colon \R^d \times \Omega \to \R$ and distributions $\cD_1, \dots, \cD_n$ which can be different on every node. %
Our framework therefore covers both stochastic optimization (e.g. when %
all $\cD_i$ are identical) and empirical risk minimization (as in machine learning and deep learning applications) when the $\cD_i$'s are discrete with disjoint support. %

\subsection{Proposed Scheme for Decentralized Optimization}
Our proposed method \algopt---Communication-Compressed Decentralized SGD---is stated in Algorithm~\ref{alg:quantized_decentralized_sgd}. %

\begin{algorithm}[H]
\renewcommand{\algorithmiccomment}[1]{#1}
	\caption{\algopt %
	}\label{alg:quantized_decentralized_sgd}
	\begin{algorithmic}[1]
		\INPUT{:  Initial values $\xx_i^{(0)} \in \R^d$ on each node $i \in [n]$, consensus stepsize $\gamma$, SGD stepsizes $\{\eta_t\}_{t \geq 0}$, communication graph $G = ([n], E)$ and mixing matrix $W$, initialize $\hat{\xx}_i^{(0)} := \0$ $\forall i$}\\
		\FOR[{{\it in parallel for all workers $i \in [n]$}}]{$t$\textbf{ in} $0\dots T-1$}
		\STATE Sample $\xi_i^{(t)}$, compute gradient $\gg_i^{(t)} \!:= \nabla F_i(\xx_i^{(t)}\!, \xi_i^{(t)})$\!
		\STATE $\xx_i^{(t + \frac{1}{2})} := \xx_i^{(t)} - \eta_t \gg_i^{(t)}$  
		\STATE $\qq_i^{(t)} := Q(\xx_i^{(t + \frac{1}{2})} - \hat{\xx}_i^{(t)})$ 
		\FOR{neighbors $j \colon \{i,j\} \in E$ (including $\{i\} \in E$)} 
		\STATE Send $\qq_i^{(t)}$ and receive $\qq_j^{(t)}$ 
		\STATE $\hat{\xx}^{(t+1)}_j := \qq^{(t)}_j + \hat{\xx}_j^{(t)}$
		\ENDFOR
		\vspace{1mm}
		\STATE $\xx_i^{(t + 1)} := \xx_i^{(t + \frac{1}{2})} + \gamma \!\!\displaystyle\sum_{j: \{i, j\}\in E} \!\!w_{ij} \left(\hat{\xx}^{(t+1)}_j \!- \hat{\xx}^{(t+1)}_i\right)$\vspace{-2mm} 
		\ENDFOR 
	\end{algorithmic}
\end{algorithm}
The algorithm consists of four parts. The stochastic gradient step in line\ 3, application of the compression operator in step\ 4, and the \eqref{eq:ours} local communication in lines\ 5--8 followed by the final iterate update in line\ 9.

\begin{remark}\label{rem:exact}
 As a special case without any communication compression, and for consensus stepsize $\gamma=1$ as in exact gossip~\eqref{eq:boyd}, \algopt (Algorithm~\ref{alg:quantized_decentralized_sgd}) recovers the following standard variant of decentralized SGD with gossip (similar e.g. to \cite{Sirb2016:delayedconsensus,Lian2017:decentralizedSGD}), stated for illustration in Algorithm~\ref{alg:all-to-all}.
\end{remark}
\begin{algorithm}[H]
\renewcommand{\algorithmiccomment}[1]{#1}
	\caption{\textsc{Plain Decentralized SGD}}
	\label{alg:all-to-all}
	\begin{algorithmic}[1]
		\FOR[{{\it in parallel for all workers $i \in [n]$}}]{$t$ \textbf{in} $0\dots T-1$}
		\STATE Sample $\xi_i^{(t)}$, compute gradient $\gg_i^{(t)} \!:= \nabla F_i(\xx_i^{(t)}\!, \xi_i^{(t)})$\!
		\STATE $\xx_i^{(t + \frac{1}{2})} := \xx_i^{(t)} - \eta_t \gg_{i}^{(t)}$
		\STATE Send $\xx_i^{(t + \frac{1}{2})}$ to neighbors
		\STATE $\xx_i^{(t + 1)} := \sum_{i = 1}^{n} w_{ij} \xx_j^{(t + \frac{1}{2})}$
		\ENDFOR
	\end{algorithmic}
\end{algorithm}

\subsection{Convergence Analysis for \algopt}
\begin{assumption}\label{assump:f}
	We assume that each function $f_i \colon \R^d \to \R$ for $i \in [n]$ is $L$-smooth and $\mu$-strongly convex and that the variance on each worker is bounded 
	\begin{align*}
    &\EE{\xi_i}{\norm{\nabla F_i(\xx, \xi_i) - \nabla f_i(\xx)}^2}\leq \sigma_i^2\,, & &\forall \xx \in \R^d, i \in [n], \\
	&\EE{\xi_i}{\norm{\nabla F_i(\xx, \xi_i)}}^2 \leq G^2\,, & &\forall \xx \in \R^d, i \in [n],
	\end{align*}
	where $\mathbb{E}_{\xi_i}[\cdot]$ denotes the expectation over $\xi_i \sim \cD_i$.
	It will be also convenient to denote 
	\begin{align*}
	\overline{\sigma}^2 := \frac{1}{n}\sum_{i = 1}^n\sigma_i^2.
	\end{align*}
\end{assumption}
 For the (standard) definitions of smoothness and strong convexity we refer to Appendix~\ref{sec:defsmooth}. These assumptions could be relaxed to only hold for $\xx \in \bigl\{\xx_i^{(t)}\bigr\}_{t=1}^T$, the set of iterates of Algorithm~\ref{alg:quantized_decentralized_sgd}.

\begin{theorem}\label{th:decentralized_sgd}
	Under Assumption~\ref{assump:f}, Algorithm~\ref{alg:quantized_decentralized_sgd} with SGD stepsizes  $\eta_t := \frac{4}{\mu(a + t)}$ for parameter $a \geq \max\left\{ \frac{410}{\delta^2 \omega}, 16\kappa \right\}$ for condition number $\kappa = \frac{L}{\mu}$ and consensus stepsize $\gamma :=\gamma(\delta,\omega)$ chosen as in Theorem~\ref{th:consensus}, converges with the rate 
	\begin{align*}
	\E{\!\Upsilon^{(T)}} \!=\! \cO\left(\dfrac{\overline{\sigma}^2}{\mu n T}\right) \!+\! \cO\left(\dfrac{\kappa G^2}{\mu \omega^2\delta^4 T^2}\right) \!+\! \cO\left(\dfrac{G^2}{\mu\omega^3 \delta^6 T^3}\right),
	\end{align*}
	where $\Upsilon^{(T)} :=  f(\xx_{avg}^{(T)}) - f^\star$ for an averaged iterate $\xx_{avg}^{(T)} = \frac{1}{S_T}\sum_{t = 0}^{T - 1} w_t \overline{\xx}^{(t)}$ with weights $w_t = (a + t)^2$, and $S_T = \sum_{t = 0}^{T - 1}w_t$. 
	As reminder, $\delta$ denotes the eigengap of $W$, and $\omega$ the compression ratio.
\end{theorem}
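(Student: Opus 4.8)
\textbf{Proof plan for Theorem~\ref{th:decentralized_sgd}.}

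The plan is to track two coupled quantities: the optimization error of the averaged iterate $\overline{\xx}^{(t)} := \frac1n\sum_i \xx_i^{(t)}$, and the consensus error $\Xi_t := \E{\sum_{i=1}^n \norm{\xx_i^{(t)} - \overline{\xx}^{(t)}}^2}$ measuring how far the local models drift apart. First I would observe that, because $W$ is doubly stochastic, the gossip/compression steps (lines 4--9) do not change the average, so $\overline{\xx}^{(t+1)} = \overline{\xx}^{(t)} - \eta_t \overline{\gg}^{(t)}$ with $\overline{\gg}^{(t)} = \frac1n\sum_i \gg_i^{(t)}$. This reduces the ``average'' dynamics to essentially those of centralized mini-batch SGD, driven by the full gradient $\nabla f(\overline{\xx}^{(t)})$ plus two perturbations: the stochastic noise (variance $\le \overline\sigma^2/n$ after averaging over the $n$ independent samples) and a bias term of the form $\frac{L}{n}\sum_i \norm{\xx_i^{(t)} - \overline{\xx}^{(t)}}$ coming from evaluating gradients at the drifted points $\xx_i^{(t)}$ rather than at $\overline{\xx}^{(t)}$. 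Using $L$-smoothness and $\mu$-strong convexity in the standard way, I would derive a one-step inequality of the shape
\begin{align*}
\E{\norm{\overline{\xx}^{(t+1)} - \xx^\star}^2} \le \left(1 - \tfrac{\mu\eta_t}{2}\right)\E{\norm{\overline{\xx}^{(t)} - \xx^\star}^2} - \eta_t \E{\Upsilon^{(t)}} + \tfrac{\eta_t^2\overline\sigma^2}{n} + \tfrac{2L\eta_t}{n}\,\Xi_t\,,
\end{align*}
where $\Upsilon^{(t)} = f(\overline{\xx}^{(t)}) - f^\star$ (modulo constants I'd fix during the write-up).

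The second ingredient is a recursion for $\Xi_t$, and this is where Theorem~\ref{th:consensus} enters. The key point is that one step of Algorithm~\ref{alg:quantized_decentralized_sgd} applied to $X^{(t+1/2)} := X^{(t)} - \eta_t G^{(t)}$ is exactly one step of \algcons started from $X^{(t+1/2)}$; hence the contraction factor $\bigl(1 - \tfrac{\delta^2\omega}{82}\bigr)$ on the Lyapunov quantity $e_t$ (which bundles the consensus distance \emph{and} the compression-memory term $\norm{\xx_i^{(t)} - \hat\xx_i^{(t+1)}}^2$) carries over. The subtlety is that between consensus rounds the gradient step injects fresh disagreement of magnitude $\eta_t^2\sum_i \norm{\gg_i^{(t)}}^2 \le n\eta_t^2 G^2$; so I would set up a Lyapunov sequence $E_t$ of the same form as $e_t$ but for Algorithm~\ref{alg:quantized_decentralized_sgd}, and prove
\begin{align*}
E_{t+1} \le \left(1 - \tfrac{\delta^2\omega}{82} + \text{(small slack)}\right) E_t + C\, \eta_t^2 n G^2
\end{align*}
for an absolute constant $C$, using Young's inequality to split the cross terms (this is where the choice $\gamma=\gamma(\delta,\omega)$ and the constants in Theorem~\ref{th:consensus} must be reused carefully). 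Since $\eta_t$ varies, I'd also need $\eta_{t+1}/\eta_t$ close to $1$, which holds because $\eta_t = \frac{4}{\mu(a+t)}$ and $a \ge 410/(\delta^2\omega)$ makes the per-step decay of $\eta_t^2$ negligible compared to the contraction $1-\delta^2\omega/82$; unrolling the recursion then gives $\Xi_t \le E_t = \cO\bigl(\frac{\eta_t^2 n G^2}{\delta^2\omega}\bigr)$ (a geometric-sum bound with ratio controlled by the contraction).

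Finally I would combine the two recursions. Substituting the bound $\Xi_t = \cO\bigl(\eta_t^2 n G^2/(\delta^2\omega)\bigr)$ into the first inequality, the cross term becomes $\frac{2L\eta_t}{n}\Xi_t = \cO\bigl(L\eta_t^3 G^2/(\delta^2\omega)\bigr)$. I'd then apply the now-standard weighted-telescoping lemma for SGD with stepsizes $\eta_t = \frac{4}{\mu(a+t)}$ and weights $w_t = (a+t)^2$ (e.g., as in \cite{Stich2018:sparsifiedSGD}): multiplying the one-step inequality by $w_t/\eta_t$, summing, and dividing by $S_T$ yields $\E{\Upsilon^{(T)}_{\mathrm{avg}}} \lesssim \frac{\mu a^2 \norm{\overline\xx^{(0)}-\xx^\star}^2}{S_T} + \frac{1}{S_T}\sum_t w_t\eta_t \frac{\overline\sigma^2}{n} + \frac{1}{S_T}\sum_t w_t\eta_t^2\frac{LG^2}{\delta^2\omega}$. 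Using $S_T = \Theta(T^3)$, $\sum_t w_t\eta_t = \Theta(T^2/\mu)$, and $\sum_t w_t\eta_t^2 = \Theta(T/\mu^2)$, the three terms collapse to $\cO\bigl(\frac{\overline\sigma^2}{\mu n T}\bigr)$, $\cO\bigl(\frac{\kappa G^2}{\mu\omega^2\delta^4 T^2}\bigr)$, and (from the $a^3$ initial-condition term with $a = \Theta(1/(\delta^2\omega))$, or from a cruder bound on the tail) $\cO\bigl(\frac{G^2}{\mu\omega^3\delta^6 T^3}\bigr)$, matching the claim. The main obstacle I anticipate is the $\Xi_t$ recursion: getting a clean geometric contraction requires that the consensus step is genuinely a \algcons step on the post-gradient iterate, that the extra memory term in $e_t$ is handled consistently (one cannot bound $\Xi_t$ alone without also controlling $\sum_i\norm{\xx_i^{(t)}-\hat\xx_i^{(t+1)}}^2$), and that all the absolute constants from Theorem~\ref{th:consensus} survive the additional Young-inequality splits needed to absorb the $\eta_t^2 n G^2$ injection — the bookkeeping there is the delicate part, whereas the averaged-iterate analysis and the final telescoping are routine.
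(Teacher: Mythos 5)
Your plan follows essentially the same route as the paper's proof: the average-preservation argument and the resulting one-step recursion for $\norm{\overline{\xx}^{(t)}-\xx^\star}^2$ is Lemma~\ref{lem:main_recursion_for_decentralized_sgd}, the coupled Lyapunov recursion for the consensus-plus-compression-memory error driven by the \algcons contraction with an $\eta_t^2 nG^2$ injection is Lemmas~\ref{lem:last_term_bound} and~\ref{lem:recursion} (the paper phrases this via a blackbox averaging scheme with rate $p=\delta^2\omega/82$, which is exactly why $a\geq 5/p=410/(\delta^2\omega)$ is required), and the weighted telescoping with $w_t=(a+t)^2$ is Lemma~\ref{lem:from_local_sgd}. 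One quantitative correction: the injection constant $C$ in your recursion for $E_t$ cannot be absolute --- the Young split needed to retain a contraction factor $1-p/2$ forces $C=\Theta(1/p)$ --- so unrolling gives $\Xi_t=\cO\bigl(\eta_t^2 nG^2/p^2\bigr)=\cO\bigl(\eta_t^2 nG^2/(\delta^4\omega^2)\bigr)$ rather than your $\cO\bigl(\eta_t^2 nG^2/(\delta^2\omega)\bigr)$; this extra $1/p$ is precisely what produces the $\omega^2\delta^4$ in the stated $T^{-2}$ term, which your own intermediate bound would otherwise (incorrectly) improve to $\omega\delta^2$ and hence not ``match the claim'' as written.
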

For the proof we refer to the appendix. When $T$ and $\overline{\sigma}$ are sufficiently large, the second two terms become negligible compared to $\cO\bigl(\frac{\overline{\sigma}^2}{\mu n T}\bigr)$---and we recover the convergence rate of of mini-batch SGD in the centralized setting and with exact communication. This is because topology (parameter $\delta$) and compression (parameter $\omega)$ only affect the higher-order terms in the rate.  We also see that we obtain in this setting a $n \times$ speed up compared to the serial implementation of SGD on only one worker.

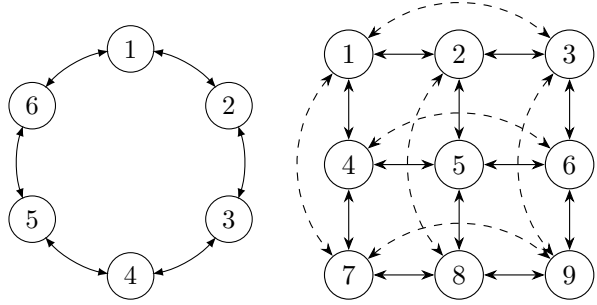
\begin{figure}[t]
	\hfill
	\resizebox{.5\linewidth}{!}{
		\resizebox{.22\linewidth}{!}{
			\begin{tikzpicture}[scale=1]
			\def \n {6}
			\def \radius {1.5cm}
			\def \margin {-11} %
			
			\foreach \s in {1,...,\n}
			{
				\node[draw, circle] at ({-360/\n * (\s - 1) + 90}:\radius) {$\s$};
				\draw[<->, >=latex] ({-360/\n * (\s - 1)+\margin + 90}:\radius)
				arc ({-360/\n * (\s - 1)+\margin + 90}:{-360/\n * (\s)-\margin + 90}:\radius);
			}
			\end{tikzpicture}
		}
		\hfill
		\resizebox{.28\linewidth}{!}{
			\begin{tikzpicture}[scale=0.7]
			\node[shape=circle,draw=black] (1) at (-2,2) {1};
			\node[shape=circle,draw=black] (2) at (0,2) {2};
			\node[shape=circle,draw=black] (3) at (2,2) {3};
			\node[shape=circle,draw=black] (4) at (-2,0) {4};
			\node[shape=circle,draw=black] (5) at (0,0) {5};
			\node[shape=circle,draw=black] (6) at (2,0) {6};
			\node[shape=circle,draw=black] (7) at (-2,-2) {7};
			\node[shape=circle,draw=black] (8) at (0,-2) {8};
			\node[shape=circle,draw=black] (9) at (2,-2) {9};
			\begin{scope}[>={Stealth[black]},
			every edge/.style={draw=black}]
			\path [<->] (1) edge node[left] {} (2);
			\path [<->] (3) edge node[left] {} (2);
			\path [dashed,<->] (1) edge[bend left=40] node[left] {} (3);
			\path [<->] (4) edge node[left] {} (5);
			\path [<->] (5) edge node[left] {} (6);
			\path [dashed,<->] (4) edge[bend left=40] node[left] {} (6);
			\path [<->] (7) edge node[left] {} (8);
			\path [<->] (8) edge node[left] {} (9);
			\path [dashed,<->] (7) edge[bend left=40] node[left] {} (9);
			
			\path [<->] (1) edge node[left] {} (4);
			\path [<->] (4) edge node[left] {} (7);
			\path [dashed,<->] (1) edge[bend right=40] node[left] {} (7);
			
			\path [<->] (2) edge node[left] {} (5);
			\path [<->] (5) edge node[left] {} (8);
			\path [dashed,<->] (2) edge[bend right=40] node[left] {} (8);
			
			\path [<->] (3) edge node[left] {} (6);
			\path [<->] (6) edge node[left] {} (9);
			\path [dashed,<->] (3) edge[bend right=40] node[left] {} (9);
			\end{scope}
			\end{tikzpicture}
		}
	}
	\hfill\null
	\vspace{-3mm}
	\caption{Ring topology (left) and Torus topology (right). }\label{fig:ring_and_torus_topology}
\end{figure}

\section{Experiments}
In this section we first compare \algcons to the gossip baselines from Section~\ref{sec:expgossip} and then compare the \algopt to state of the art decentralized stochastic optimization schemes (that also support compressed communication) in Section~\ref{sec:expSGD}.

\subsection{Shared Experimental Setup}
For our experiments we always report the \emph{number of iterations} of the respective scheme, as well as \emph{the number of transmitted bits}. These quantities are independent of systems architectures and network bandwidth. %

\paragraph{Datasets.}
In the experiments we rely on the $epsilon$ \cite{Sonnenburg:EpsilonDataset} and $rcv1$ \cite{Lewis:2004RCV1} datasets (cf. Table~\ref{tab:datasets}).

\paragraph{Compression operators.}
We use the ($\operatorname{rand}_k$), ($\operatorname{top}_k$) and $(\operatorname{qsgd}_s)$ compression operators as described in Section~\ref{sec:rateconsensus}, where we choose $k$ to be $1\%$ of all coordinates and $s \in \{2^4,2^8\}$, only requiring $4$, respectively $8$ bits to represent a coordinate.

Note that in contrast to \algcons, the earlier schemes \eqref{eq:consensus_first} and \eqref{eq:consensus_second} were both analyzed in~\cite{Carli2010:quantizedconsensus} for unbiased compression operators. In order to reflect this theoretical understood setting we use the rescaled operators ($\frac{d}{k}\cdot \operatorname{rand}_k$) and $(\tau\cdot\operatorname{qsgd}_s)$ in combination with those schemes. 

\subsection{Average Consensus}
\label{sec:expgossip}

\begin{figure}
	\centering
	\begin{minipage}{0.8\linewidth}
		\includegraphics[width=0.48\linewidth]{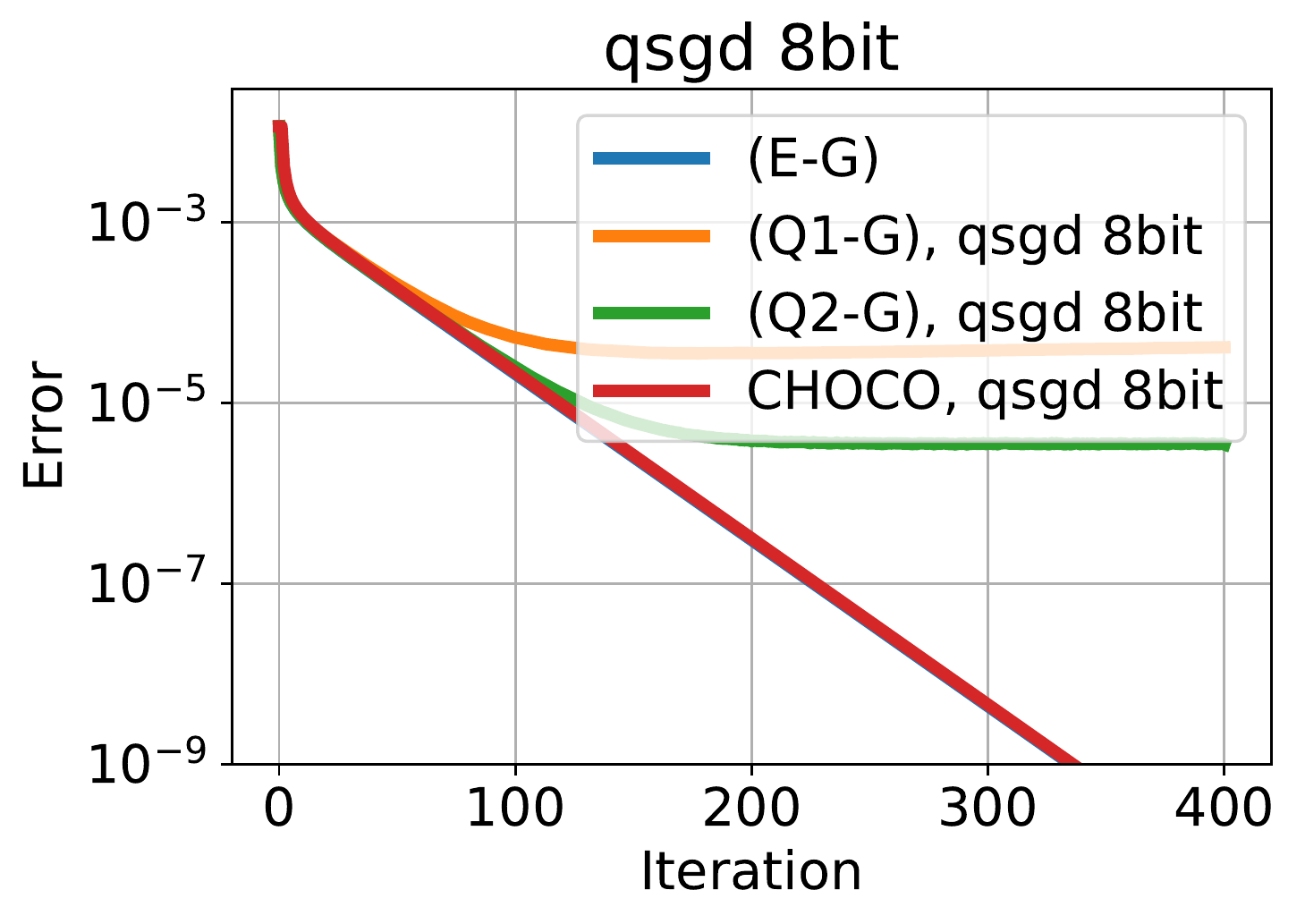}
		\includegraphics[width=0.48\linewidth]{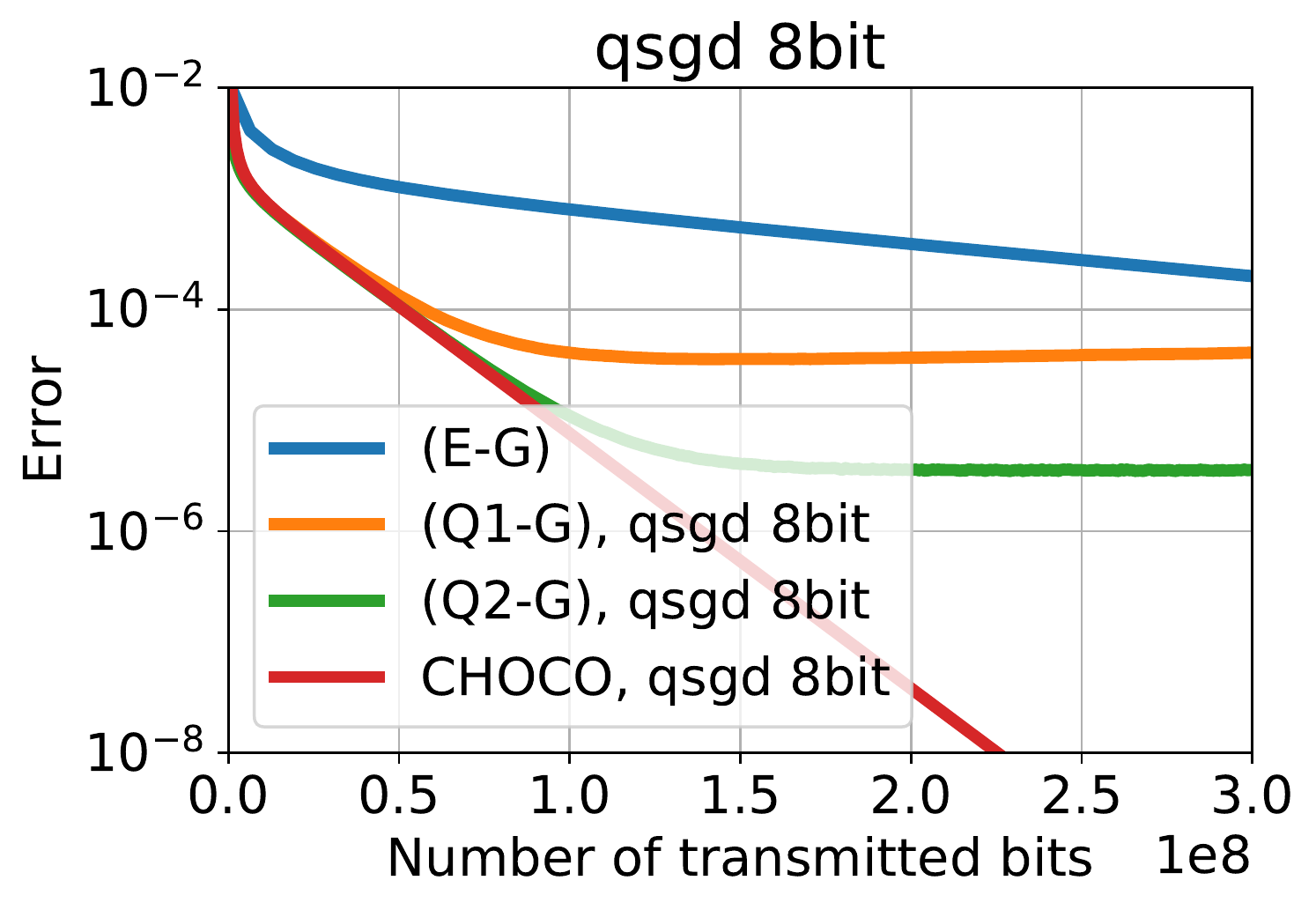}
		\vspace{-2mm}
		\caption{Average consensus on the ring topology with $n=25$ nodes, $d=2000$ coordinates and $(\operatorname{qsgd}_{256})$ compression}\label{fig:average_qsgd_8}
	\end{minipage}
	\hfill
	\centering
	\begin{minipage}{0.8\linewidth}
		\includegraphics[width=0.48\linewidth]{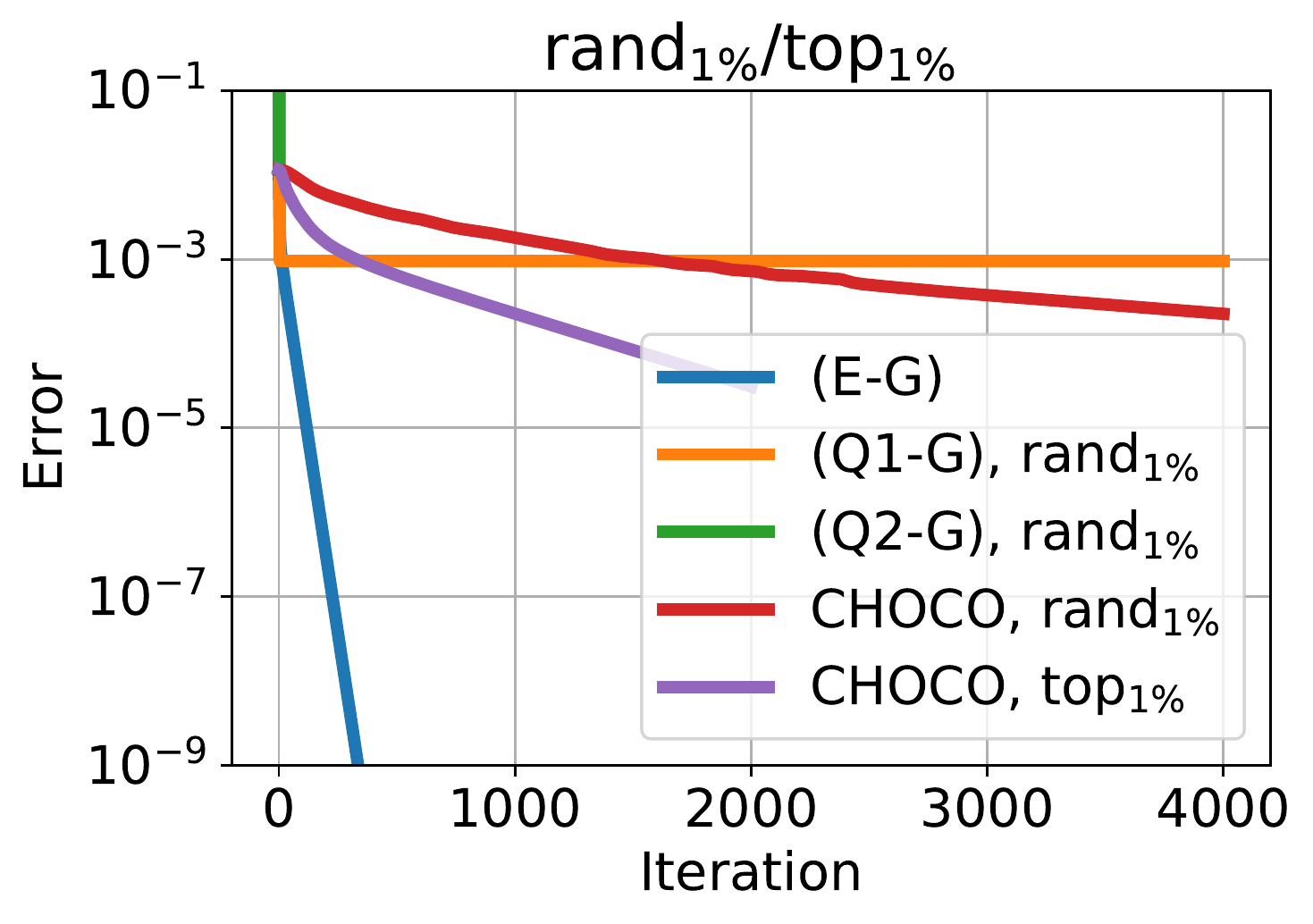}
		\includegraphics[width=0.48\linewidth]{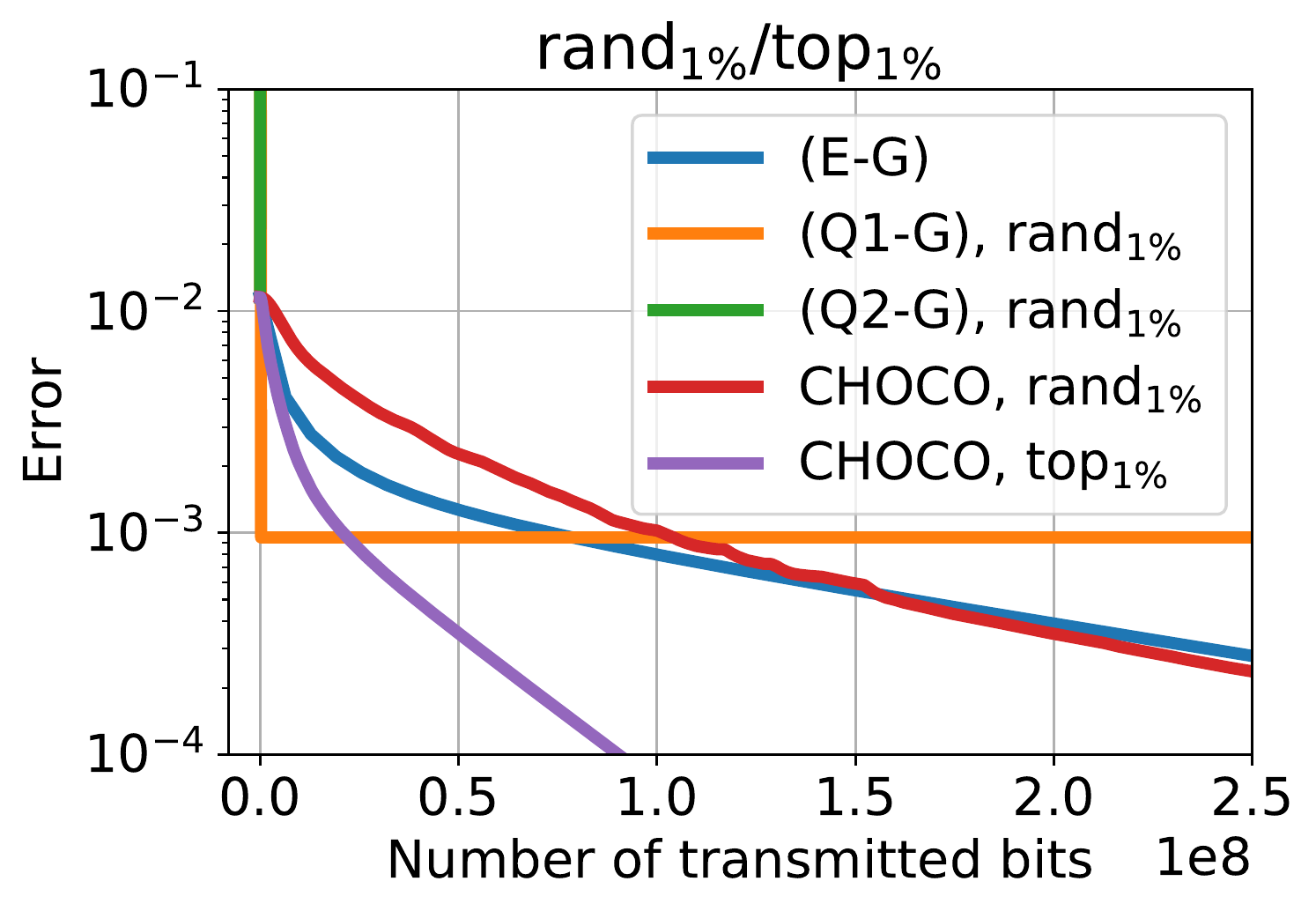}
		\vspace{-2mm}
		\caption{Average consensus on the ring topology with $n=25$ nodes, $d=2000$ coordinates and ($\operatorname{rand}_{1\%}$) and ($\operatorname{top}_{1\%}$) compression}\label{fig:average_random_20}
	\end{minipage}
	
\end{figure}
\begin{figure}

\end{figure}

\begin{table}[t]
	\centering
	\begin{minipage}{.49\linewidth}
	\vspace{4mm}
	\centering
	\begin{tabular}{l|l|l|l}
		dataset & $m$ & $d$  & density \\ \hline
		epsilon & $400000$ & $2000$ & $100\%$ \\
		rcv1 & $20242$ & $47236$ & $0.15\%$
	\end{tabular}
	\caption{Size $(m, d)$ and density of the datasets.}
	\label{tab:datasets}
	\end{minipage}
	\begin{minipage}{.5\linewidth}
	\centering
	\begin{tabular}{l|l}
		experiment & $\gamma$ \\ \hline
		\textsc{Choco}, $(\operatorname{qsgd}_{256})$ & 1 \\
		\textsc{Choco}, ($\operatorname{rand}_{1\%}$)  & 0.011 \\
		\textsc{Choco}, ($\operatorname{top}_{1\%}$)  & 0.046
	\end{tabular}
	\caption{Tuned stepsizes $\gamma$ for averaging in Figs.\ \ref{fig:average_qsgd_8}--\ \ref{fig:average_random_20}.}
	\label{tab:stepsize}
	\end{minipage}
	
	\vspace{2mm}

	\begin{tabular}{l|c|c|c||c|c|c}
		 &  \multicolumn{3}{c||}{$epsilon$} &\multicolumn{3}{c}{$rcv1$}\\\hline
		algorithm & $a$ &$b$ & $\gamma$ & $a$ &$b$ & $\gamma$ \\ \hline\hline
		\textsc{Plain} & 0.1 & $d$ & - &      1 & 1 & - \\
		\textsc{Choco}, $(\operatorname{qsgd}_{16})$ & $0.1$ & $d$ &   0.34 &       1  & 1  &   0.078\\
	\textsc{Choco}, ($\operatorname{rand}_{1\%}$)  & 0.1 & $d$ & 0.01 &    1&    1& 0.016\\
	\textsc{Choco}, ($\operatorname{top}_{1\%}$)  & 0.1 &$d$ &  0.04  &   1 & 1 & 0.04\\
	\textsc{DCD}, ($\operatorname{rand}_{1\%}$)  & $10^{-15}$ & $d$ & - &    $10^{-10}$  & $d$ & - \\
		\textsc{DCD}, $(\operatorname{qsgd}_{16})$ & 0.01 & $d$ & - &  $ 10^{-10}$   & $d$ & - \\
	\textsc{ECD}, ($\operatorname{rand}_{1\%}$)  & $10^{-10}$ & $d$ & - &    $10^{-10} $ & $d$ & - \\
	\textsc{ECD}, ($\operatorname{qsgd}_{16}$)  & $10^{-12}$ & $d$ & - &   $ 10^{-10}$  & $d$ & -
	\end{tabular}
	\caption{Parameters for the SGD learning rate $\eta_t=\frac{ma}{t+b}$ and consensus learning $\gamma$ used in the experiments in Figs.\ \ref{fig:sgd_random_20}--\ref{fig:sgd_qsgd}. Parameters where tuned separately for each algorithm. Tuning details can be found in Appendix~\ref{sect:parameters_search_details}. The \textsc{ECD} and \textsc{DCD} stepsizes are small because the algorithms were observed to diverge for larger choices. }
	\label{tab:learning_rates_sgd}
\end{table}
We compare the performance of the gossip schemes \eqref{eq:boyd} (exact communication), \eqref{eq:consensus_first}, \eqref{eq:consensus_second} (both with unbiased compression), and our scheme~\eqref{eq:ours} in Figure~\ref{fig:average_qsgd_8} for the $(\operatorname{qsgd}_{256})$ compression scheme and in Figure~\ref{fig:average_random_20} for the random ($\operatorname{rand}_{1\%}$) compression scheme. In addition, we also depict the performance of \algcons with biased ($\operatorname{top}_{1\%}$) compression. We use ring topology with uniformly averaging mixing matrix W as in Figure \ref{fig:ring_and_torus_topology}, left. The stepsizes $\gamma$ that were used for \algcons are listed in the Table~\ref{tab:stepsize}.
We consider here the consensus problem~\eqref{def:consensus} with data $\xx_i^{(0)} \in \R^d$ on the $i$-machine was chosen to be the $i$\nobreakdash-th vector in the $epsilon$ dataset. We depict the errors $\tfrac{1}{n}\sum_{i=1}^n \bigl\|\xx_i^{(t)} - \overline{\xx}\bigr\|^2$. %

The proposed scheme~\eqref{eq:ours} with 8 bit quantization $(\operatorname{qsgd}_{256})$   converges with the same rate as~\eqref{eq:boyd} that uses exact communications (Fig.\ \ref{fig:average_qsgd_8}, left), while it requires much less data to be transmitted (Fig.\ \ref{fig:average_qsgd_8}, right). The schemes~\eqref{eq:consensus_first} and~ \eqref{eq:consensus_second} can do not converge and reach only accuracies of $10^{-4}$ --$10^{-5}$. The scheme~\eqref{eq:consensus_first} even starts to diverge, because the quantization error becomes larger than the optimization error. 

With sparsified communication ($\operatorname{rand}_{1\%}$), i.e. transmitting only $1\%$ of all the coordinates, the scheme \eqref{eq:consensus_first} quickly zeros out all the coordinates, and \eqref{eq:consensus_second} diverges because quantization error is too large already from the first step~(Fig.~\ref{fig:average_random_20}).  
\algcons proves to be more robust and converges. The observed rate matches with the theoretical findings, as we expect the scheme with factor $100\times$ compression to be $100\times$ slower than~\eqref{eq:boyd} without compression. In terms of total data transmitted, both schemes converge at the same speed (Fig.\ \ref{fig:average_random_20}, right).
We also see that ($\operatorname{rand}_{1\%}$) sparsification can give additional gains and comes out as the most data-efficient method in these experiments.

\subsection{Decentralized SGD}
\label{sec:expSGD}

\newlength{\smallfigwidth}
\setlength{\smallfigwidth}{0.32\textwidth}
\begin{figure*}[t]
	\centering
	\begin{minipage}{\textwidth}
		\centering
		\includegraphics[width=\smallfigwidth]{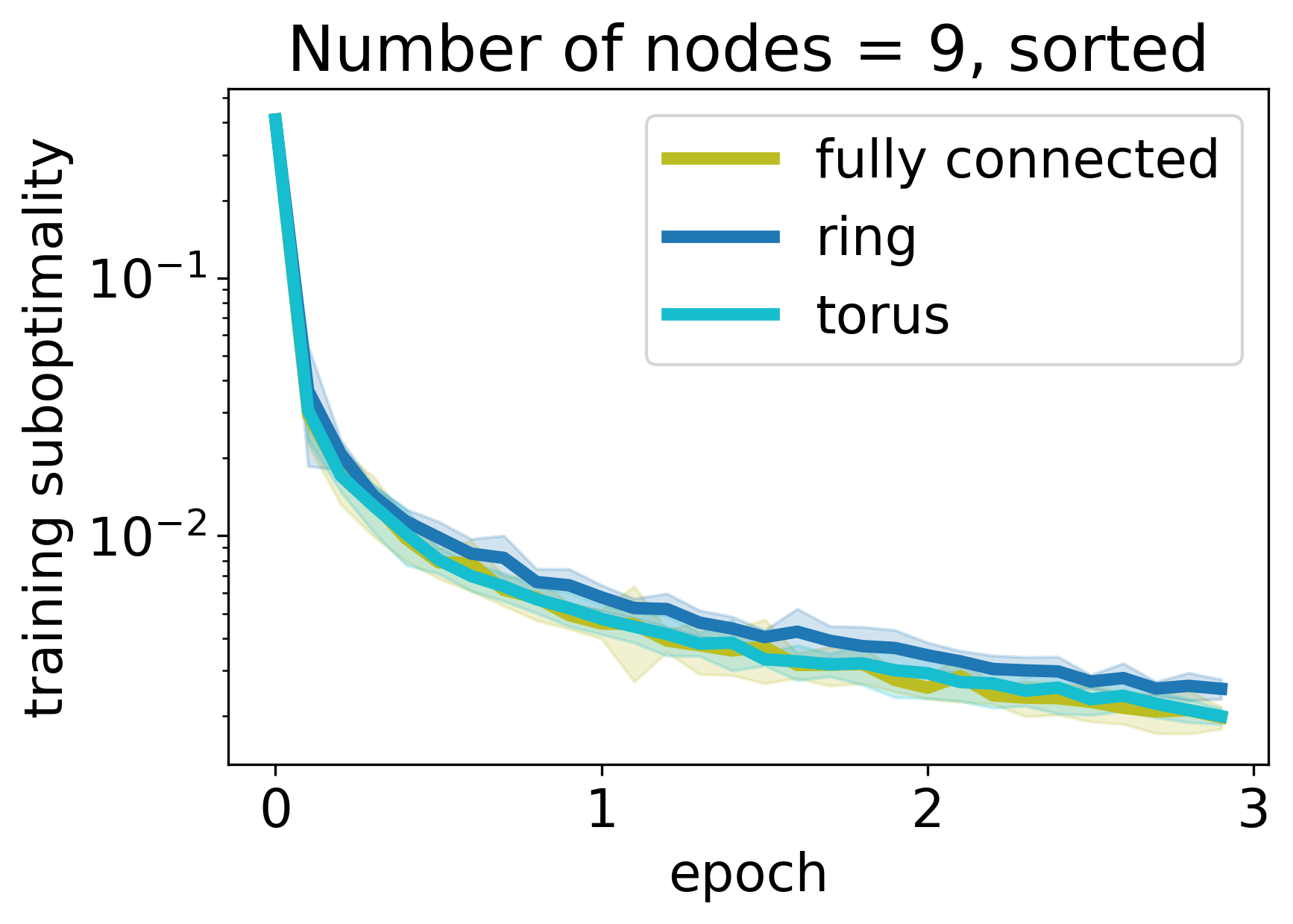}
		\hfill
		\includegraphics[width=\smallfigwidth]{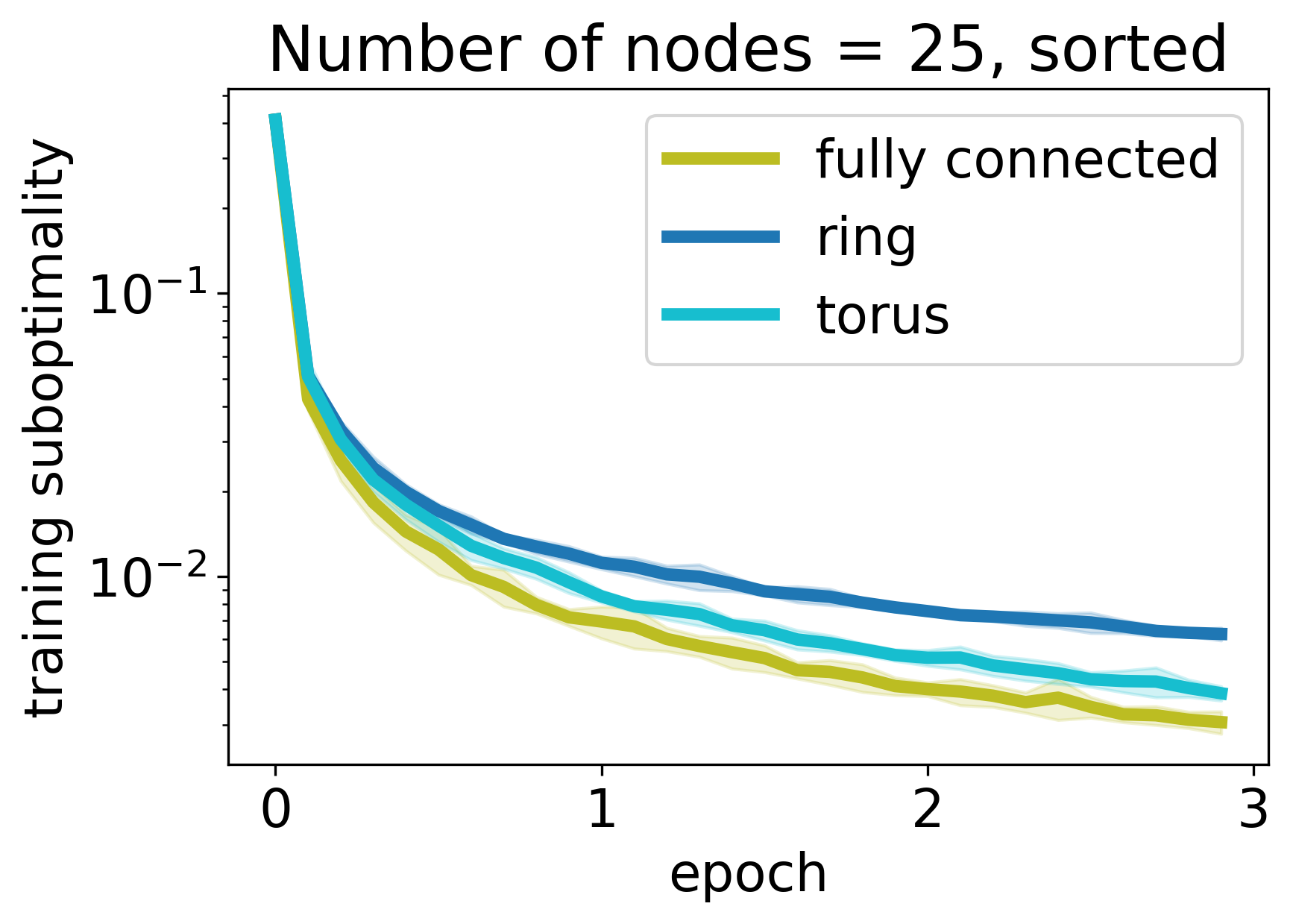}
		\hfill
		\includegraphics[width=\smallfigwidth]{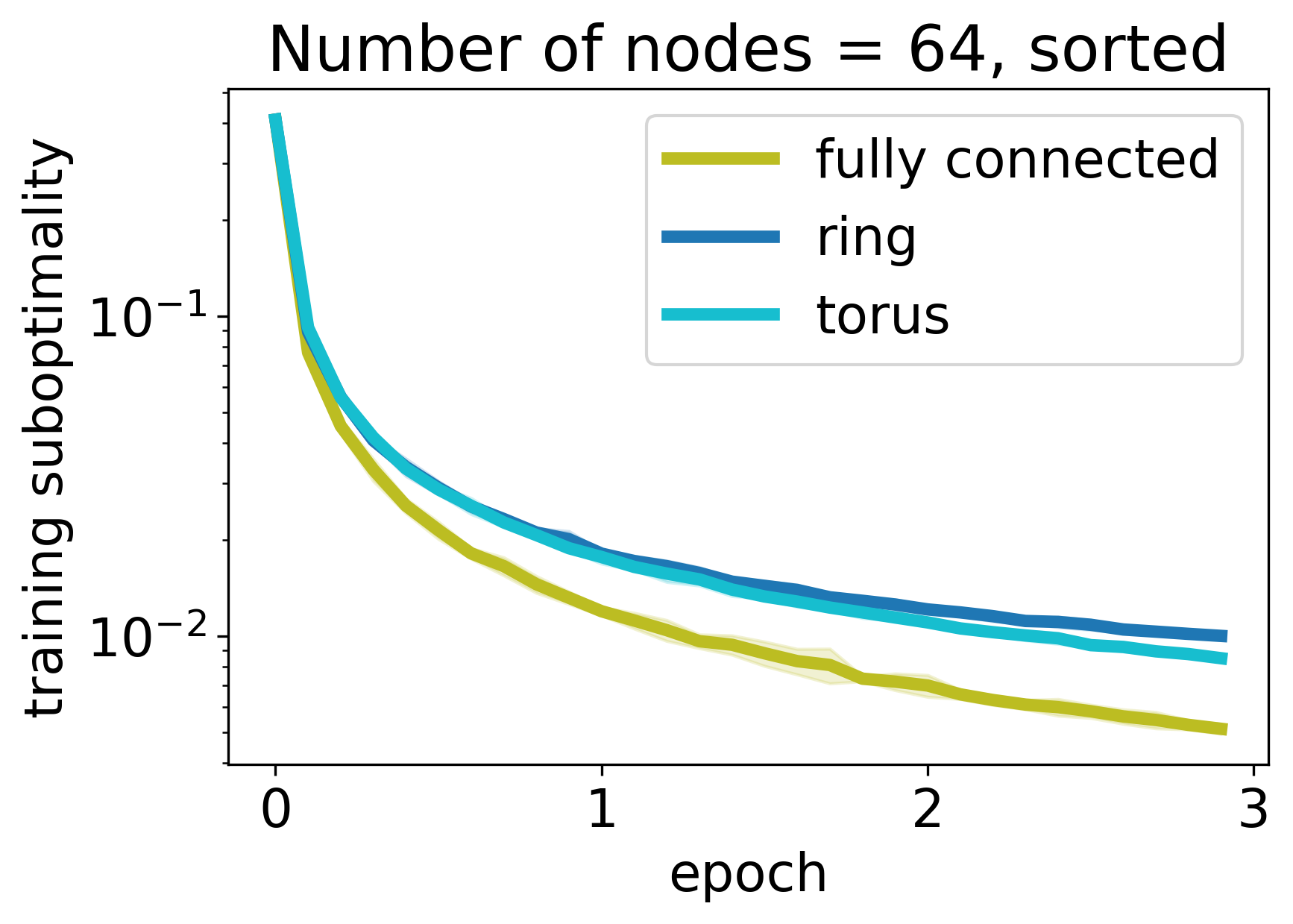}
		\vspace{-2mm}
		\caption{Performance of Algorithm~\ref{alg:all-to-all} on ring, torus and fully connected topologies for $n\in \{9,25,64\}$ nodes. Here we consider the \emph{sorted} setting, whilst the performance for randomly shuffled data is depicted in the Appendix~\ref{sect:additional_experiments}.}%
		\label{fig:topologies_sorted}
	\end{minipage}%
\end{figure*}

\begin{figure}[h!]
	\centering
	\begin{minipage}{0.8\textwidth}
	\vspace{-2mm}
	\includegraphics[width=0.48\linewidth]{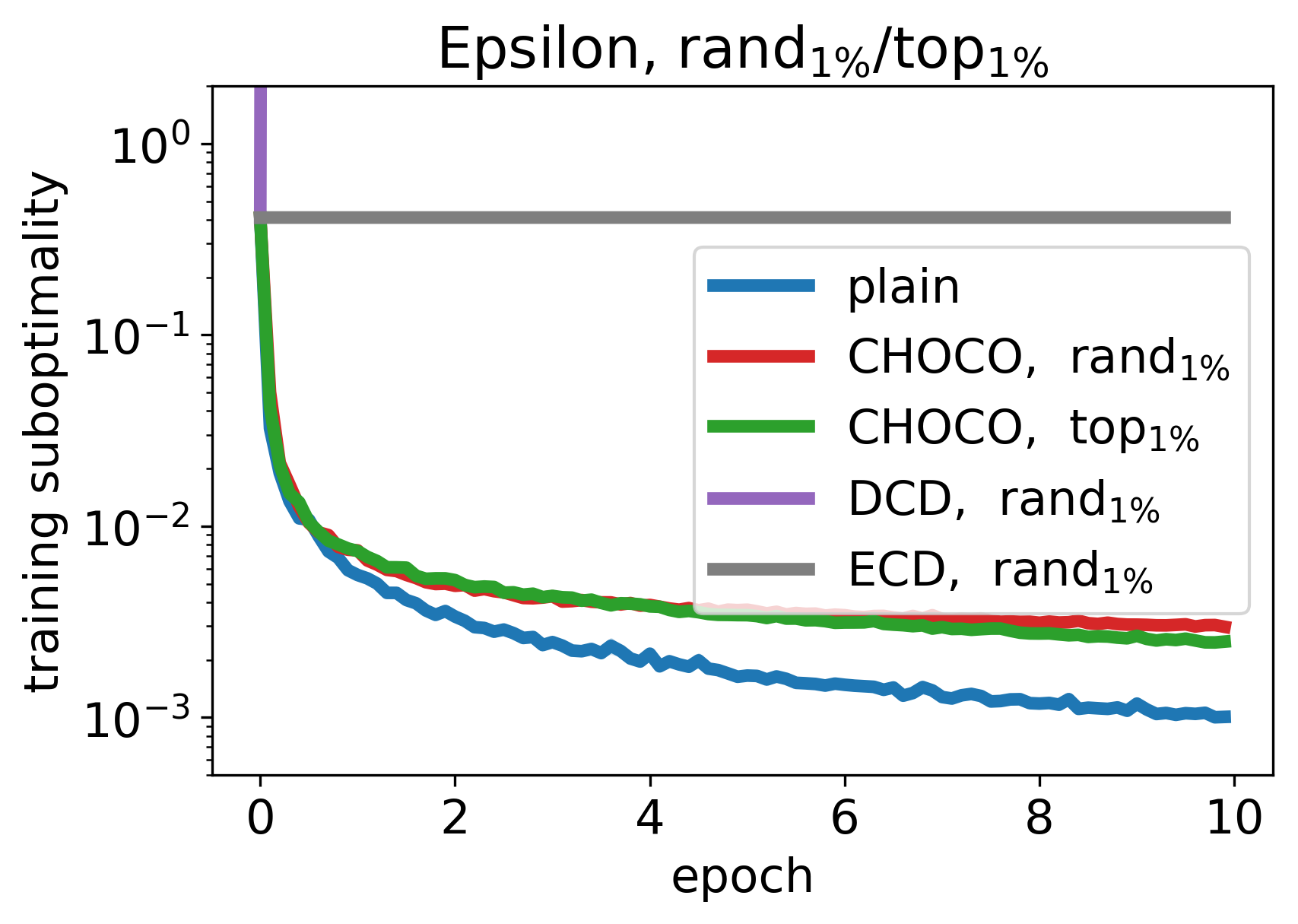}
	\includegraphics[width=0.48\linewidth]{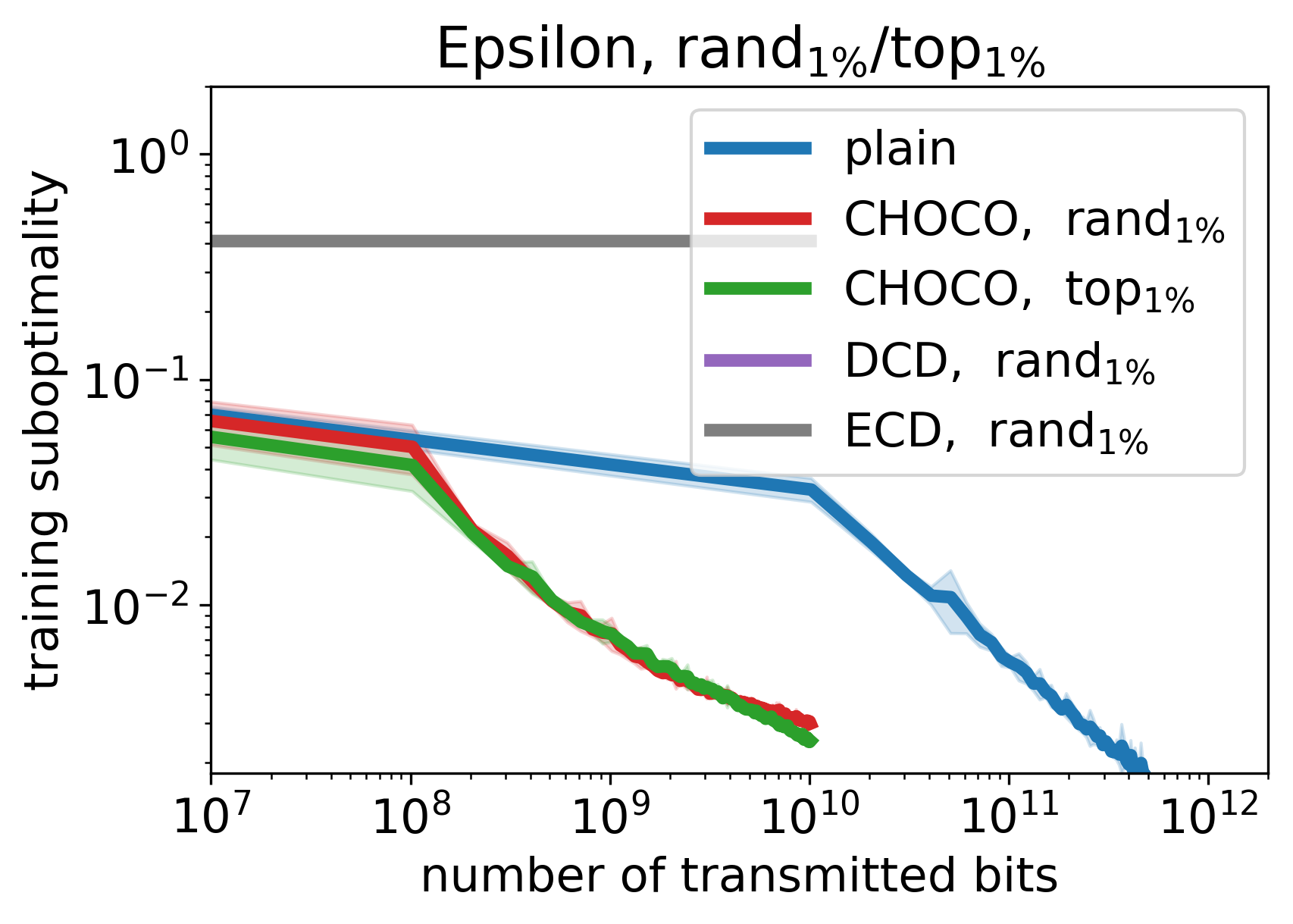}\\
	\includegraphics[width=0.48\linewidth]{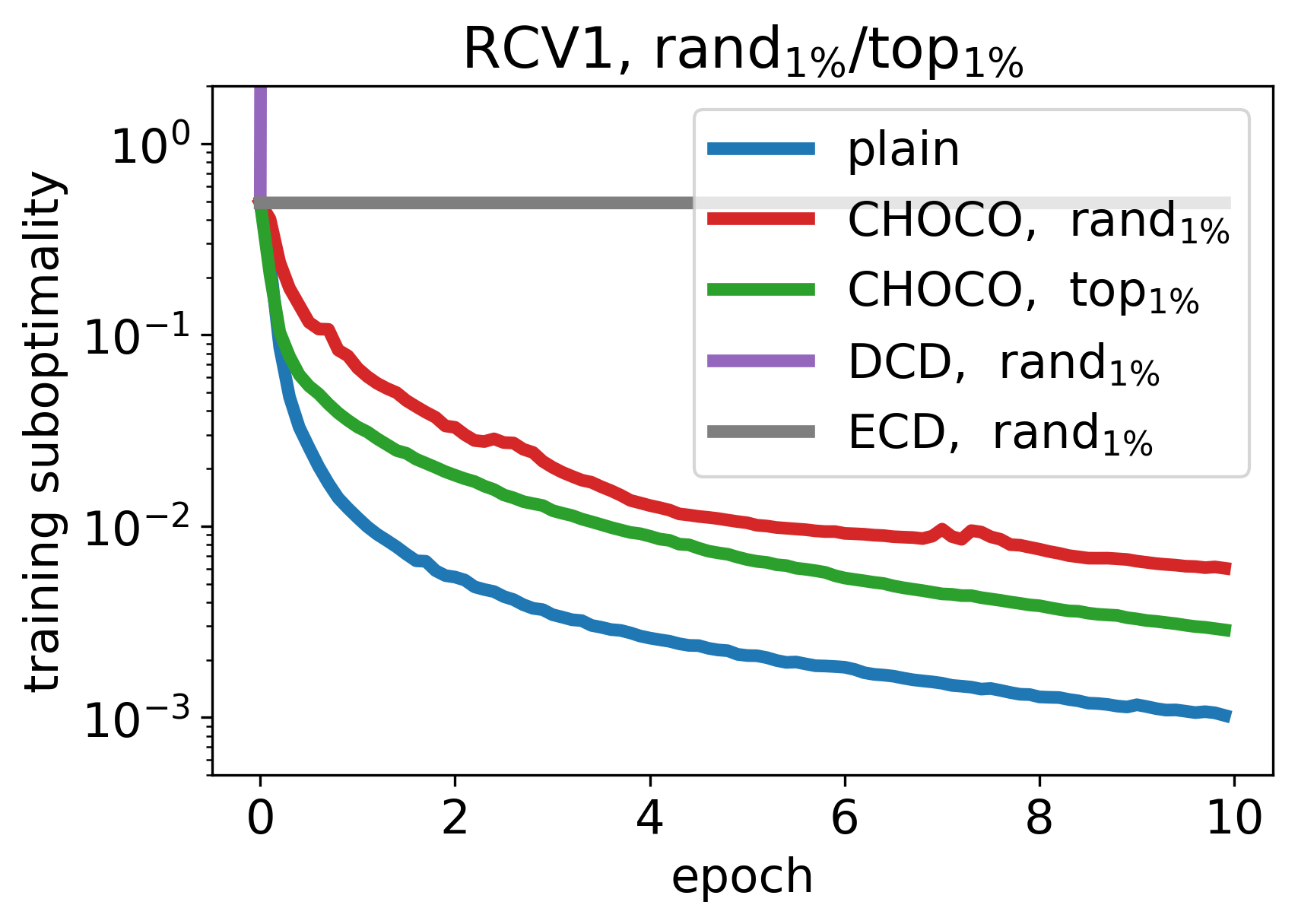}
	\includegraphics[width=0.48\linewidth]{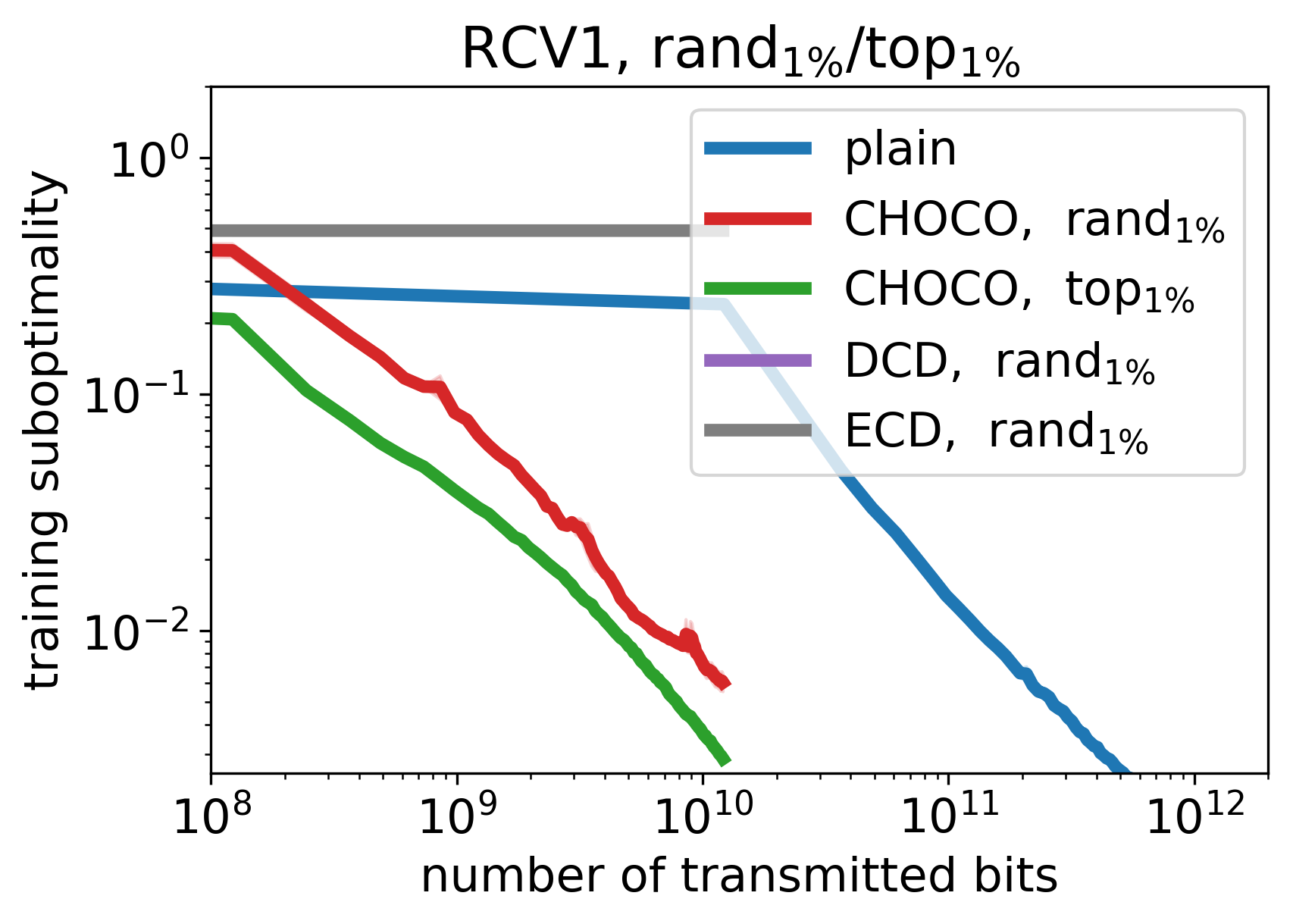}
	\vspace{-2mm}
	\caption{Comparison of Algorithm~\ref{alg:all-to-all} (plain), ECD-SGD, DCD-SGD and \algopt with ($\operatorname{rand}_{1\%}$) sparsification (in addition ($\operatorname{top}_{1\%}$) for \algopt), for $epsilon$ (top) and $rcv1$ (bottom) in terms of iterations (left) and communication cost (right), $n=9$.}
	\label{fig:sgd_random_20}
	\end{minipage}
\end{figure}
\begin{figure}
	\centering
	\begin{minipage}{0.8\textwidth}
		\vspace{-2mm}
			\includegraphics[width=0.48\linewidth]{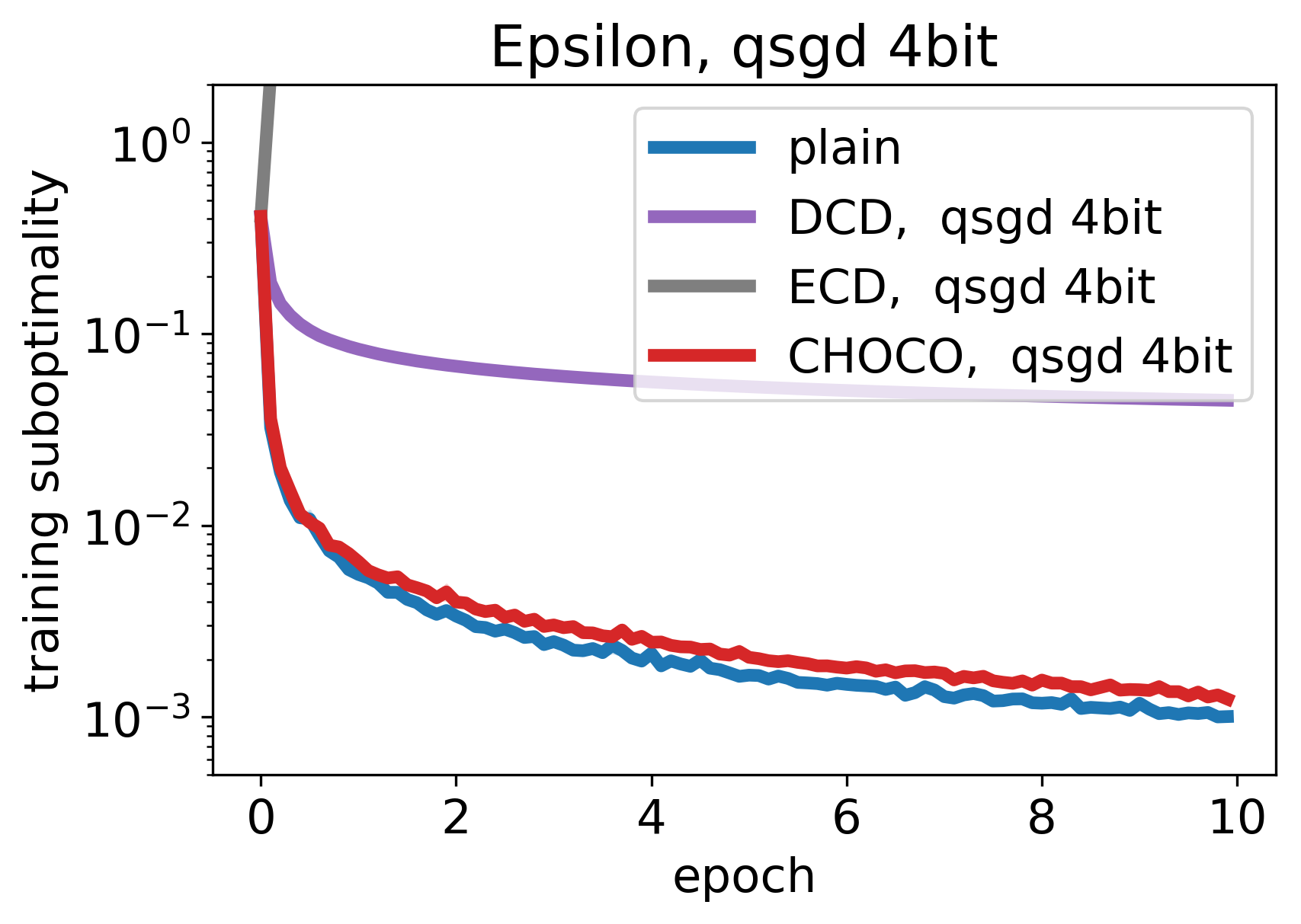}
		\includegraphics[width=0.48\linewidth]{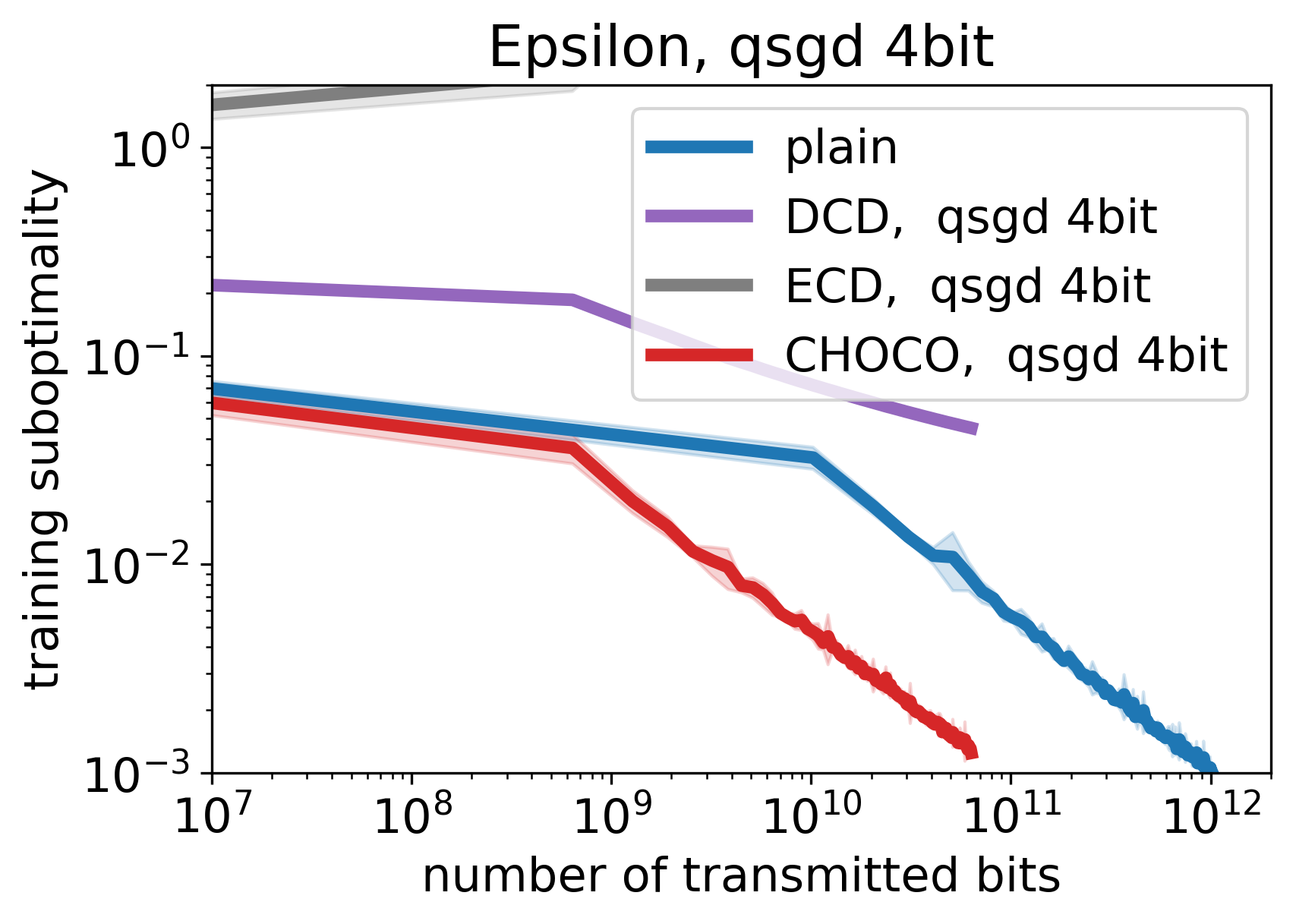}\\
		\includegraphics[width=0.48\linewidth]{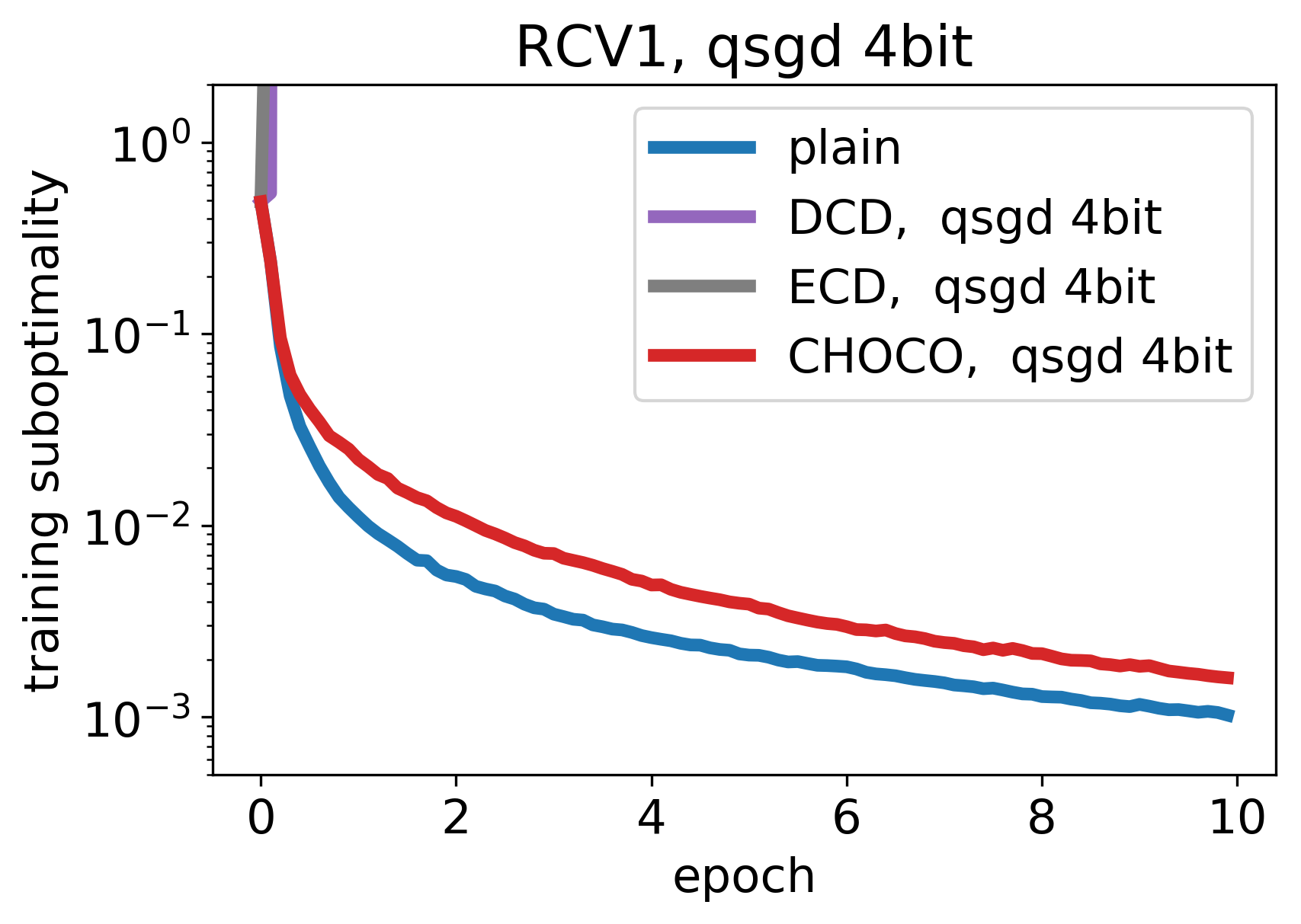}
		\includegraphics[width=0.48\linewidth]{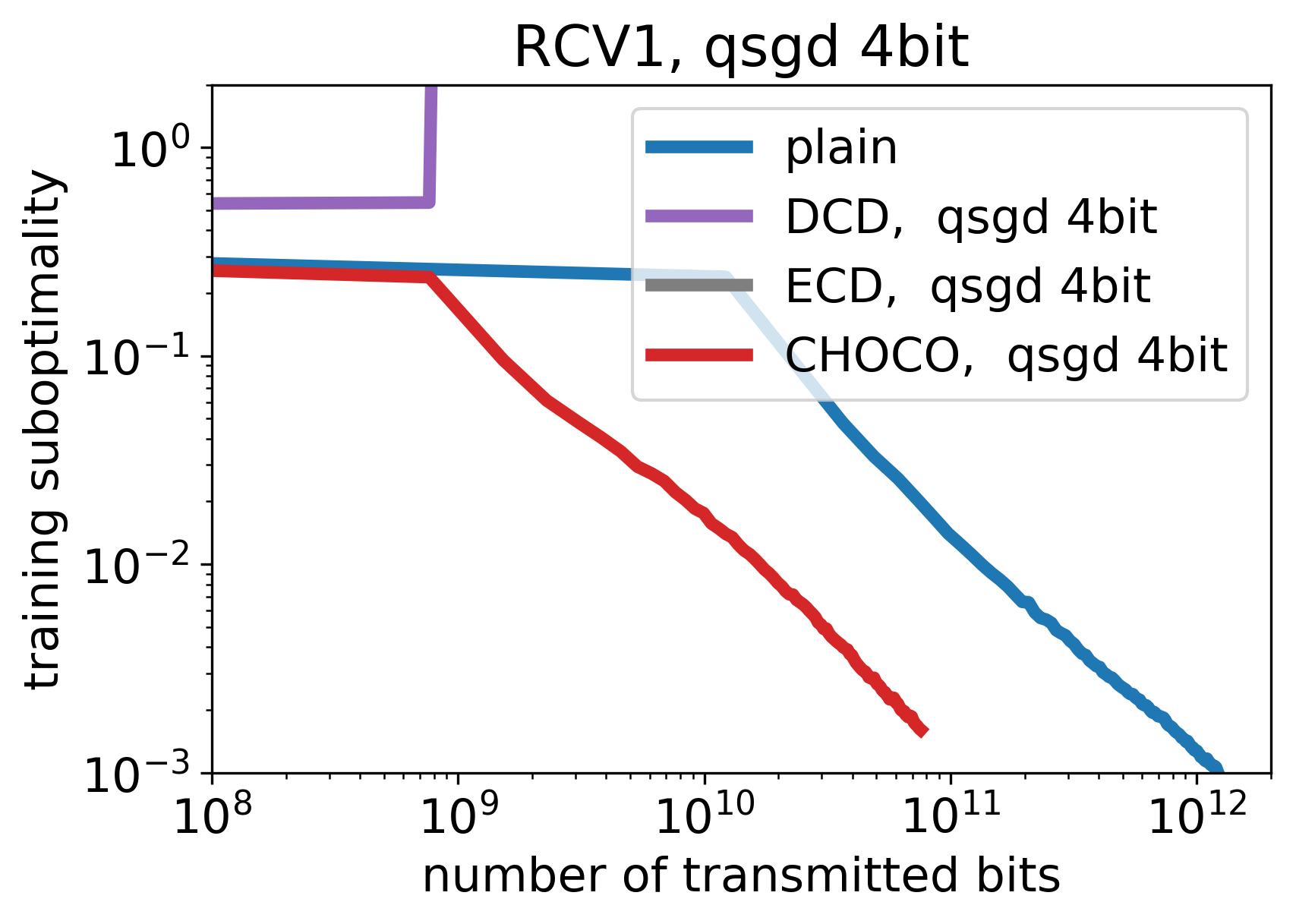}
		\vspace{-2mm}
		\caption{Comparison of Algorithm~\ref{alg:all-to-all} (plain), ECD-SGD, DCD-SGD and \algopt with ($\operatorname{qsgd}_{16}$) quantization, for $epsilon$ (top) and $rcv1$ (bottom) in terms of iterations (left) and communication cost (right), on $n=9$ nodes on a ring topology.}\label{fig:sgd_qsgd}
	\end{minipage}
\end{figure}

We asses the performance of \algopt on logistic regression, defined as $\frac{1}{m} \sum_{j = 1}^m \log(1 + \exp(-b_j \aa_j^\top \xx)) + \frac{1}{2 m}\norm{\xx}^2$, where $\aa_j \in \R^d$ and $b_{j} \in \{-1, 1\}$ are the data samples and $m$ denotes the number of samples in the dataset. We distribute the $m$ data samples evenly among the $n$ workers and consider two settings: (i) \emph{randomly shuffled}, where datapoints are randomly assigned to workers, and the more difficult (ii) \emph{sorted} setting, where each worker only gets data samples just from one class (with the possible exception of one worker that gets two labels assigned). Moreover, we try to make the setting as difficult as possible, meaning that e.g. on the ring topology the machines with the same label form two connected clusters. %
We repeat each experiment three times and depict the mean curve and the area corresponding to one standard deviation. We plot suboptimality, i.e. $f(\overline{\xx}^{(t)}) - f^\star$ (obtained by $\operatorname{LogisticSGD}$ optimizer from scikit-learn \cite{scikit-learn}) versus number of iterations and the number of transmitted bits between workers, which is proportional to the actual running time if communication is a bottleneck. 

\paragraph{Algorithms.} 
As baselines we consider Alg.~\ref{alg:all-to-all} with exact communication (denoted as `plain') and the communication efficient state-of-the-art optimization schemes DCD-SGD and ECD-SGD recently proposed in~\cite{Tang2018:decentralized} (for unbiased quantization operators) and compare them to \algopt. We use decaying stepsize $\eta_t = \frac{ma}{t + b}$ where the parameters $a,b$ are individually tuned for each algorithm and compression scheme, with values given in Table \ref{tab:learning_rates_sgd}.

\paragraph{Impact of Topology.}
In Figure~\ref{fig:topologies_sorted} we depict the performance of the baseline Algorithm~\ref{alg:all-to-all} with exact communication on different topologies (ring, torus and fully-connected; Fig.~\ref{fig:ring_and_torus_topology}) with uniformly averaging mixing matrix $W$. Note that Algorithm \ref{alg:all-to-all} for fully-connected graph corresponds to mini-batch SGD. Increasing the number of workers from $n=9$ to $n=25$ and $n=64$ shows the mild effect of the network topology on the convergence. We observe that the \emph{sorted} setting is more difficult than the \emph{randomly shuffled} setting (see Fig.~\ref{fig:topologies_random_all} in the Appendix~\ref{sect:additional_experiments}), where the convergence behavior remains almost unaffected. In the following we focus on the hardest case, i.e. the ring topology.

\paragraph{Comparison to Baselines.}
In Figures~\ref{fig:sgd_random_20} and~\ref{fig:sgd_qsgd} depict the performance of these algorithms on the ring topology with $n = 9$ nodes for \emph{sorted} data of the $epsilon$ and $rcv1$ datasets. \algopt performs almost as good as the exact Algorithm~\ref{alg:all-to-all} in all situations, but using $100\times$ less communication with ($\operatorname{rand}_{1\%}$) sparsification (Fig.\ \ref{fig:sgd_random_20}, right) and approximately $15\times$ less communication for $(\operatorname{qsgd}_4)$ quantization. The ($\operatorname{top}_{1\%}$) variant performs slightly better than ($\operatorname{rand}_{1\%}$) sparsification. 

\algopt consistently outperforms DCD-SGD in all settings. We also observed that DCD-SGD starts to perform better for larger number of levels $s$ in the $(\operatorname{qsgd}_s)$ in the quantification operator (increasing communication cost). This is consistent with the reporting in~\cite{Tang2018:decentralized} that assumed high precision quantization. %
As a surprise to us, ECD-SGD, which was proposed in~\cite{Tang2018:decentralized} a the preferred alternative over DCD-SGD for less precise quantization operators, always performs worse than DCD-SGD, and often diverges.%

Figures for \emph{randomly shuffled} data and be found in the Appendix~\ref{sect:additional_experiments}. In that case \algopt performs exactly as well as the exact Algorithm~\ref{alg:all-to-all} in all situations.

\paragraph{Conclusion.}
The experiments verify our theoretical findings: \algcons is the first linearly convergent gossip algorithm with quantized communication and \algopt consistently outperforms the baselines for decentralized optimization, reaching almost the same performance as the exact algorithm without communication restrictions while significantly reducing communication cost. 
In view of the striking popularity of SGD as opposed to full-gradient methods for deep-learning, the application of \algopt to decentralized deep learning---an instance of problem \eqref{eq:prob}--- is a promising direction.

\paragraph{Acknowledgments.}
We acknowledge funding from SNSF grant 200021\_175796, as well as a Google Focused Research Award.

\small
\bibliographystyle{icml2019}
\bibliography{../../papers_decentralized}
\normalsize

\appendix
\onecolumn

\section{Basic Identities and Inequalities}
\subsection{Smooth and Strongly Convex Functions}
\label{sec:defsmooth}
\begin{definition}
A differentiable function $f\colon \R^d \to \R$ is $L$-strongly convex for parameter $L \geq 0$ if
\begin{align}
 f(\yy) &\leq f(\xx) + \lin{\nabla f(\xx),\yy-\xx} + \frac{L}{2}\norm{\yy-\xx}^2\,, & &\forall \xx,\yy \in \R^d\,. \label{def:smooth}
\end{align}
\end{definition}

\begin{definition}
A differentiable function $f\colon \R^d \to \R$ is $\mu$-strongly convex for parameter $\mu \geq 0$ if
\begin{align}
 f(\yy) &\geq f(\xx) + \lin{\nabla f(\xx),\yy-\xx} + \frac{\mu}{2}\norm{\yy-\xx}^2\,, & &\forall \xx,\yy \in \R^d\,. \label{def:strongconvex}
\end{align}
\end{definition}

\begin{remark}\label{remark:smoothness}
	If $f$ is $L$-smooth with minimizer $\xx^\star$ s.t $\nabla f(\xx^\star) = \0$, then
	\begin{equation}\label{eq:l-smooth}
	\norm{\nabla f(\xx)}^2 = \norm{\nabla f(\xx) - \nabla f(\xx^\star)}^2 \leq 2L \left(f(\xx) - f(\xx^\star)\right) \,.
	\end{equation}
\end{remark}

\subsection{Vector and Matrix Inequalities}
\begin{remark}\label{rem:frobenious_norm_of_matrix_mult}
	For $A\in \R^{d\times n}$, $B\in \R^{n\times n}$
	\begin{align}\label{eq:frob_norm_of_multiplication}
	\norm{AB}_F \leq \norm{A}_F \norm{B}_2 \,.
	\end{align}
\end{remark}

\begin{remark}\label{remark:norm_of_sum}
	For arbitrary set of $n$ vectors $\{\aa_i\}_{i = 1}^n$, $\aa_i \in \R^d$
	\begin{equation}\label{eq:norm_of_sum}
	\norm{\sum_{i = 1}^n \aa_i}^2 \leq n \sum_{i = 1}^n \norm{\aa_i}^2 \,.
	\end{equation}
\end{remark}
\begin{remark}\label{remark:scal_product}
	For given two vectors $\aa, \bb \in \R^d$
	\begin{align}\label{eq:scal_product}
	&2\lin{\aa, \bb} \leq \gamma \norm{\aa}^2 + \gamma^{-1}\norm{\bb}^2\,, & &\forall \gamma > 0 \,.
	\end{align}
\end{remark}
\begin{remark}\label{remark:norm_of_sum_of_two}
	For given two vectors $\aa, \bb \in \R^d$ %
	\begin{align}\label{eq:norm_of_sum_of_two}
	\norm{\aa + \bb}^2 \leq (1 + \alpha)\norm{\aa}^2 + (1 + \alpha^{-1})\norm{\bb}^2,\,\, & &\forall \alpha > 0\,.
	\end{align}
	This inequality also holds for the sum of two matrices $A,B \in \R^{n \times d}$ in Frobenius norm.
\end{remark}

\subsection{Implications of the bounded gradient and bounded variance assumption}

\begin{remark}\label{rem:expectation_of_gradient_squared}
	If $F_i: \R^d \times \Omega \to \R, i = 1,\dots,n$ are convex functions with $\EE{\xi}{\norm{\nabla F_i(\xx, \xi)}}^2 \leq G^2$, $\partial F(X, \xi) = \left[\nabla F_1(\xx, \xi_1), \dots, \nabla F_n(\xx, \xi_n) \right]$
	\begin{align*}
	\EE{\xi_1, \dots, \xi_n}{\norm{\partial F(X, \xi)}_F^2 \leq n G^2}, \,\, & & \forall X\,.
	\end{align*}
\end{remark}
\begin{remark}[Mini-batch variance]\label{rem:variance} If for functions $f_i$, $F_i$ defined in \eqref{eq:func}  $\EE{\xi}{\norm{\nabla F_i(\xx, \xi) - \nabla f_i(\xx)}^2}\leq \sigma_i^2, i \in [n]$, then
	$$\EE{\xi_1^{(t)}, \dots, \xi_n^{(t)}}{\norm{\frac{1}{n}\sum_{j = 1}^{n} \left(\nabla f_j(\xx_j^{(t)}) - \nabla F_j (\xx_j^{(t)}, \xi_j^{(t)})\right)}}^2\leq \dfrac{\overline{\sigma}^2}{n},$$\\
	where $\overline{\sigma}^2 = \frac{\sum_{i = 1}^{n} \sigma_i^2}{n}$.
\end{remark}
\begin{proof} This follows from
	\begin{align*}
	&\E{\norm{\frac{1}{n}\sum_{j = 1}^{n} Y_j}}^2= \dfrac{1}{n^2} \left(\sum_{j = 1}^n\E{\norm{Y_j}}^2 + \sum_{i \neq j} \E{\lin{Y_i, Y_j}}\right) = \dfrac{1}{n^2} \sum_{j = 1}^n\E{\norm{Y_j}}^2 \leq 
	\frac{1}{n^2}\sum_{j = 1}^{n} \sigma_j^2 = \dfrac{\overline{\sigma}^2}{n}
	\end{align*}
	for $Y_j = f_j(\xx_j^{(t)}) - \nabla F_j (\xx_j^{(t)}, \xi_j^{(t)})$. Expectation of scalar product is equal to zero because $\xi_i$ is independent of $\xi_j$ since $i\neq j$.
\end{proof}

\section{Consensus in Matrix notation}

In the proofs in the next section we will use the matrix notation, as already introduced in the main text. We define
\begin{align}\label{eq:notation}
X^{(t)} &:= \left[ \xx_1^{(t)},\dots, \xx_n^{(t)}\right] \in \R^{d\times n}, & &Q^{(t)} := \left[\qq_1^{(t)}, \dots, \qq_n^{(t)}\right] \in \R^{d\times n}, & &\hat{X}^{(t)} := \left[\hat{\xx}_1^{(t)}, \dots, \hat{\xx}_n^{(t)}\right] \in \R^{d\times n}\,.
\end{align}
Then using matrix notation we can rewrite Algorithm~\ref{alg:consensus} as

\setcounter{algorithm}{0}
\begin{algorithm}[H]
	\caption{\textsc{\algcons in matrix notation}}\label{alg:consensus_matrix}
	\begin{algorithmic}[1]
		\INPUT{: $X^{(0)}$, $\gamma$, $W$.}
		\STATE Initialize: $\hat{X}^{(0)} = 0$
		\FOR{$t$ \textbf{in} $0\dots T-1$}
		\STATE $Q^{(t)} = Q(X^{(t)} - \hat{X}^{(t)})$
		\STATE $\hat{X}^{(t + 1)} = \hat{X}^{(t)} + Q^{(t)}$
		\STATE $X^{(t + 1)} = X^{(t)} + \gamma \hat{X}^{(t + 1)}\left(W - I\right)$
		\ENDFOR
	\end{algorithmic}
\end{algorithm}
\begin{remark}
	Note that since every worker $i$ for each neighbor $j: \{i, j\} \in E$ stores $\hat{\xx}_j$, the proper notation for $\hat{\xx}$ would be to use $\hat{\xx}_{ij}$ instead. We simplified it using the property that if $\hat{\xx}_{ij}^{(0)} = \hat{\xx}_{kj}^{(0)}$, $\forall i, k: \{i,j\} \in E$ and $\{k, j\} \in E$, then they are equal at all timesteps $\hat{\xx}_{ij}^{(t)} = \hat{\xx}_{kj}^{(t)}$, $\forall t \geq 0$.
\end{remark}
\begin{remark}
	The results of Theorem~\ref{th:decentralized_sgd} and~\ref{th:sgd_general} also hold for arbitrary initialized $\hat{X}^{(0)}$ with the constraint that $\forall j$ all the neighbors of the node $j$ initialized with the same $\hat{\xx}_i$, i.e. using extended notation $\hat{\xx}_{ij}^{(0)} = \hat{\xx}_{kj}^{(0)}$, $\forall i, k: \{i,j\} \in E$ and $\{k, j\} \in E$.
\end{remark}

\subsection{Useful Facts}
\begin{remark}\label{rem:average}
	Let $X^{(t)} = \left[\xx_1^{(t)}, \dots, \xx_n^{(t)}\right] \in \R^{d\times n}$ and $\overline{X}^{(t)} = \left[\overline{\xx}^{(t)}, \dots, \overline{\xx}^{(t)}\right]\in \R^{d\times n}$, for  $\overline{\xx}^{(t)} = \frac{1}{n}\sum_{i = 1}^n \xx_i^{(t)}$, then because $W$ is doubly stochastic
	\begin{align}\label{eq:average}
	\overline{X}^{(t)} = X^{(t)}\frac{1}{n}\1\1^\top, && 	\overline{X}^{(t)}W = \overline{X}^{(t)}\,.
	\end{align}
\end{remark}
\begin{remark}
	The average $\overline{X}^{(t)} = \left[\overline{\xx}^{(t)}, \dots, \overline{\xx}^{(t)}\right]\in \R^{d\times n}$ during iterates of the Algorithm~\ref{alg:consensus} is preserved, i.e. 
	\begin{align}
	\overline{X}^{(t)} = \overline{X}^{(0)}, \quad \forall t,
	\end{align} 
	where $\overline{X}^{(t)} = \left[\overline{\xx}^{(t)} \dots, \overline{\xx}^{(t)}\right] \in \R^{d\times n}$.
\end{remark}
\begin{proof}
	\begin{align*}
	\overline{X}^{(t + 1)} =  \overline{X}^{(t)} + \gamma \hat{X}^{(t)} \left(W - I\right) \frac{\1\1^\top}{n} = \overline{X}^{(t)},
	\end{align*}
	because $W \frac{\1\1^\top}{n} = I$ since $W$ is doubly stochastic. 
\end{proof}
\begin{lemma}\label{lem:mixing_matrix}
	For $W$ satisfying Definition~\ref{def:W}, i.e. W is symmetric doubly stochastic matrix with second largest eigenvalue $1-\delta = |\lambda_2(W)| < 1$
	\begin{align}\label{eq:mixing_matrix}
	\norm{W^k - \frac{1}{n}\1\1^\top}_2 \leq (1-\delta)^{k}\,.
	\end{align}
\end{lemma}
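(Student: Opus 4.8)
\textbf{Proof plan for Lemma~\ref{lem:mixing_matrix}.}

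The plan is to diagonalize $W$ and reduce the claim to an elementary statement about the eigenvalues. Since $W$ is symmetric, the spectral theorem gives an orthonormal basis of eigenvectors; write $W = \sum_{i=1}^n \lambda_i \vv_i \vv_i^\top$ with $\lambda_1 = 1$ (using that $W$ is doubly stochastic, so $W\1 = \1$, hence $\vv_1 = \frac{1}{\sqrt n}\1$) and $\lambda_i = \lambda_i(W)$ for $i \geq 2$. First I would observe that $\frac{1}{n}\1\1^\top = \vv_1 \vv_1^\top$ is exactly the orthogonal projector onto the top eigenspace, so that
\begin{align*}
W^k - \tfrac{1}{n}\1\1^\top = \sum_{i=2}^n \lambda_i^k \vv_i \vv_i^\top\,,
\end{align*}
using $\lambda_1^k = 1$. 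This identity is the heart of the argument; it needs the doubly stochastic assumption only to pin down $\vv_1$ and the fact that $\lambda_1 = 1$ is a simple eigenvalue (which is part of Definition~\ref{def:W}, where $|\lambda_1| > |\lambda_2|$).

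Next I would compute the spectral norm of the right-hand side. Since the $\vv_i \vv_i^\top$ for $i \geq 2$ are orthogonal projectors onto mutually orthogonal one-dimensional subspaces, the matrix $\sum_{i=2}^n \lambda_i^k \vv_i \vv_i^\top$ is symmetric with eigenvalues exactly $\{\lambda_i^k\}_{i=2}^n$ (and $0$ on the span of $\vv_1$). Hence its spectral norm equals $\max_{i \geq 2} |\lambda_i|^k = |\lambda_2(W)|^k = (1-\delta)^k$, where the last equality is the definition \eqref{def:spectral_gap} of the spectral gap. Combining with the identity above yields $\norm{W^k - \frac{1}{n}\1\1^\top}_2 = (1-\delta)^k \leq (1-\delta)^k$, which is even an equality; the stated inequality follows a fortiori.

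I do not anticipate a serious obstacle here — the lemma is essentially bookkeeping once one recalls that $\frac{1}{n}\1\1^\top$ is the spectral projector associated with the eigenvalue $1$. The only point requiring a little care is to make sure $\frac{1}{n}\1\1^\top$ really is $\vv_1\vv_1^\top$ and not some larger projector, i.e. that the eigenvalue $1$ has multiplicity one; this is guaranteed by the strict inequality $|\lambda_1(W)| > |\lambda_2(W)|$ in Definition~\ref{def:W}. If one wanted to avoid even naming eigenvectors, an alternative is to note $W(\frac1n\1\1^\top) = (\frac1n\1\1^\top)W = \frac1n\1\1^\top$ and $(\frac1n\1\1^\top)^2 = \frac1n\1\1^\top$, so $W^k - \frac1n\1\1^\top = (W - \frac1n\1\1^\top)^k$, and then bound $\norm{W - \frac1n\1\1^\top}_2$ by $|\lambda_2(W)|$ directly; I would likely present whichever of these two framings is shortest.
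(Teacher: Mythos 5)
Your proposal is correct and follows essentially the same route as the paper's proof: diagonalize $W$ via the spectral theorem, identify $\frac{1}{n}\1\1^\top$ with the rank-one projector $\vv_1\vv_1^\top$ onto the eigenspace of $\lambda_1 = 1$, and read off the spectral norm of the remainder as $\max_{i\geq 2}|\lambda_i|^k = (1-\delta)^k$. The paper phrases this with $W = U\Lambda U^\top$ and $W^k - \frac{1}{n}\1\1^\top = U(\Lambda^k - \diag(1,0,\dots,0))U^\top$, which is the same computation in matrix form.
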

\begin{proof}
	Let $U\Lambda U^\top$ be SVD-decomposition of $W$, then $W^k = U \Lambda^k U^\top$
	Because of the stochastic property of $W$ its first eigenvector is $u_1 = \frac{1}{\sqrt{n}}\1$.
	\begin{align*}
	U \begin{pmatrix}
	1 & 0 &\dots & 0\\
	0 & 0 & \dots & 0\\
	\dots\\
	0 & 0 & \dots & 0\\
	\end{pmatrix} U^\top = u_1 u_1^\top = \dfrac{1}{n} \1\1^\top
	\end{align*}
	Hence,
	\begin{align*}
	\norm{W^k - \frac{1}{n}\1\1^\top}_2 &= \norm{U\Lambda^kU^\top - U \begin{pmatrix}
		1 & 0 &\dots & 0\\
		0 & 0 & \dots & 0\\
		\dots\\
		0 & 0 & \dots & 0\\
		\end{pmatrix} U^\top}_2 = \norm{\Lambda^k- \begin{pmatrix}
		1 & 0 &\dots & 0\\
		0 & 0 & \dots & 0\\
		\dots\\
		0 & 0 & \dots & 0\\
		\end{pmatrix}}_2 = (1-\delta)^{k}. \qedhere
	\end{align*}
\end{proof}

\section{Proof of Theorem~\ref{th:consensus}---Convergence of \algcons}

\begin{lemma}\label{lem:avg_gen_first_part}
	Let $X^{(t)}, \hat{X}^{(t)} \in \R^{d \times n}$, $\overline{X} = [\overline{\xx}, \dots, \overline{\xx}]$ for average $\overline{\xx} = \frac{1}{n} X^{(t)}\1 \in \R^d$ and let $X^{(t+1)} = X^{(t)} + \gamma \hat{X}^{(t)}(W-I) \in \R^{d \times n}$ be defined as in Algorithm~\ref{alg:consensus} with stepsize $\gamma \geq 0$ and mixing matrix $W \in \R^{n \times n}$ as in Definition~\ref{def:W}. Then 
	\begin{align*}
	\norm{X^{(t + 1)} - \overline{X}}^2_F \leq (1 - \delta\gamma)^2 (1 + \alpha_1) \norm{X^{(t)} - \overline{X}}_F^2 + \gamma^2 (1 + \alpha_1^{-1})\beta^2 \norm{\hat{X}^{(t + 1)} - X^{(t)}}_F^2,\,\, \forall \alpha_1 > 0\,.
	\end{align*}
	Here $\alpha_1 >0$ is a parameter whose value will be chosen later, $\delta = 1 - |\lambda_2(W)|$ and $\beta = \max_i \{1 - \lambda_i(W)\}$ as defined above.
\end{lemma}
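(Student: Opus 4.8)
\textbf{Proof plan for Lemma~\ref{lem:avg_gen_first_part}.}
The plan is to expand the update rule directly and insert the projection onto the consensus subspace. First I would write $X^{(t+1)} - \overline{X} = X^{(t)} - \overline{X} + \gamma \hat{X}^{(t+1)}(W-I)$, and then add and subtract $\overline{X}$ inside the second term, using $\overline{X}(W-I) = 0$ (from Remark~\ref{rem:average}, since $\overline{X}W = \overline{X}$). This lets me rewrite the increment as $\gamma(\hat{X}^{(t+1)} - X^{(t)})(W-I) + \gamma(X^{(t)} - \overline{X})(W-I)$. The key point is that, by the same identity, $(X^{(t)} - \overline{X})(W-I) = (X^{(t)} - \overline{X})(W - \tfrac1n\1\1^\top - I + \tfrac1n\1\1^\top)$, and actually more cleanly $(X^{(t)}-\overline X)W = (X^{(t)}-\overline X)(W - \tfrac1n\1\1^\top)$, so the $W$-part acts as a contraction with factor $\norm{W - \tfrac1n\1\1^\top}_2 \le 1-\delta$ by Lemma~\ref{lem:mixing_matrix} (case $k=1$).

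The second step is to group terms so the contraction is visible: combining, $X^{(t+1)} - \overline{X} = (X^{(t)} - \overline{X})(W - \tfrac1n\1\1^\top)\cdot\text{[something]}$ — more precisely I would write $X^{(t+1)}-\overline X = \big((1-\gamma)(X^{(t)}-\overline X) + \gamma(X^{(t)}-\overline X)(W-\tfrac1n\1\1^\top)\big) + \gamma(\hat X^{(t+1)}-X^{(t)})(W-I)$. The first bracket has operator norm at most $(1-\gamma) + \gamma(1-\delta) = 1-\gamma\delta$ acting on $X^{(t)}-\overline X$, hence its Frobenius norm is at most $(1-\gamma\delta)\norm{X^{(t)}-\overline X}_F$. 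The second term has Frobenius norm at most $\gamma\norm{\hat X^{(t+1)}-X^{(t)}}_F\norm{W-I}_2 = \gamma\beta\norm{\hat X^{(t+1)}-X^{(t)}}_F$, using Remark~\ref{rem:frobenious_norm_of_matrix_mult} and the definition $\beta = \norm{I-W}_2$.

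The final step is to apply Remark~\ref{remark:norm_of_sum_of_two} (the $(1+\alpha)$, $(1+\alpha^{-1})$ splitting of $\norm{\aa+\bb}^2$, which the remark notes also holds in Frobenius norm) with $\aa$ the first bracket and $\bb$ the second term, and the parameter $\alpha_1$: this gives exactly
\[
\norm{X^{(t+1)}-\overline X}_F^2 \le (1+\alpha_1)(1-\gamma\delta)^2\norm{X^{(t)}-\overline X}_F^2 + (1+\alpha_1^{-1})\gamma^2\beta^2\norm{\hat X^{(t+1)}-X^{(t)}}_F^2,
\]
which is the claim. I expect the only mildly delicate point to be the bookkeeping in the second paragraph: correctly seeing that $(1-\gamma)I + \gamma(W-\tfrac1n\1\1^\top)$ is the operator acting on $X^{(t)}-\overline X$ (this uses that $(X^{(t)}-\overline X)\tfrac1n\1\1^\top = 0$, so inserting $-\gamma\tfrac1n\1\1^\top$ is free), and that its spectral norm bound $1-\gamma\delta$ follows from the triangle inequality together with $\gamma\in[0,1]$ — so the step is clean but needs the $\gamma \le 1$ hypothesis to keep $1-\gamma$ nonnegative. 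Everything else is a direct application of the already-stated norm inequalities.
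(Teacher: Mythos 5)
Your proposal is correct and follows essentially the same route as the paper's proof: the same decomposition of the increment via $\overline{X}(W-I)=0$, the same grouping into $(X^{(t)}-\overline{X})((1-\gamma)I+\gamma W)$ with the free insertion of $-\tfrac{1}{n}\1\1^\top$, the same $(1+\alpha_1)/(1+\alpha_1^{-1})$ splitting, and the same submultiplicativity bound on the $\beta$-term. Your remark that $\gamma\leq 1$ is needed to keep the coefficient $1-\gamma$ nonnegative in the triangle inequality is a fair observation that applies equally to the paper's own argument (the lemma only states $\gamma\geq 0$, though the stepsizes actually used satisfy $\gamma\leq 1$).
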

\begin{proof}
By the definition of $X^{(t+1)}$ and the observation $\overline{X}(W-I)=0$, we can write
	\begin{align*}
	\norm{X^{(t + 1)} - \overline{X}}_F^2 &= \norm{X^{(t)}  - \overline{X}+ \gamma \hat{X}^{(t + 1)}(W - I)}^2_F \\
	&= \norm{X^{(t)}  - \overline{X} + \gamma\left(X^{(t)} - \overline{X}\right)(W - I) + \gamma \left(\hat{X}^{(t + 1)} - X^{(t)}\right)(W - I)}_F^2 \\
	&= \norm{\left(X^{(t)} - \overline{X}\right)((1 - \gamma)I + \gamma W) + \gamma\left(\hat{X}^{(t + 1)} - X^{(t)}\right)(W - I)}^2_F \\
	&\stackrel{\eqref{eq:norm_of_sum_of_two}}{\leq} (1 + \alpha_1)\norm{\left(X^{(t)} - \overline{X}\right)((1 - \gamma)I + \gamma W)}_F^2 +(1 + \alpha_1^{-1}) \norm{\gamma\left(\hat{X}^{(t + 1)} - X^{(t)}\right)(W - I)}^2_F \\
	&\stackrel{\text{\eqref{eq:frob_norm_of_multiplication}}}{\leq} (1 + \alpha_1)\norm{\left(X^{(t)} - \overline{X}\right)((1 - \gamma)I + \gamma W)}^2_F + (1 + \alpha_1^{-1}) \gamma^2 \norm{W-I}_2^2 \cdot \norm{\hat{X}^{(t + 1)} - X^{(t)}}^2_F \,.
	\end{align*}
	Let's estimate the first term
	\begin{align*}
	\norm{\left(X^{(t)} - \overline{X}\right)((1 - \gamma)I + \gamma W)}_F&\leq (1 - \gamma)\norm{X^{(t)} - \overline{X}}_F + \gamma \norm{\left(X^{(t)} - \overline{X}\right)W}_F \\
	&\stackrel{\text{\eqref{eq:average}}}{=} (1 - \gamma)\norm{X^{(t)} - \overline{X}}_F + \gamma \norm{\left(X^{(t)} - \overline{X}\right)\left(W - \1\1^\top/n\right)}_F \\
	&\stackrel{\text{\eqref{eq:mixing_matrix}, \eqref{eq:frob_norm_of_multiplication} }}{\leq} (1 - \gamma\delta)\norm{X^{(t)} - \overline{X}}_F
	\end{align*}
	where we used $(X^{(t)}-\overline{X}) \1 \1^\top /n = 0$, by definition of $\overline{X}$, in the second line.
	Putting this together gives us the statement of the lemma. %
\end{proof}

\begin{lemma}\label{lem:avg_gen_second_part}
   Let $X^{(t)}, \hat{X}^{(t)} \in \R^{d \times n}$, $\overline{X} = [\overline{\xx}, \dots, \overline{\xx}]$ for average $\overline{\xx} = \frac{1}{n} X^{(t)} \1  \in \R^d$ and let $X^{(t+1)} \in \R^{d \times n}$
and $\hat{X}^{(t+2)} \in \R^{d \times n}$  be defined as in Algorithm~\ref{alg:consensus} with stepsize $\gamma \geq 0$, mixing matrix $W \in \R^{n \times n}$ as in Definition~\ref{def:W} and quantization as in Assumption~\ref{assump:q}. Then 
	\begin{align*}
	\EE{Q}{\norm{X^{(t + 1)} - \hat{X}^{(t + 2)}}^2_F} &\leq (1 - \omega)(1 + \gamma \beta)^2(1 + \alpha_2)\norm{X^{(t)} - \hat{X}^{(t + 1)}}^2_F \\
	&+ (1 - \omega) \gamma^2 \beta^2 (1 + \alpha_2^{-1})\norm{X^{(t)} - \overline{X}}_F^2,\,\, &\forall \alpha_2 > 0 \,.
	\end{align*}
	Here $\alpha_2 >0$ is a parameter whose value will be chosen later, $\beta = \max_i \{1 - \lambda_i(W)\}$ as defined above and compression ratio $\omega > 0$.
\end{lemma}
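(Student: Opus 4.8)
The plan is to first strip off the randomness of $Q$ using Assumption~\ref{assump:q}, reducing the claim to a deterministic estimate, and then bound that estimate by splitting the $X$-update into a ``contractive'' part and a part driven by the residual $X^{(t)}-\overline{X}$. For the first step, note that the $\hat{X}$-update in Algorithm~\ref{alg:consensus} gives $\hat{X}^{(t+2)} = \hat{X}^{(t+1)} + Q\bigl(X^{(t+1)} - \hat{X}^{(t+1)}\bigr)$ with $Q$ acting column-wise, so that $X^{(t+1)} - \hat{X}^{(t+2)} = \bigl(X^{(t+1)} - \hat{X}^{(t+1)}\bigr) - Q\bigl(X^{(t+1)} - \hat{X}^{(t+1)}\bigr)$. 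Applying~\eqref{def:omega} to each column and summing then yields $\EE{Q}{\norm{X^{(t+1)} - \hat{X}^{(t+2)}}_F^2} \le (1-\omega)\norm{X^{(t+1)} - \hat{X}^{(t+1)}}_F^2$, and it remains to bound the deterministic quantity $\norm{X^{(t+1)} - \hat{X}^{(t+1)}}_F^2$ by the stated combination of $\norm{X^{(t)} - \hat{X}^{(t+1)}}_F^2$ and $\norm{X^{(t)} - \overline{X}}_F^2$.

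For the deterministic bound, I would use $X^{(t+1)} = X^{(t)} + \gamma \hat{X}^{(t+1)}(W-I)$ together with $\overline{X}(W-I) = 0$ (Remark~\ref{rem:average}) to replace $\hat{X}^{(t+1)}(W-I)$ by $(\hat{X}^{(t+1)} - \overline{X})(W-I)$, and then insert $\hat{X}^{(t+1)} - \overline{X} = (\hat{X}^{(t+1)} - X^{(t)}) + (X^{(t)} - \overline{X})$. Regrouping the terms should produce the clean decomposition
\begin{align*}
X^{(t+1)} - \hat{X}^{(t+1)} = \bigl(X^{(t)} - \hat{X}^{(t+1)}\bigr)\bigl((1+\gamma)I - \gamma W\bigr) + \gamma\bigl(X^{(t)} - \overline{X}\bigr)(W-I)\,.
\end{align*}

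From here the estimate follows from standard tools: apply the Frobenius-norm version of Remark~\ref{remark:norm_of_sum_of_two} with a free parameter $\alpha_2 > 0$ to these two summands, then use Remark~\ref{rem:frobenious_norm_of_matrix_mult} to pull out the matrix factors, using $\norm{W-I}_2 = \beta$ and $\norm{(1+\gamma)I-\gamma W}_2 = 1+\gamma\beta$. The latter identity holds because the eigenvalues of $(1+\gamma)I-\gamma W$ are $1+\gamma(1-\lambda_i(W)) \in [1,\,1+2\gamma]$ (since $|\lambda_i(W)|\le 1$), so its spectral norm equals $1 + \gamma\max_i(1-\lambda_i(W)) = 1+\gamma\beta$. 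Multiplying the resulting inequality by $(1-\omega)$ and combining with the first step gives exactly the claimed bound. I expect the only delicate points to be the algebraic regrouping that makes the coefficients come out as $(1+\gamma\beta)^2$ and $\gamma^2\beta^2$ (rather than cross terms, or a factor involving $|1-\gamma|$), and the spectral-norm computation $\norm{(1+\gamma)I-\gamma W}_2 = 1+\gamma\beta$, which relies on the eigenvalues of $W$ being confined to $[-1,1]$.
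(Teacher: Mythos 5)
Your proposal is correct and follows essentially the same route as the paper's proof: first peel off the compression via Assumption~\ref{assump:q} to get the factor $(1-\omega)$, then rewrite $X^{(t+1)}-\hat{X}^{(t+1)}$ as $\bigl(X^{(t)}-\hat{X}^{(t+1)}\bigr)\bigl((1+\gamma)I-\gamma W\bigr)+\gamma\bigl(X^{(t)}-\overline{X}\bigr)(W-I)$ and apply the $\alpha_2$-weighted Young inequality together with submultiplicativity and the spectral-norm identities $\norm{W-I}_2=\beta$, $\norm{(1+\gamma)I-\gamma W}_2=1+\gamma\beta$. Your justification of the latter (eigenvalues $1+\gamma(1-\lambda_i(W))\geq 1$) matches the paper's remark that the eigenvalues of $\gamma(I-W)$ are nonnegative, so there is no gap.
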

\begin{proof} By the definition of $X^{(t+1)}$ and $\hat{X}^{(t+2)}$ we can write
	\begin{align*}
	\EE{Q}{\norm{X^{(t+ 1)} - \hat{X}^{(t+2)}}^2_F} &= \EE{Q}{\norm{X^{(t + 1)} - \hat{X}^{(t + 1)} - Q(X^{(t + 1)} - \hat{X}^{(t + 1)})}_F^2} \stackrel{\text{\eqref{def:omega}}}{\leq} (1 - \omega) \norm{X^{(t + 1)} - \hat{X}^{(t + 1)}}_F^2\\
	&= (1 - \omega) \norm{X^{(t)} + \gamma \hat{X}^{(t + 1)}(W - I) - \hat{X}^{(t + 1)}}_F^2 \\
	&\stackrel{\text{\eqref{eq:average}}}{=} (1 - \omega) \norm{\left(X^{(t)} - \hat{X}^{(t + 1)}\right)\left( (1 + \gamma) I  - \gamma W\right) + \gamma (W - I) \left({X}^{(t)} - \overline{X}\right)}_F^2\\
	&\stackrel{\text{\eqref{eq:norm_of_sum_of_two}}}{\leq} (1 - \omega) (1 + \alpha_2)\norm{\left(X^{(t)} - \hat{X}^{(t + 1)}\right)\left((1+\gamma)I-\gamma W\right)}_F^2 \\
	&\hspace{10mm}+ (1 - \omega)(1 + \alpha_2^{-1})\norm{\gamma (W - I) \left({X}^{(t)} - \overline{X}\right)}_F^2\\
	&\stackrel{\text{\eqref{eq:frob_norm_of_multiplication}}}{\leq} (1 - \omega) (1 + \gamma \beta)^2(1 + \alpha_2)\norm{X^{(t)} - \hat{X}^{(t + 1)}}_F^2 + (1 - \omega) \gamma^2\beta^2 (1 + \alpha_2^{-1})\norm{X^{(t)} - \overline{X}}_F^2\,,
	\end{align*}
where we used $\norm{I + \gamma(I-W)}_2 = 1 + \gamma \norm{I-W}_2 = 1+ \gamma \beta$ because eigenvalues of $\gamma(I-W)$ are positive.
\end{proof}

\begin{proof}[Proof of Theorem~\ref{th:consensus}]
    As observed in Remark~\ref{rem:average} the averages of the iterates is preserved, i.e. $\overline{X} \equiv X^{(t)}\frac{1}{n}\1\1^\top$ for all $t \geq 0$. By applying the Lemmas~\ref{lem:avg_gen_first_part} and~\ref{lem:avg_gen_second_part} from above we obtain
	\begin{align*}
\EE{Q}{e_{t+1}} &\leq \eta_1 (\gamma) \norm{X^{(t)} - \overline{X}}_F^2
	 + \xi_1(\gamma) \norm{\hat{X}^{(t + 1)} - X^{(t)}}_F^2 \leq \max\{\eta_1(\gamma),\xi_1(\gamma)\} \cdot e_t\,,
	\end{align*}
	where
	\begin{align*}
	 \eta_1(\gamma) &:= (1 - \delta\gamma)^2 (1 + \alpha_1) + (1 - \omega) \gamma^2 \beta^2 (1 + \alpha_2^{-1})\,, \\
	 \xi_1(\gamma) &:=\gamma^2 \beta^2(1 + \alpha_1^{-1}) + (1 - \omega) (1 + \gamma\beta)^2(1 + \alpha_2) \,.
	\end{align*}
	Now, we need to choose the parameters $\alpha_1, \alpha_2$ and stepsize $\gamma$ such as to minimize the factor $\max\{\eta_1(\gamma),\xi_1(\gamma)\}$. Whilst the optimal parameter settings can for instance be obtained using specialized optimization software, we here proceed by showing that for the (suboptimal) choice
	\begin{align}
	\begin{split}
	  \alpha_1 &:= \frac{\gamma \delta}{2},\\
	  \alpha_2 &:= \frac{\omega}{2}\\
	  \gamma^\star &:=  \frac{\delta\omega}{16\delta + \delta^2 + 4\beta^2 + 2\delta\beta^2 - 8 \delta\omega} 
	\end{split} \label{eq:params}
    \end{align}	 
it holds
\begin{align}
   \max\{\eta_1(\gamma^\star),\xi_1(\gamma^\star)\} &\leq  1 - \frac{\delta^2 \omega}{2(16\delta + \delta^2 + 4 \beta^2 + 2 \delta \beta^2 - 8 \delta \omega)}  \,. \label{eq:3435} 
   \end{align}
 The claim of the theorem then follows by observing
 \begin{align}
  1 - \frac{\delta^2 \omega}{2(16\delta + \delta^2 + 4 \beta^2 + 2 \delta  \beta^2 - 8 \delta \omega)}  
  \leq 1 - \frac{\delta^2 \omega}{82} \,,
\end{align}
using the crude estimates $0 \leq \delta \leq 1$, $\beta \leq 2$, $\omega \geq 0$.

We now proceed to show that~\eqref{eq:3435} holds. Observe that for $\alpha_1, \alpha_2$ as in~\eqref{eq:params},
\begin{align*}
 \eta_1(\gamma) &\leq  \left( 1 - \gamma \delta \right) \left( 1 - \gamma \delta\right) \left( 1 + \frac{\gamma \delta}{2} \right)+  \gamma^2 \beta^2 (1-\omega) \left(1 + \frac{2}{\omega} \right) \\
 & \leq \left( 1 - \frac{\gamma \delta}{2} \right)^2 + \frac{2}{\omega} \gamma^2 \beta^2 =: \eta_2(\gamma) \,,
\end{align*}
where we used the inequality $(1-x)(1+\frac{x}{2}) \leq (1-\frac{x}{2})$ and $(1-\omega)(1+2/\omega) \leq \frac{2}{\omega}$ for $\omega > 0$. The quadratic function $\eta_2(\gamma)$ is minimized for $\gamma' = \frac{2\delta \omega}{8 \beta^2 + \delta^2 \omega}$ with value $\eta_2(\gamma')=\frac{8 \beta^2}{8 \beta^2 + \delta^2 \omega} < 1$. Thus by Jensen's inequality
\begin{align}
 \eta_2(\lambda \gamma') \leq (1-\lambda) \eta_2(0) + \lambda \eta_2(\gamma') = 1- \lambda \frac{\delta^2 \omega}{8 \beta^2 + \delta^2 \omega} \label{eq:3455} 
\end{align}
for $0 \leq \lambda \leq 1$, and especially for the choice $\lambda' = \frac{8 \beta^2 + \delta^2 \omega}{2(16\delta + \delta^2 + 4 \beta^2 + 2 \delta \beta^2 - 8 \delta \omega)}$ we have
\begin{align}
\eta_1(\gamma^\star) \leq  \eta_2(\lambda' \gamma') \stackrel{\eqref{eq:3455}}{\leq}  1 - \frac{\delta^2 \omega}{2(16\delta + \delta^2 + 4 \beta^2 + 2 \delta  \beta^2 - 8 \delta \omega)} \,,
\end{align}
as $\gamma^\star = \lambda' \gamma'$. Now we proceed to estimate $\xi_1(\gamma^\star)$. Observe
\begin{align}
 \xi_1(\gamma) &\leq \gamma^2 \beta^2 \left(1 + \frac{2}{\gamma \delta} \right) + (1+\gamma \beta)^2 (1-\omega)\left(1+\frac{\omega}{2}\right)  \leq \gamma^2 \beta^2 \left(1 + \frac{2}{\gamma \delta} \right) + (1+\gamma \beta)^2 \left(1-\frac{\omega}{2}\right)\,,
\end{align}
again from $(1-x)(1+\frac{x}{2}) \leq (1-\frac{x}{2})$ for $x > 0$. As $\beta \leq 2$ we can estimate $(1+\gamma \beta)^2 \leq 1+8 \gamma$ for any $0 \leq \gamma \leq 1$. Furthermore $\gamma^2 \leq \gamma$ for $0 \leq \gamma \leq 1$. Thus
\begin{align}
 \xi_1(\gamma^\star) \leq \beta^2 \left(\gamma^\star + \frac{2\gamma^\star}{\delta}\right) + \left(1-\frac{\omega}{2}\right) (1+8 \gamma^\star) = 1 - \frac{\delta^2 \omega}{2(16\delta + \delta^2 + 4 \beta^2 + 2 \delta  \beta^2 - 8 \delta \omega)}\,,
\end{align}
as a quick calculation shows.
\end{proof}

\section{Proof of Theorem~\ref{th:decentralized_sgd}---Convergence of \algopt}
Recall, that $\bigl\{\xx_i^{(t)}\}_{t=0}^T$ denote the iterates of Algorithm~\ref{alg:quantized_decentralized_sgd} on worker $i \in [n]$. We define
\begin{align}
\overline{\xx}^{(t)} := \frac{1}{n} \sum_{i=1}^n \xx_i^{(t)}\,,
\end{align}
the average over all workers. Note that this quantity is not available to the workers at any given time, but it will be conveniently to use for the proofs.
In this section we use both vector and matrix notation whenever it is more convenient, and define
\begin{align}
\begin{split}
X^{(t)} := \left[ \xx_1^{(t)},\dots, \xx_n^{(t)}\right] \in \R^{d\times n}, \qquad \overline{X}^{(t)} := \left[ \overline{\xx}^{(t)},\dots, \overline{\xx}^{(t)}\right]  \in \R^{d\times n}, \\  \partial F(X^{(t)}, \xi^{(t)}) := \left[\nabla F_1(\xx_{1}^{(t)}, \xi_1^{(t)}), \dots,  \nabla F_n(\xx_{n}^{(t)}, \xi_n^{(t)})\right]  \in \R^{d\times n}.
\end{split} \label{eq:notation_sgd}
\end{align}

Instead of proving Theorem~\ref{th:decentralized_sgd} directly, we prove a slightly more general statement in this section. Algorithm~\ref{alg:quantized_decentralized_sgd} relies on the (compressed) consensus Algorithm~\ref{alg:consensus}. However, we can also show convergence of Algorithm~\ref{alg:quantized_decentralized_sgd} for more general averaging schemes.
In Algorithm~\ref{alg:blackbox_sgd_matrix} below, the function $h : \R^{d\times n}\times \R^{d\times n} \to \R^{d\times n} \times \R^{d\times n}$ denotes a \emph{blackbox averaging scheme}. Note that $h$ could be random. 
\setcounter{algorithm}{3}
\begin{algorithm}[H]
	\caption{\textsc{decentralized SGD with arbitrary averaging scheme}}\label{alg:blackbox_sgd_matrix}
	\begin{algorithmic}[1]
		\INPUT{: $X^{(0)}$, stepsizes $\{\eta_t\}_{t=0}^{T-1}$, averaging function $h : \R^{d\times n}\times \R^{d\times n} \to \R^{d\times n} \times \R^{d\times n}$}
		\STATE {\it In parallel (task for worker $i, i \in [n]$)}
		\FOR{$t$\textbf{ in} $0\dots T-1$}
		\STATE $X^{(t + \frac{1}{2})} = X^{(t)} - \eta_t\partial F_i(X^{(t)}, \xi^{(t)})$ \hfill $\triangleright$ stochastic gradient updates
		\STATE $(X^{(t + 1)}, Y^{(t + 1)}) = h(X^{(t + \frac{1}{2})}, Y^{(t)})$ \hfill  $\triangleright$ blackbox averaging/gossip
		\ENDFOR
	\end{algorithmic}
\end{algorithm}
In this work we in particular focus on two choices of $h$, the averaging operator $ h(X^{(t)}, Y^{(t)}) \mapsto (X^{(t + 1)}, Y^{(t + 1)})$:
\begin{itemize}
 \item  Setting $X^{(t + 1)} = X^{(t)}W$ and $Y^{(t + 1)} = X^{(t + 1)}$ corresponds to standard (exact) averaging with mixing matrix $W$, as in algorithm~\eqref{eq:boyd}.
 \item Setting $X^{(t + 1)} = X^{(t)} + \gamma Y^{(t)}\left(W - I\right)$and  $Y^{(t + 1)} = Y^{(t)} + Q(X^{(t + 1)} - Y^{(t)})$ for $Y^{(t)} = \hat{X}^{(t + 1)}$, we get the compressed consensus algorithm~\eqref{eq:ours}, leading to Algorithm~\ref{alg:quantized_decentralized_sgd}, as introduced in the main text.
\end{itemize}

\begin{assumption}\label{assump:avg}
    For an averaging scheme $h \colon \R^{d \times n} \times \R^{d \times n} \to \R^{d \times n} \times \R^{d \times n}$ let $(X^+,Y^+):=h(X,Y)$ for $X,Y \in \R^{d \times n}$. Assume that $h$
     preserves the average of the first iterate over all iterations:
	\begin{align*}
	X^+ \frac{\1\1^\top}{n} &= X \frac{\1\1^\top}{n} \,, &\forall X,Y \in \R^{d \times n}\,,
\intertext{	and that it converges with linear rate for a parameter $0 < p \leq 1$ }
	\EE{h}{\Psi(X^+, Y^+)} &\leq (1 - p) {\Psi(X, Y)}\,, &\forall X,Y \in \R^{d \times n}\,,
	\end{align*}
	and Laypunov function $\Psi(X, Y) := \|X - \overline{X}\|_F^2 + \|X - Y\|_F^2$ with $\overline{X} := \tfrac{1}{n} X \1\1^\top$, where $\mathbb E_{h}$ denotes the expectation over internal randomness of averaging scheme $h$.
\end{assumption}

This assumption holds for exact averaging as in~\eqref{eq:boyd} with parameter $p = \gamma \delta$ (as shown in Theorem~\ref{th:boyd2_rate}). %
For the proposed compressed consensus algorithm~\eqref{eq:ours} the assumption holds for parameter $p = \frac{\omega \delta^2}{82}$ (as show in Theorem~\ref{th:consensus}). Here $\omega$ denotes the compression ratio and $\delta$ the eigengap of mixing matrx $W$. We can now state the more general Theorem (that generalizes Theorem~\ref{th:decentralized_sgd}):

\begin{theorem}\label{th:sgd_general}
	Under Assumption~\ref{assump:avg} for $p > 0$, Algorithm~\ref{alg:blackbox_sgd_matrix} with stepsize $\eta_t = \frac{4}{\mu(a + t)}$, for parameter $a \geq \max \left\{ \frac{5}{p}, 16 \kappa\right\}$, $\kappa = \frac{L}{\mu}$ converges at the rate
	\begin{align*}
	f(\xx_{avg}^{(T)}) - f^\star \leq \dfrac{\mu a^3}{8S_T} \norm{\overline{\xx}^{(0)} - \xx^\star}^2  + \dfrac{4T(T + 2a)}{\mu S_T}\frac{\overline{\sigma}^2}{n} + \dfrac{64T}{\mu^2 S_T} (2L + \mu)  \dfrac{40}{p^2} G^2,
	\end{align*}
	where $\xx_{avg}^{(T)}  = \frac{1}{S_T}\sum_{t = 0}^{T - 1} w_t \overline{\xx}^{(t)}$ for weights $w_t = (a + t)^2$, and $S_T = \sum_{t = 0}^{T - 1}w_t\geq\frac{1}{3} T^3$.
\end{theorem}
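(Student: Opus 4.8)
The plan is to decouple the analysis into the dynamics of the \emph{network average} $\overline{\xx}^{(t)} := \frac1n\sum_{i=1}^{n}\xx_i^{(t)}$ and the \emph{consensus error}. Because the blackbox $h$ preserves the row-average (Assumption~\ref{assump:avg}), the average evolves by a plain SGD step with a \emph{perturbed} gradient, $\overline{\xx}^{(t+1)} = \overline{\xx}^{(t)} - \frac{\eta_t}{n}\sum_{i=1}^{n}\nabla F_i(\xx_i^{(t)},\xi_i^{(t)})$, the $i$-th summand being evaluated at the local point $\xx_i^{(t)}$ rather than at $\overline{\xx}^{(t)}$; the size of this perturbation is controlled by the Lyapunov quantity $\Psi^{(t)} := \Psi(X^{(t)},Y^{(t)}) = \norm{X^{(t)}-\overline{X}^{(t)}}_F^2 + \norm{X^{(t)}-Y^{(t)}}_F^2$, whose linear decay under $h$ is exactly what Assumption~\ref{assump:avg} guarantees.

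First I would derive a one-step inequality for $r_t := \norm{\overline{\xx}^{(t)}-\xx^\star}^2$. Expanding $\norm{\overline{\xx}^{(t+1)}-\xx^\star}^2$ and taking the conditional expectation over $\xi^{(t)}$, the second-moment term splits (Remark~\ref{rem:variance}) into the mini-batch variance $\le \overline{\sigma}^2/n$ plus the squared mean $\norm{\frac1n\sum_i\nabla f_i(\xx_i^{(t)})}^2$, which I bound either by $G^2$ (bounded gradients) or, more tightly, via $L$-smoothness (Remark~\ref{remark:smoothness}) at the cost of a consensus term. For the cross term $\lin{\frac1n\sum_i\nabla f_i(\xx_i^{(t)}),\,\overline{\xx}^{(t)}-\xx^\star}$ I would insert $\pm\xx_i^{(t)}$, apply $\mu$-strong convexity to $\lin{\nabla f_i(\xx_i^{(t)}),\xx_i^{(t)}-\xx^\star}$ and $L$-smoothness to $\lin{\nabla f_i(\xx_i^{(t)}),\overline{\xx}^{(t)}-\xx_i^{(t)}}$, then use Jensen ($\frac1n\sum_i\norm{\xx_i^{(t)}-\xx^\star}^2\ge\norm{\overline{\xx}^{(t)}-\xx^\star}^2$); combined with $\eta_t L\le\frac14$ (from $a\ge16\kappa$) this gives, for a numerical constant $c_0$,
\begin{align*}
\E{r_{t+1}} \;\le\; (1-\mu\eta_t)\,\E{r_t} \;-\; \eta_t\,\E{f(\overline{\xx}^{(t)})-f^\star} \;+\; \eta_t^2\,\frac{\overline{\sigma}^2}{n} \;+\; \frac{c_0\,(L+\mu)\,\eta_t}{n}\,\E{\norm{X^{(t)}-\overline{X}^{(t)}}_F^2}\,.
\end{align*}

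Second, I would bound the consensus error. Since $X^{(t+\frac12)} = X^{(t)} - \eta_t\,\partial F(X^{(t)},\xi^{(t)})$, applying Remark~\ref{remark:norm_of_sum_of_two} to $\Psi(X^{(t+\frac12)},Y^{(t)})$ together with $\E{\norm{\partial F(X^{(t)},\xi^{(t)})}_F^2}\le nG^2$ (Remark~\ref{rem:expectation_of_gradient_squared}) and the fact that centering a matrix only decreases its Frobenius norm gives $\E{\Psi(X^{(t+\frac12)},Y^{(t)})}\le(1+\alpha)\,\E{\Psi^{(t)}}+2(1+\alpha^{-1})\eta_t^2 nG^2$; the linear-rate part of Assumption~\ref{assump:avg} with the choice $\alpha=\frac{p}{2(1-p)}$ then yields $\E{\Psi^{(t+1)}}\le(1-\frac p2)\,\E{\Psi^{(t)}}+\frac{4\eta_t^2 nG^2}{p}$. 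Unrolling this and using $a\ge\frac5p$ — so that within the $\cO(1/p)$-length ``memory window'' of the geometric sum consecutive stepsizes differ by at most a constant factor — one obtains $\E{\norm{X^{(t)}-\overline{X}^{(t)}}_F^2}\le\E{\Psi^{(t)}}\le\frac{40\,\eta_t^2\,nG^2}{p^2}$.

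Plugging this into the one-step inequality turns the last term into $\cO\bigl((2L+\mu)\,\eta_t^3\,G^2/p^2\bigr)$; I would then isolate $\eta_t\,\E{f(\overline{\xx}^{(t)})-f^\star}$, divide by $\eta_t=\frac4{\mu(a+t)}$, multiply by the weights $w_t=(a+t)^2$, and telescope using the elementary inequality $(a+t)^2(a+t-4)\le(a+t-1)^3$ (valid for $a+t\ge1$). The boundary term contributes $\frac{\mu a^3}{8S_T}\norm{\overline{\xx}^{(0)}-\xx^\star}^2$, the sum $\sum_t w_t\eta_t=\cO(T(T+a)/\mu)$ produces the $\overline{\sigma}^2/n$ term, and $\sum_t w_t\eta_t^2=\cO(T/\mu^2)$ produces the $(2L+\mu)G^2/p^2$ term; dividing by $S_T\ge\frac13T^3$ and using convexity of $f$ ($f(\xx_{avg}^{(T)})\le\frac1{S_T}\sum_t w_t f(\overline{\xx}^{(t)})$) gives Theorem~\ref{th:sgd_general}, and Theorem~\ref{th:decentralized_sgd} follows by setting $p=\frac{\delta^2\omega}{82}$ (Theorem~\ref{th:consensus}) and, for the $\cO$-form, bounding $\norm{\overline{\xx}^{(0)}-\xx^\star}^2\le G^2/\mu^2$ by strong convexity and bounded gradients. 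I expect the main obstacle to be the second step: each gradient step injects an error of order $\eta_t^2 nG^2$ into $\Psi$, and one must show the contraction supplied by $h$ damps the accumulated history fast enough that $\Psi^{(t)}$ stays $\cO(\eta_t^2 nG^2/p^2)$ instead of growing — which forces the stepsize to vary slowly relative to the contraction rate (hence $a\ge5/p$) — and reconciling these constants with the smoothness and strong-convexity constants in the descent step is the bookkeeping-heavy heart of the argument; the remainder is the standard weighted-averaging SGD telescoping.
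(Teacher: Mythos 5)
Your proposal follows essentially the same route as the paper's proof: the same one-step descent inequality for $\norm{\overline{\xx}^{(t)}-\xx^\star}^2$ with a consensus-error penalty (the paper's Lemma~\ref{lem:main_recursion_for_decentralized_sgd}), the same geometric recursion $\E{\Psi^{(t+1)}}\leq(1-\tfrac{p}{2})\E{\Psi^{(t)}}+\cO(\eta_t^2 nG^2/p)$ resolved via $a\geq 5/p$ into the bound $40\eta_t^2 nG^2/p^2$ (Lemmas~\ref{lem:last_term_bound} and~\ref{lem:recursion}), and the same weighted telescoping with $w_t=(a+t)^2$ (Lemma~\ref{lem:from_local_sgd}). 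The argument is correct and the constants you sketch match the paper's up to immaterial choices (e.g.\ $\alpha=\tfrac{p}{2(1-p)}$ versus the paper's $\alpha_3=\tfrac{2}{p}$).
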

\begin{proof}[Proof of Theorem~\ref{th:decentralized_sgd}]
	The proof follows from Theorem~\ref{th:sgd_general} using the consensus averaging algorithm~\ref{alg:consensus} (giving $p=\frac{\delta^2 \omega}{82}$ by Theorem~\ref{th:consensus}) and the inequality $\E \mu \norm{\xx_0 - \xx^\star} \leq 2G$
	derived in \citep[Lemma 2]{Rakhlin2012:bound_for_a_0} to upper bound the first term.
\end{proof}

\subsection{Proof of Theorem~\ref{th:sgd_general}}
The proof below uses techniques from both \cite{Stich2018:sparsifiedSGD}  and \cite{Stich2018:LocalSGD}. %

\begin{lemma}\label{lem:main_recursion_for_decentralized_sgd}
	The averages $\overline{\xx}^{(t)}$ of the iterates of the Algorithm \ref{alg:blackbox_sgd_matrix} satisfy the following
	\begin{multline*}
	\EE{\xi_1^{(t)},\dots,\xi_n^{(t)}}{\|\overline{\xx}^{(t + 1)} - \xx^\star\|}^2 \leq \left(1 - \dfrac{\eta_t\mu}{2}\right) {\norm{\overline{\xx}^{(t)} - \xx^\star}}^2 + \dfrac{\eta_t^2\overline{\sigma}^2}{n} - 2\eta_t \left(1 - 2L\eta_t\right)\left(f(\overline{\xx}^{(t)}) - f^\star\right) + \\+ \eta_t \dfrac{2\eta_t L^2 + L + \mu}{n} \sum_{i = 1}^{n}\norm{\overline{\xx}^{(t)} - \xx_i^{(t)}}^2,
	\end{multline*}
	where %
	$\overline{\sigma}^2 = \frac{1}{n}\sum_{i = 1}^{n} \sigma_i^2$.
\end{lemma}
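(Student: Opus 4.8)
The plan is to run the classical one-step SGD analysis on the \emph{virtual average} $\overline{\xx}^{(t)}$, treating the per-worker deviations $\overline{\xx}^{(t)}-\xx_i^{(t)}$ as an additive error that will later be fed the consensus bound of Theorem~\ref{th:consensus}.

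First I would note that, since by Assumption~\ref{assump:avg} the averaging map $h$ leaves the mean of its first argument unchanged, the only contribution to $\overline{\xx}^{(t+1)}-\overline{\xx}^{(t)}$ is the stochastic gradient step in line~3, i.e.\ $\overline{\xx}^{(t+1)} = \overline{\xx}^{(t)} - \tfrac{\eta_t}{n}\sum_{i=1}^n \nabla F_i(\xx_i^{(t)},\xi_i^{(t)})$. Expanding $\norm{\overline{\xx}^{(t+1)}-\xx^\star}^2$, taking the expectation over $\xi_1^{(t)},\dots,\xi_n^{(t)}$, and invoking Remark~\ref{rem:variance} (using independence of the $\xi_i$ across workers) to control the variance of the averaged stochastic gradient by $\overline{\sigma}^2/n$, I get
\[
\E\norm{\overline{\xx}^{(t+1)}-\xx^\star}^2 \le \norm{\overline{\xx}^{(t)}-\xx^\star}^2 - 2\eta_t\lin{\overline{\xx}^{(t)}-\xx^\star,\ \overline{\gg}^{(t)}} + \eta_t^2\norm{\overline{\gg}^{(t)}}^2 + \frac{\eta_t^2\overline{\sigma}^2}{n}\,,
\]
where $\overline{\gg}^{(t)} := \tfrac1n\sum_{i=1}^n \nabla f_i(\xx_i^{(t)})$ is the conditional mean. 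It then remains to lower-bound the inner product and upper-bound $\norm{\overline{\gg}^{(t)}}^2$, each time peeling off the consensus error.

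For the inner product I would decompose, for each $i$, $\overline{\xx}^{(t)}-\xx^\star = (\xx_i^{(t)}-\xx^\star) + (\overline{\xx}^{(t)}-\xx_i^{(t)})$; on the first summand I use $\mu$-strong convexity of $f_i$, namely $\lin{\nabla f_i(\xx_i^{(t)}),\xx_i^{(t)}-\xx^\star}\ge f_i(\xx_i^{(t)})-f_i(\xx^\star)+\tfrac\mu2\norm{\xx_i^{(t)}-\xx^\star}^2$, and then replace $\norm{\xx_i^{(t)}-\xx^\star}^2$ by $\tfrac12\norm{\overline{\xx}^{(t)}-\xx^\star}^2-\norm{\overline{\xx}^{(t)}-\xx_i^{(t)}}^2$ (a rearrangement of Remark~\ref{remark:norm_of_sum_of_two} with $\alpha=1$); on the second summand I use $L$-smoothness of $f_i$ to get $\lin{\nabla f_i(\xx_i^{(t)}),\overline{\xx}^{(t)}-\xx_i^{(t)}}\ge f_i(\overline{\xx}^{(t)})-f_i(\xx_i^{(t)})-\tfrac L2\norm{\overline{\xx}^{(t)}-\xx_i^{(t)}}^2$. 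Summing over $i$ the $f_i(\xx_i^{(t)})$ terms cancel, leaving $\lin{\overline{\xx}^{(t)}-\xx^\star,\overline{\gg}^{(t)}}\ge f(\overline{\xx}^{(t)})-f^\star + \tfrac\mu4\norm{\overline{\xx}^{(t)}-\xx^\star}^2 - \tfrac{\mu+L}{2n}\sum_i\norm{\overline{\xx}^{(t)}-\xx_i^{(t)}}^2$. For the squared-norm term I write $\overline{\gg}^{(t)} = \nabla f(\overline{\xx}^{(t)}) + \tfrac1n\sum_i(\nabla f_i(\xx_i^{(t)})-\nabla f_i(\overline{\xx}^{(t)}))$ and apply $\norm{\aa+\bb}^2\le 2\norm\aa^2+2\norm\bb^2$, Remark~\ref{remark:norm_of_sum}, $L$-smoothness of each $f_i$, and Remark~\ref{remark:smoothness} for $f$ (using $\nabla f(\xx^\star)=\0$), obtaining $\norm{\overline{\gg}^{(t)}}^2 \le 4L\bigl(f(\overline{\xx}^{(t)})-f^\star\bigr) + \tfrac{2L^2}{n}\sum_i\norm{\overline{\xx}^{(t)}-\xx_i^{(t)}}^2$. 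Substituting both estimates into the one-step inequality and collecting the $\bigl(f(\overline{\xx}^{(t)})-f^\star\bigr)$ terms into $-2\eta_t(1-2L\eta_t)$ and the consensus-error terms into $\eta_t\tfrac{2\eta_t L^2+L+\mu}{n}$ produces exactly the stated recursion.

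The only place any care is needed is the inner-product step: one must place strong convexity on the $(\xx_i^{(t)}-\xx^\star)$ part and smoothness on the $(\overline{\xx}^{(t)}-\xx_i^{(t)})$ part so that the stray $f_i(\xx_i^{(t)})$ terms cancel and only $f(\overline{\xx}^{(t)})-f^\star$ survives, and one must split $\norm{\xx_i^{(t)}-\xx^\star}^2$ in exactly the $\tfrac12/1$ proportion that yields the coefficient $\tfrac\mu4$ (equivalently the $1-\tfrac{\eta_t\mu}{2}$ contraction after multiplying by $-2\eta_t$). Everything else reduces to the elementary vector inequalities collected in Appendix~\ref{sec:defsmooth}.
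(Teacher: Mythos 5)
Your proposal is correct and follows essentially the same route as the paper's proof: expand $\norm{\overline{\xx}^{(t+1)}-\xx^\star}^2$ using that $h$ preserves the average, kill the cross term and bound the variance by $\overline{\sigma}^2/n$, bound $\norm{\overline{\gg}^{(t)}}^2$ by inserting $\nabla f_i(\overline{\xx}^{(t)})$ and $\nabla f_i(\xx^\star)$, and split the inner product with strong convexity on $\xx_i^{(t)}-\xx^\star$ and smoothness on $\overline{\xx}^{(t)}-\xx_i^{(t)}$, using $\norm{\xx_i^{(t)}-\xx^\star}^2 \geq \tfrac12\norm{\overline{\xx}^{(t)}-\xx^\star}^2-\norm{\overline{\xx}^{(t)}-\xx_i^{(t)}}^2$ exactly as the paper does. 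All coefficients come out identical to the stated recursion.
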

\begin{proof} Because the blackbox averaging function $h$ preserves the average (Assumption~\ref{assump:avg}), we have
	\begin{align*}
	\norm{\overline{\xx}^{(t + 1)} - \xx^\star}^2 &= \norm{\overline{\xx}^{(t)} - \frac{\eta_t}{n}\sum_{j = 1}^n  \nabla F_j(\xx_j^{(t)}, \xi_j^{(t)}) - \xx^\star}^2 \\
	&= \norm{\overline{\xx}^{(t)} - \xx^\star - \frac{\eta_t}{n} \sum_{i = 1}^{n}\nabla f_i(\xx_i^{(t)}) + \frac{\eta_t}{n} \sum_{i = 1}^{n}\nabla f_i(\xx_i^{(t)}) - \frac{\eta_t}{n}\sum_{j = 1}^n \nabla F_j(\xx_j^{(t)}, \xi_j^{(t)})}^2 = \\
	&= \norm{\overline{\xx}^{(t)} - \xx^\star - \frac{\eta_t}{n} \sum_{i = 1}^{n}\nabla f_i(\xx_i^{(t)})}^2
	+\eta_t^2 \norm{\frac{1}{n} \sum_{i = 1}^{n}\nabla f_i(\xx_i^{(t)}) - \frac{1}{n}\sum_{j = 1}^n \nabla F_j(\xx_j^{(t)}, \xi_j^{(t)})}^2+\\
	&\qquad {}+ \frac{2\eta_t}{n}\left\langle  \overline{\xx}^{(t)} - \xx^\star - \frac{\eta_t}{n} \sum_{i = 1}^{n}\nabla f_i(\xx_i^{(t)}), \sum_{i = 1}^{n}\nabla f_i(\xx_i^{(t)}) - \sum_{j = 1}^n \nabla F_j(\xx_j^{(t)}, \xi_j^{(t)}) \right\rangle \,.
	\end{align*}
	The last term is zero in expectation, as $\EE{\xi_i^{(t)}}{\nabla F_i(\xx_i^{(t)}, \xi_i^{(t)}) } = \nabla f_i(\xx_i^{(t)})$. The second term is less than $\frac{\eta_t^2 \overline{\sigma}^2}{n}$ (Remark~\ref{rem:variance}). The first term can be written as:
	\begin{align*}
	\norm{\overline{\xx}^{(t)} - \xx^\star - \frac{\eta_t}{n} \sum_{i = 1}^{n}\nabla f_i(\xx_i^{(t)})}^2 = \norm{\overline{\xx}^{(t)} - \xx^\star}^2 + \eta_t^2 \underbrace{\norm{\frac{1}{n}\sum_{i = 1}^{n}\nabla f_i(\xx_i^{(t)})}^2}_{=: T_1} - \underbrace{2\eta_t \lin{\overline{\xx}^{(t)} - \xx^\star, \frac{1}{n}\sum_{i = 1}^{n}\nabla f_i(\xx_i^{(t)})}}_{=: T_2} \,.
	\end{align*}
	We can estimate %
	\begin{align*}
	T_1 &=
	\left\| \frac{1}{n}\sum_{i = 1}^{n}(\nabla f_i(\xx_i^{(t)}) -\nabla f_i(\overline{\xx}^{(t)}) + \nabla f_i(\overline{\xx}^{(t)}) -  \nabla f_i(\xx^\star))\right\|^2 \\
	&\stackrel{\eqref{eq:norm_of_sum}}{\leq}\frac{2}{n} \sum_{i = 1}^{n} \norm{\nabla f_i(\xx_i^{(t)}) - \nabla f_i(\overline{\xx}^{(t)})}^2  + 2\norm{\frac{1}{n} \sum_{i = 1}^{n} \nabla f_i(\overline{\xx}^{(t)}) - \frac{1}{n} \sum_{i = 1}^{n} \nabla f_i(\xx^\star)}^2 \\
	&\stackrel{\eqref{def:smooth},\eqref{eq:l-smooth}}{\leq} \dfrac{2L^2}{n} \sum_{i = 1}^{n} \norm{\xx_i^{(t)} - \overline{\xx}^{(t)}}^2 + \dfrac{4L}{n}\sum_{i = 1}^{n} \left(f_i(\overline{\xx}^{(t)}) - f_i(\xx^\star)\right)
	\\& = \dfrac{2L^2}{n} \sum_{i = 1}^{n} \norm{\xx_i^{(t)} - \overline{\xx}^{(t)}}^2 + 4L \left(f(\overline{\xx}^{(t)}) - f^\star\right)\,.
	\end{align*}
	
	And for the remaining $T_2$ term: 
	\begin{align*}
	- \frac{1}{\eta_t} T_2
	&= - \frac{2}{n} \sum_{i = 1}^{n}\left[\lin{\overline{\xx}^{(t)}  - \xx_i^{(t)}, \nabla f_i(\xx_i^{(t)})} + \lin{\xx_i^{(t)} - \xx^\star, \nabla f_i(\xx_i^{(t)})}\right]
	\\ &\stackrel{\eqref{def:smooth},\eqref{def:strongconvex}}{\leq} - \dfrac{2}{n}\sum_{i = 1}^{n} \left[ f_i(\overline{\xx}^{(t)}) - f_i(\xx_i^{(t)}) - \dfrac{L}{2} \norm{\overline{\xx}^{(t)} - \xx_i^{(t)}}^2 + f_i(\xx_i^{(t)}) - f_i(\xx^\star) + \dfrac{\mu}{2} \norm{\xx_i^{(t)} - \xx^\star}^2\right] \\
	&{}\stackrel{\eqref{eq:norm_of_sum}}{\leq} - 2 \left(f(\overline{\xx}^{(t)}) - f(\xx^\star)\right) + \dfrac{L + \mu}{n}\sum_{i = 1}^{n} \norm{\overline{\xx}^{(t)} - \xx_i^{(t)}}^2 - \dfrac{\mu}{2} \norm{\overline{\xx}^{(t)} - \xx^\star}^2\,.
	\end{align*}
	Putting everything together we are getting statement of the lemma.%
\end{proof}

\begin{lemma}\label{lem:last_term_bound}
	The iterates $\{X^{(t)}\}_{t \geq 0}$ of Algorithm~\ref{alg:blackbox_sgd_matrix} with stepsizes $\eta_t = \frac{b}{t + a}$, for parameters $a \geq \frac{5}{p}$,   $b > 0$ satisfy 
	\begin{align*}
	\norm{X^{(t + 1)} - \overline{X}^{(t + 1)}}_F^2\leq 40 \eta_t^2 \dfrac{1}{p^2} nG^2\,.
	\end{align*}
	Here $0< p \leq 1$ denotes the a convergence rate of the blackbox averaging algorithm as in Assumption~\ref{assump:avg}.
\end{lemma}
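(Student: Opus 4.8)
The plan is to control the Lyapunov function of Assumption~\ref{assump:avg} along the trajectory of Algorithm~\ref{alg:blackbox_sgd_matrix}, namely $\Psi^{(t)} := \norm{X^{(t)} - \overline{X}^{(t)}}_F^2 + \norm{X^{(t)} - Y^{(t)}}_F^2$, and to exploit that it dominates the quantity in the statement, $\norm{X^{(t+1)} - \overline{X}^{(t+1)}}_F^2 \le \Psi^{(t+1)}$. One step of the algorithm is a stochastic gradient update $X^{(t)} \mapsto X^{(t + \frac12)} = X^{(t)} - \eta_t\,\cG^{(t)}$ with $\cG^{(t)} := \partial F(X^{(t)},\xi^{(t)})$, followed by one call of the blackbox averaging~$h$. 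First I would bound how much the gradient step can inflate $\Psi$: using that subtracting the row-mean amounts to right-multiplication by the orthogonal projector $I - \tfrac1n\1\1^\top$ (so $\overline{X}^{(t+\frac12)} = \overline{X}^{(t)} - \eta_t\,\overline{\cG}^{(t)}$ and $\norm{\cG^{(t)} - \overline{\cG}^{(t)}}_F \le \norm{\cG^{(t)}}_F$, cf.\ Remarks~\ref{rem:frobenious_norm_of_matrix_mult} and~\ref{rem:average}), applying Remark~\ref{remark:norm_of_sum_of_two} with a free parameter $\alpha > 0$ to each of the two terms of $\Psi$ gives
\[
\Psi\bigl(X^{(t+\frac12)}, Y^{(t)}\bigr) \;\le\; (1+\alpha)\,\Psi^{(t)} \;+\; 2\,(1+\alpha^{-1})\,\eta_t^2\,\norm{\cG^{(t)}}_F^2\,.
\]
Then the linear-rate property of Assumption~\ref{assump:avg} (taking first the expectation over the internal randomness of $h$, then over $\xi^{(t)}$ and the past) together with $\E{\norm{\cG^{(t)}}_F^2} \le nG^2$ from Remark~\ref{rem:expectation_of_gradient_squared}, and the choice $\alpha := \tfrac{p}{2(1-p)}$ — which makes $(1-p)(1+\alpha) = 1 - \tfrac p2$ and $2(1-p)(1+\alpha^{-1}) \le \tfrac4p$ — yields the scalar recursion
\[
u_{t+1} \;\le\; \Bigl(1 - \tfrac p2\Bigr)\,u_t \;+\; \tfrac4p\,\eta_t^2\, nG^2\,,\qquad u_t := \E{\Psi^{(t)}}\,.
\]

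Second, I would unroll this recursion from the consensus initialization $u_0 = \Psi^{(0)} = 0$, which gives $u_{t+1} \le \tfrac{4nG^2}{p}\sum_{s=0}^{t}\bigl(1-\tfrac p2\bigr)^{t-s}\eta_s^2$; the core of the proof is to bound this geometrically-weighted sum of the decaying stepsizes $\eta_s = \tfrac{b}{s+a}$ by $\tfrac{10\,\eta_t^2}{p}$. I would split the sum around $s \approx \tfrac{t-a}{2}$. For the recent indices $s \ge \tfrac{t-a}{2}$ one checks $\eta_s \le 2\eta_t$ — this is exactly where the hypothesis $a \ge \tfrac5p$ enters, forcing the stepsize to vary slowly on the mixing timescale $\tfrac1p$ — so they contribute at most $4\eta_t^2\sum_{k\ge0}(1-\tfrac p2)^k = \tfrac{8\eta_t^2}{p}$. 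For the remaining (old) indices the geometric weight $(1-\tfrac p2)^{(t+a)/2} \le e^{-p(t+a)/4}$ overpowers the at-most-quadratic growth $\eta_s^2/\eta_t^2 \le (t+a)^2/a^2$, because $\max_{m\ge a}\tfrac{m^2}{a^2}\,e^{-pm/4} < \tfrac12$ whenever $a \ge \tfrac5p$; hence they contribute at most $\tfrac{\eta_t^2}{p}$. Summing both parts gives $u_{t+1} \le \tfrac{40}{p^2}\,\eta_t^2\, nG^2$, and the lemma follows from $\norm{X^{(t+1)}-\overline{X}^{(t+1)}}_F^2 \le \Psi^{(t+1)} = u_{t+1}$ (understood in expectation over the sampling, which is how it is invoked in the proof of Theorem~\ref{th:sgd_general}).

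The step I expect to be the main obstacle is precisely this control of the geometrically-weighted stepsize sum by $\cO(\eta_t^2/p)$ with a constant small enough to reach $40$ — in particular establishing the elementary but somewhat delicate inequality $\max_{m\ge a}\tfrac{m^2}{a^2}\,e^{-pm/4} < \tfrac12$ under $a \ge \tfrac5p$, and separating off the degenerate regime $t < a$ (where $\eta_s \le 2\eta_t$ for all $s \le t$ and the estimate is only easier). By comparison, the one-step inflation bound, the choice of $\alpha$, and the bookkeeping of the two nested expectations (stochastic-gradient sampling and the internal randomness of $h$) are routine.
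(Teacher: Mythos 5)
Your proposal is correct and follows the same overall strategy as the paper: it tracks the same Lyapunov function $\Psi$, derives the identical one-step recursion $u_{t+1}\le(1-\tfrac p2)u_t+\tfrac4p\eta_t^2nG^2$ from Assumption~\ref{assump:avg} plus the inflation bound for the gradient step (your $\alpha=\tfrac{p}{2(1-p)}$ versus the paper's $\alpha_3=\tfrac2p$ is immaterial), and closes with $\norm{X^{(t+1)}-\overline X^{(t+1)}}_F^2\le\Psi^{(t+1)}$. The only difference is in how the recursion is resolved: the paper proves $u_t\le 20\eta_t^2A/p^2$ by induction (Lemma~\ref{lem:recursion}, via $\eta_t^2(20-8p)\le20\eta_{t+1}^2$), whereas you unroll it and split the geometrically weighted sum of stepsizes; both arguments use $a\ge\tfrac5p$ in the same role and your constants check out.
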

\begin{proof}
	Using linear convergence of the blackbox averaging algorithm as given in Assumption~\ref{assump:avg} we can write for $\Xi := \E{\norm{X^{(t + 1)} - \overline{X}^{(t + 1)}}_F^2} + \E{\norm{X^{(t + 1)} -\hat{X}^{(t + 2)} }_F^2}$,
	\begin{align*}
	\Xi &\leq
	(1-p)\E{\norm{\overline{X}^{(t + \frac{1}{2})} - X^{(t + \frac{1}{2})}}_F^2}  + (1-p)  \E{\norm{\hat{X}^{(t + 1)} - {X}^{(t + \frac{1}{2})}}_F^2} \\
	&= (1-p) \E{\norm{\overline{X}^{(t)} - X^{(t)} + \eta_t\partial F(X^{(t)}, \xi^{(t)})\left(\frac{\1\1^\top}{n} - I\right)}_F^2 }
	\\ &\qquad + (1-p)\E{\norm{\hat{X}^{(t + 1)} - {X}^{(t)} + \eta_t \partial F(X^{(t)}, \xi^{(t)})}_F^2} \\
	& \stackrel{\eqref{eq:norm_of_sum_of_two}}{\leq} (1-p) (1 + \alpha_3^{-1}) \E{\left( \norm{\overline{X}^{(t)} - X^{(t)}}_F^2  + \norm{\hat{X}^{(t + 1)} - {X}^{(t)}}_F^2 \right)}\\
	&\qquad  + (1-p)(1 + \alpha_3)\eta_t^2 \E{\left(\norm{\partial F(X^{(t)}, \xi^{(t)})\left(\frac{\1\1^\top}{n} - I\right)}_F^2 +  \norm{\partial F(X^{(t)}, \xi^{(t)})}_F^2\right)}\\
	& \leq (1-p)\left((1 + \alpha_3^{-1})\E{\left( \norm{\overline{X}^{(t)} - X^{(t)}}_F^2  + \norm{\hat{X}^{(t + 1)} - {X}^{(t)}}_F^2\right)} + 2n(1 + \alpha_3)\eta_t^2G^2\right) \\
	&\stackrel{\alpha_3 = \frac{2}{p}}{\leq} \left(1-\frac{p}{2}\right)\E{\left(\norm{\overline{X}^{(t)} - X^{(t)}}_F^2  + \norm{\hat{X}^{(t + 1)} - {X}^{(t)}}_F^2\right)} + \frac{4n}{p}\eta_t^2G^2 \,.
	\end{align*}
	The statement now follows from Lemma~\ref{lem:recursion} and the inequality
	\begin{align*}
	\E{\norm{X^{(t + 1)} - \overline{X}^{(t + 1)}}_F^2} &\leq \Xi :=\E{\norm{X^{(t + 1)} -\hat{X}^{(t + 2)} }_F^2} + \E{\norm{X^{(t + 1)} - \overline{X}^{(t + 1)}}_F^2}\,. \qedhere
	\end{align*}
\end{proof}

\begin{lemma}\label{lem:recursion} %
	Let $\{r_t\}_{t \geq 0}$ denote a sequence of positive real values satisfying $r_0 = 0$ and
	\begin{align*}
	r_{t + 1} &\leq \left(1- \frac{p}{2}\right) e_t + \frac{2}{p} \eta_t^2 A\,, & &\forall t \geq 0\,,
	\intertext{for a parameter $p > 0$, stepsize $\eta_t = \frac{b}{t + a}$, for parameters $a \geq \frac{5}{p}$ and with arbitrary $b > 0$. Then 
	$r_{t}$ is bounded as }
	r_t &\leq 20 \eta_t^2 \dfrac{1}{p^2}A\,, & &\forall t \geq 0\,.
	\end{align*}
\end{lemma}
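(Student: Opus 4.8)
The natural approach is a direct induction on $t$ with the ansatz $r_t \le 20\,\eta_t^2 p^{-2} A$ (reading the recursion, whose right-hand side should involve $r_t$ rather than the undefined $e_t$, as $r_{t+1} \le (1-\tfrac p2)r_t + \tfrac 2p\eta_t^2 A$). The base case $t=0$ is immediate since $r_0 = 0$ while the claimed upper bound is nonnegative.

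For the inductive step I would assume $r_t \le 20\,\eta_t^2 p^{-2} A$ and substitute into the recursion to obtain
\begin{align*}
r_{t+1} \le \Bigl(1-\tfrac p2\Bigr)\frac{20}{p^2}\eta_t^2 A + \frac 2p \eta_t^2 A = \Bigl(\frac{20}{p^2} - \frac 8p\Bigr)\eta_t^2 A\,.
\end{align*}
Hence it suffices to verify $\bigl(\tfrac{20}{p^2} - \tfrac 8p\bigr)\eta_t^2 \le \tfrac{20}{p^2}\eta_{t+1}^2$, i.e. $\eta_{t+1}^2/\eta_t^2 \ge 1 - \tfrac{2p}{5}$. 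Since $\eta_t = b/(t+a)$, the ratio equals $\bigl(\tfrac{t+a}{t+a+1}\bigr)^2 = \bigl(1-\tfrac{1}{t+a+1}\bigr)^2 \ge 1 - \tfrac{2}{t+a+1} \ge 1 - \tfrac 2a$, and the hypothesis $a \ge 5/p$ gives $\tfrac 2a \le \tfrac{2p}{5}$, which closes the induction. Along the way one observes that $p \le 1$ (part of Assumption~\ref{assump:avg}) makes $\tfrac{20}{p^2} - \tfrac 8p \ge 0$ so all quantities stay positive, and that $b$ is irrelevant because it cancels in the ratio $\eta_{t+1}^2/\eta_t^2$.

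I do not expect any genuine obstacle: the argument is entirely elementary. The one point needing a little care is choosing the constant in the ansatz large enough — the value $20$ works — so that the slack left by the contraction factor $1-p/2$ simultaneously absorbs the additive term $\tfrac 2p\eta_t^2 A$ and the mild decrease from $\eta_t^2$ to $\eta_{t+1}^2$; once that constant is fixed, the inequalities above are routine.
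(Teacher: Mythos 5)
Your proposal is correct and follows essentially the same route as the paper: induction with the ansatz $r_t \le 20\,\eta_t^2 p^{-2}A$, reducing the inductive step to the inequality $\eta_t^2(20 - 8p) \le 20\,\eta_{t+1}^2 \cdot p^0$ via $a \ge 5/p$ (the paper verifies $(a+t+1)^2(1-\tfrac{2}{a}) \le (a+t)^2$ by direct expansion where you use $(1-\tfrac{1}{t+a+1})^2 \ge 1-\tfrac{2}{t+a+1}$, a cosmetic difference). You also correctly identified that the $e_t$ in the recursion is a typo for $r_t$.
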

\begin{proof}
	We will proceed the proof by induction. For $t = 0$ the statement is true by assumption on $r_0 = 0$.	
	Suppose that for timestep $t$ the statement is also true, then for timestep $t + 1$	
	\begin{align*}
	e_{t + 1} &\leq \left(1- \frac{p}{2}\right) e_t + \frac{2}{p} \eta_t^2 A  \leq \left(1- \frac{p}{2}\right) 20 \eta_t^2 \dfrac{1}{p^2}A + \frac{2}{p} \eta_t^2 A = A \eta_t^2 \frac{1}{p^2} \left(-8p + 20\right) \,.
	\end{align*}
	Now we show $\eta_t^2 \left(- 8p+ 20 \right) \leq 20 \eta_{t+1}^2$ which proves the claim. By assumption $p \geq \frac{5}{a}$, hence
	\begin{align*}
	\eta_t^2 \left(- 8p+ 20 \right) &\leq 20 \eta_t^2 \left(1-\frac{2}{a}\right) \leq 20 \eta_{t+1}^2\,,
	\end{align*}
	where the second inequality follows from %
	\begin{align*}
	(a+t+1)^2  \left(1-\frac{2}{a}\right) &= (a+t)^2 + 2(a+t)+1 - \left(2\frac{(a+t)^2}{a} + 4\frac{(a+t)}{a} + \frac{2}{a}\right) \\
	&\leq (a+t)^2 + 2(a+t)+1 - \left(2(a+t) + 4 \right) \leq (a+t)^2\,. \qedhere
	\end{align*}
\end{proof}

\begin{lemma}[\citet{Stich2018:LocalSGD}]\label{lem:from_local_sgd}
	Let $\{a_t\}_{t \geq 0}$, $a_t \geq 0$, $\{e_t\}_{t \geq 0}$, $e_t \geq 0$ be sequences satisfying 
	\begin{align*}
	a_{t + 1} \leq (1 - \mu \eta_t) a_t- \eta_t e_t A+ \eta_t^2B + \eta_t^3C\,,
	\end{align*}
	for stepsizes $\eta_t = \frac{4}{\mu(a + t)}$ and constants $A > 0, B, C\geq 0$, $\mu > 0$, $a > 1$. Then 
	\begin{align*}
	\dfrac{A}{S_T} \sum_{t = 0}^{T - 1} w_t e_t \leq \dfrac{\mu a^3}{4S_T} a_0 + \dfrac{2T(T + 2a)}{\mu S_T} B + \dfrac{16T}{\mu^2 S_T} C\,,
	\end{align*}
	for $w_t = (a + t)^2$ and $S_T := \sum_{t = 0}^{T  - 1} w_t = \frac{T}{6} (2T^2 + 6aT -3T + 6a^2 -6a + 1) \geq \frac{1}{3}T^3$.
\end{lemma}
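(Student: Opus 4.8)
The plan is to convert the one-step recursion into a weighted telescoping sum, the standard device for analyzing SGD with $\eta_t = \Theta(1/t)$ stepsizes. First I would isolate the error term in the hypothesis, writing
\begin{align*}
\eta_t A e_t \le (1 - \mu\eta_t)\, a_t - a_{t+1} + \eta_t^2 B + \eta_t^3 C\,.
\end{align*}
Dividing by $\eta_t$, multiplying by $w_t = (a+t)^2$, and introducing the abbreviation $c_t := w_t/\eta_t = \tfrac{\mu(a+t)^3}{4}$, this turns into
\begin{align*}
A w_t e_t \le c_t(1 - \mu\eta_t)\, a_t - c_t\, a_{t+1} + w_t \eta_t B + w_t \eta_t^2 C\,.
\end{align*}

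The crux is to make the coefficient of $a_t$ fold into the telescope, i.e. to show $c_t(1-\mu\eta_t) \le c_{t-1}$. Substituting $\mu\eta_t = \tfrac{4}{a+t}$, this reduces to the purely algebraic claim $(a+t)^2(a+t-4) \le (a+t-1)^3$; writing $x = a+t$ and expanding, the right side minus the left equals $x^2 + 3x - 1$, which is nonnegative for every $x \ge 1$, hence for all $t \ge 0$ since $a > 1$. This single inequality is the one load-bearing step, so I expect it to be the main point to pin down exactly; the rest is bookkeeping. Using $a_t \ge 0$ together with it, I can replace $c_t(1-\mu\eta_t)$ by $c_{t-1}$, sum over $t = 0,\dots,T-1$, and telescope. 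All interior $a_t$ cancel, leaving the boundary contribution $c_{-1}a_0 - c_{T-1}a_T \le c_{-1}a_0 \le \tfrac{\mu a^3}{4}\,a_0$, where I drop $c_{T-1}a_T \ge 0$ and bound $c_{-1} = \tfrac{\mu(a-1)^3}{4} \le \tfrac{\mu a^3}{4}$.

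It then remains to evaluate the two accumulated error sums. A direct computation gives $w_t \eta_t = \tfrac{4(a+t)}{\mu}$, so $\sum_{t=0}^{T-1} w_t \eta_t = \tfrac{2T(2a+T-1)}{\mu} \le \tfrac{2T(T+2a)}{\mu}$, while $w_t \eta_t^2 = \tfrac{16}{\mu^2}$ is constant in $t$, giving $\sum_{t=0}^{T-1} w_t \eta_t^2 = \tfrac{16T}{\mu^2}$. Combining the boundary term with these sums and dividing by $S_T$ yields exactly
\begin{align*}
\frac{A}{S_T}\sum_{t=0}^{T-1} w_t e_t \le \frac{\mu a^3}{4 S_T}\, a_0 + \frac{2T(T+2a)}{\mu S_T}\, B + \frac{16T}{\mu^2 S_T}\, C\,.
\end{align*}
Finally I would record the closed form $S_T = \sum_{t=0}^{T-1}(a+t)^2 = \tfrac{T}{6}(2T^2 + 6aT - 3T + 6a^2 - 6a + 1)$ via the sum-of-squares formula, and observe $S_T \ge \tfrac{1}{3}T^3$ because the residual $6aT - 3T + 6a^2 - 6a + 1 = 3T(2a-1) + 6a(a-1) + 1$ is nonnegative for $a \ge 1$.
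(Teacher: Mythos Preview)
Your proof is correct and follows exactly the standard weighted-telescoping technique used for such recursions: multiply through by $w_t/\eta_t$, verify the one key inequality $c_t(1-\mu\eta_t)\le c_{t-1}$ (which you reduce cleanly to $x^2+3x-1\ge 0$ for $x=a+t\ge 1$), telescope, and bound the residual sums $\sum w_t\eta_t$ and $\sum w_t\eta_t^2$ directly. The paper itself does not give a proof of this lemma but simply cites \citet{Stich2018:LocalSGD}; your argument is precisely the approach taken there, so there is nothing to contrast.
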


\begin{proof}[Proof of Theorem~\ref{th:sgd_general}] Substituting the result of Lemma~\ref{lem:last_term_bound} into the bound provided in Lemma~\ref{lem:main_recursion_for_decentralized_sgd} (here we use $a \geq \frac{5}{p}$) we get that 
\begin{align*}
\E{\|\overline{\xx}^{(t + 1)} - \xx^\star\|}^2 \leq \left(1 - \dfrac{\eta_t\mu}{2}\right) \E{\norm{\overline{\xx}^{(t)} - \xx^\star}}^2  - 2\eta_t \left(1 - 2L\eta_t\right)e_t + \eta_t^2\dfrac{\overline{\sigma}^2}{n} (2\eta_t L^2 + L + \mu)40 \eta_t^3 \dfrac{1}{p^2} G^2,
\end{align*}
For $\eta_t \leq \frac{1}{4L}$ (this holds, as $a \leq 16\kappa$) it holds $2L\eta_t - 1 \leq -\frac{1}{2}$ and $(2\eta_t L^2 + L + \mu) < (2L + \mu)$, hence 

\begin{align*}
\E{\|\overline{\xx}^{(t + 1)} - \xx^\star\|}^2 \leq \left(1 - \dfrac{\eta_t\mu}{2}\right) \E{\norm{\overline{\xx}^{(t)} - \xx^\star}}^2 + \dfrac{\eta_t^2\overline{\sigma}^2}{n} - \eta_t e_t +  (2L + \mu)40 \eta_t^3 \dfrac{1}{p^2} G^2\,.
\end{align*}
	From Lemma~\ref{lem:from_local_sgd}  we get
	\begin{align*}
	\dfrac{1}{S_T} \sum_{t = 0}^{T - 1} w_t e_t \leq \dfrac{\mu a^3}{8S_T} {\norm{\overline{\xx}^{(0)} - \xx^\star}}^2 + \dfrac{4T(T + 2a)}{\mu S_T}\frac{\overline{\sigma}^2}{n} 
	+ \dfrac{64T}{\mu^2 S_T} (2L + \mu) 40 \dfrac{1}{p^2} G^2,
	\end{align*}
	for weights $w_t = (a + t)^2$ and $S_T := \sum_{t = 0}^{T  - 1} w_t = \frac{T}{6} (2T^2 + 6aT -3T + 6a^2 -6a + 1) \geq \frac{1}{3}T^3$, where $p$ is convergence rate of the averaging scheme. The theorem statement follows from convexity of $f$.
\end{proof}

\section{Efficient Implementation of the Algorithms}\label{sect:efficient_implementation}

In this section we present memory-efficient implementations of \algcons and \algopt algorithms, which require each node to store only three vectors: $\xx$, $\hat{\xx}_i$ and $\mathbf{s}_i = \sum_{i = 1}^n w_{ij} \hat{\xx}_j$. 

\begin{algorithm}[h]
	\renewcommand{\algorithmiccomment}[1]{#1}
	\caption{Memory-efficient \algcons %
	}\label{alg:consensus_mem}
	\begin{algorithmic}[1]
		\INPUT{:  Initial values $\xx_i^{(0)} \in \R^d$ on each node $i \in [n]$, stepsize $\gamma$, communication graph $G = ([n], E)$ and mixing matrix $W$, initialize $\hat{\xx}_i^{(0)} := \0$ , $\mathbf{s}_i^{(0)} = 0$, $\forall i$}\\
		\FOR[{{\it in parallel for all workers $i \in [n]$}}]{$t$ \textbf{in} $0\dots T-1$}
		\STATE $\qq_i^{(t)} := Q(\xx_i^{(t + 1)} - \hat{\xx}_i^{(t)})$
		\FOR{neighbors $j \colon \{i,j\} \in E$ (including $\{i\} \in E$)}
		\STATE Send $\qq_i^{(t)}$ and receive $\qq_j^{(t)}$ 
		\ENDFOR
		\vspace{1mm}
		\STATE $\hat{\xx}_i^{(t + 1)} = \hat{\xx}_i^{(t)} + \qq_i^{(t)}$
		\STATE $\mathbf{s}_i^{(t + 1)} := \mathbf{s}_i^{(t)} + \sum_{i = 1}^n w_{ij} \qq_j^{(t)}$%
		\STATE $\xx_i^{(t + 1)} := \xx_i^{(t)} + \gamma\left(\mathbf{s}_i^{(t + 1)}  - \hat{\xx}_i^{(t + 1)}\right)$
		\ENDFOR
	\end{algorithmic}
\end{algorithm}

\begin{algorithm}[H]
	\renewcommand{\algorithmiccomment}[1]{#1}
	\caption{Memory-efficient \algopt %
	}\label{alg:quantized_decentralized_sgd_mem}
	\begin{algorithmic}[1]
		\INPUT{:  Initial values $\xx_i^{(0)} \in \R^d$ on each node $i \in [n]$, consensus stepsize $\gamma$, communication graph $G = ([n], E)$ and mixing matrix $W$, initialize $\hat{\xx}_i^{(0)} := \0$ $\forall i$}\\
		\FOR[{{\it in parallel for all workers $i \in [n]$}}]{$t$\textbf{ in} $0\dots T-1$}
		\STATE Sample $\xi_i^{(t)}$, compute gradient $\gg_i^{(t)} \!:= \nabla F_i(\xx_i^{(t)}\!, \xi_i^{(t)})$\!
		\STATE $\xx_i^{(t + \frac{1}{2})} := \xx_i^{(t)} - \eta_t \gg_i^{(t)}$  
		\STATE $\qq_i^{(t)} := Q(\xx_i^{(t + \frac{1}{2})} - \hat{\xx}_i^{(t)})$ 
		\FOR{neighbors $j \colon \{i,j\} \in E$ (including $\{i\} \in E$)} 
		\STATE Send $\qq_i^{(t)}$ and receive $\qq_j^{(t)}$ 
		\ENDFOR
		\vspace{1mm}
		\STATE $\hat{\xx}^{(t+1)}_i := \qq^{(t)}_i + \hat{\xx}_i^{(t)}$
		\STATE $\mathbf{s}_i^{(t + 1)} := \mathbf{s}_i^{(t)} + \sum_{i = 1}^n w_{ij} \qq_j^{(t)}$%
		\STATE $\xx_i^{(t + 1)} := \xx_i^{(t + \frac{1}{2})} + \gamma \left(\mathbf{s}_i^{(t + 1)} - \hat{\xx}^{(t+1)}_i\right)$
		\ENDFOR 
	\end{algorithmic}
\end{algorithm}

\section{Parameters Search Details of SGD Experiments}\label{sect:parameters_search_details}

For each optimization problem, we first tuned $\gamma$ on a separate average consensus problem with the same configuration (topology, number of nodes, quantization, dimension). Parameters $a$, $b$ where later tuned separately for each algorithm by running the algorithm for 10 epochs. To find $a$ and $b$ we performed grid search independently for each algorithm and each quantization function. For values of $a$ we used logarithmic grid of powers of $10$. We searched values of $b$ in the set $\{1, 0.1 d, d ,10 d, 100 d\}$.

\section{Additional Experiments}\label{sect:additional_experiments}

\begin{table}[h!]
	\centering
		\begin{tabular}{l|c|c|c||c|c|c}
			&  \multicolumn{3}{c||}{Epsilon} &\multicolumn{3}{c}{RCV1}\\\hline
			experiment & $a$ &$\tau$ & $\gamma$ & $a$ &$\tau$ & $\gamma$ \\ \hline\hline
			\textsc{Plain} & 0.1 & $d$ & - &      1 & 1 & - \\
			\textsc{Choco}, $(\operatorname{qsgd}_{16})$ & 0.1 & $d$ &   0.34 &       1  & 1  &   0.078\\
			\textsc{Choco}, ($\operatorname{rand}_{1\%}$)  & 0.1 & $d$ & 0.01 &    1&    0.1$d$& 0.016\\
			\textsc{Choco}, ($\operatorname{top}_{1\%}$)  & 0.1 &$d$ &  0.04  &   1 & 1 & 0.04\\
			\textsc{DCD}, ($\operatorname{rand}_{1\%}$)  & $10^{-15}$ & $d$ & - &    $10^{-15}$  & $d$ & - \\
			\textsc{DCD}, $(\operatorname{qsgd}_{16})$ & 0.01 & $d$ & - &  $ 10^{-15}$   & $d$& - \\
			\textsc{ECD}, ($\operatorname{rand}_{1\%}$)  & $10^{-6}$ & $d$ & - &    $10^{-4} $ & 10$d$& - \\
			\textsc{ECD}, ($\operatorname{qsgd}_{16}$)  & $10^{-6}$ & $d$ & - &   $ 10^{-15}$  & $d$& -
		\end{tabular}
	\caption{Values for initial learning rate and consensus learning rate used in SGD experiments Fig. \ref{fig:sgd_random_20}, \ref{fig:sgd_qsgd}. Parameter $\gamma$ found separately by tuning average consensus with the same configuration (topology, number of nodes, quantization, dimension). Parameters $a$, $\tau$ found by tuning. \textsc{ECD}, \textsc{DCD} stepsizes are small because it diverge for larger choices. }
	\label{tab:learning_rates_sgd_random}
\end{table}

\setlength{\smallfigwidth}{0.32\textwidth}
\begin{figure*}[h!]
	\centering
		\centering
		\includegraphics[width=\smallfigwidth]{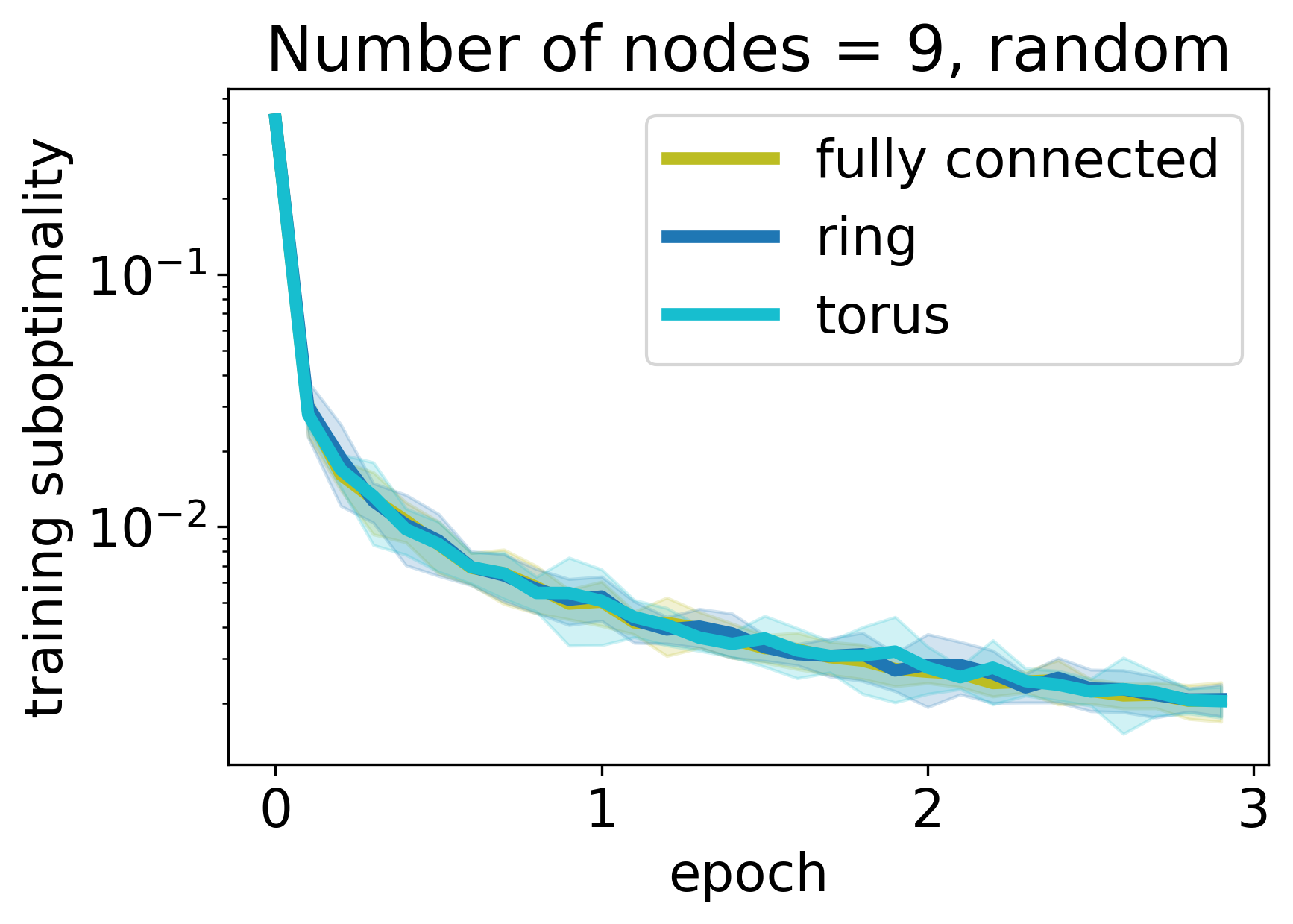}
		\hfill
		\includegraphics[width=\smallfigwidth]{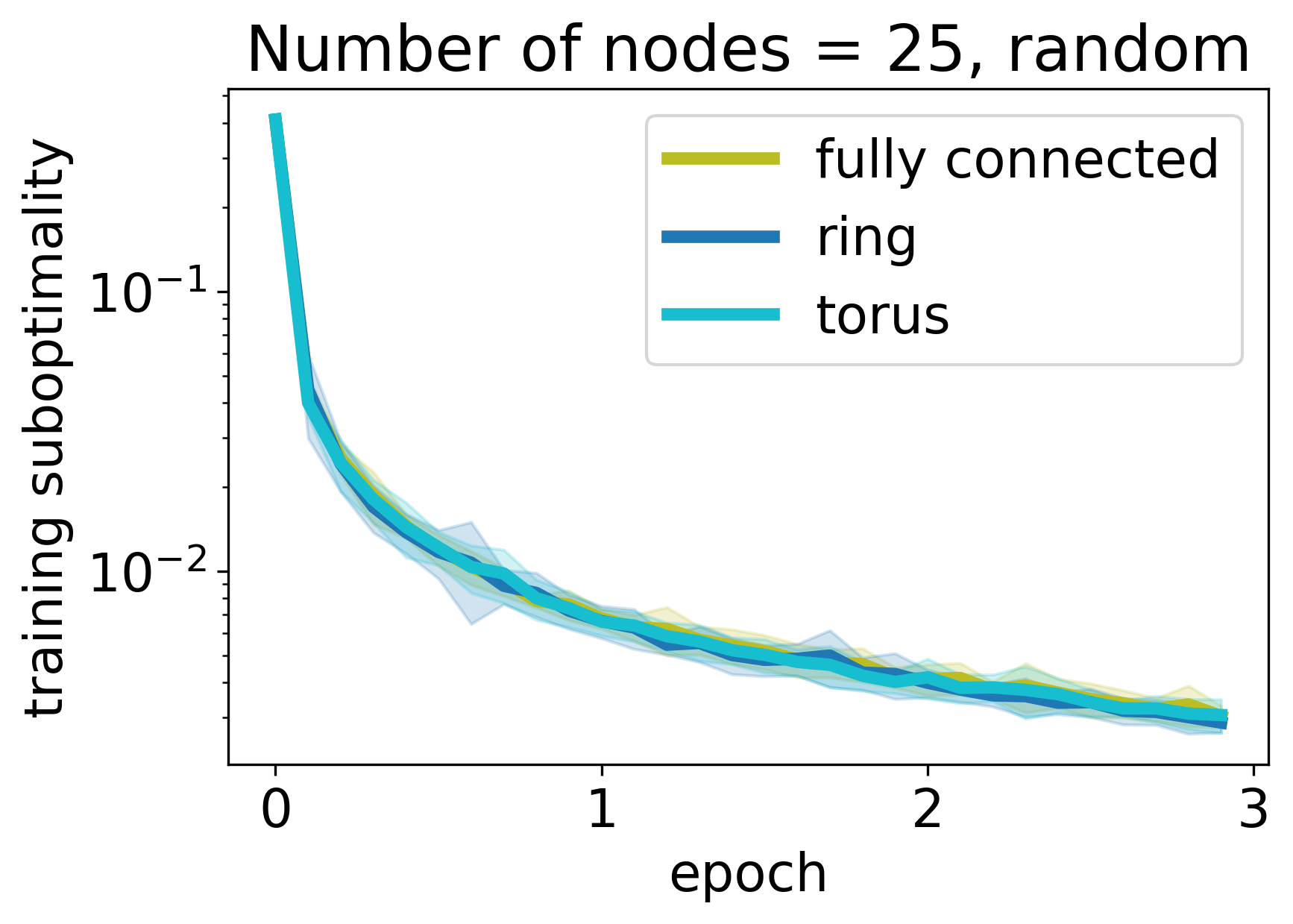}
		\hfill
		\includegraphics[width=\smallfigwidth]{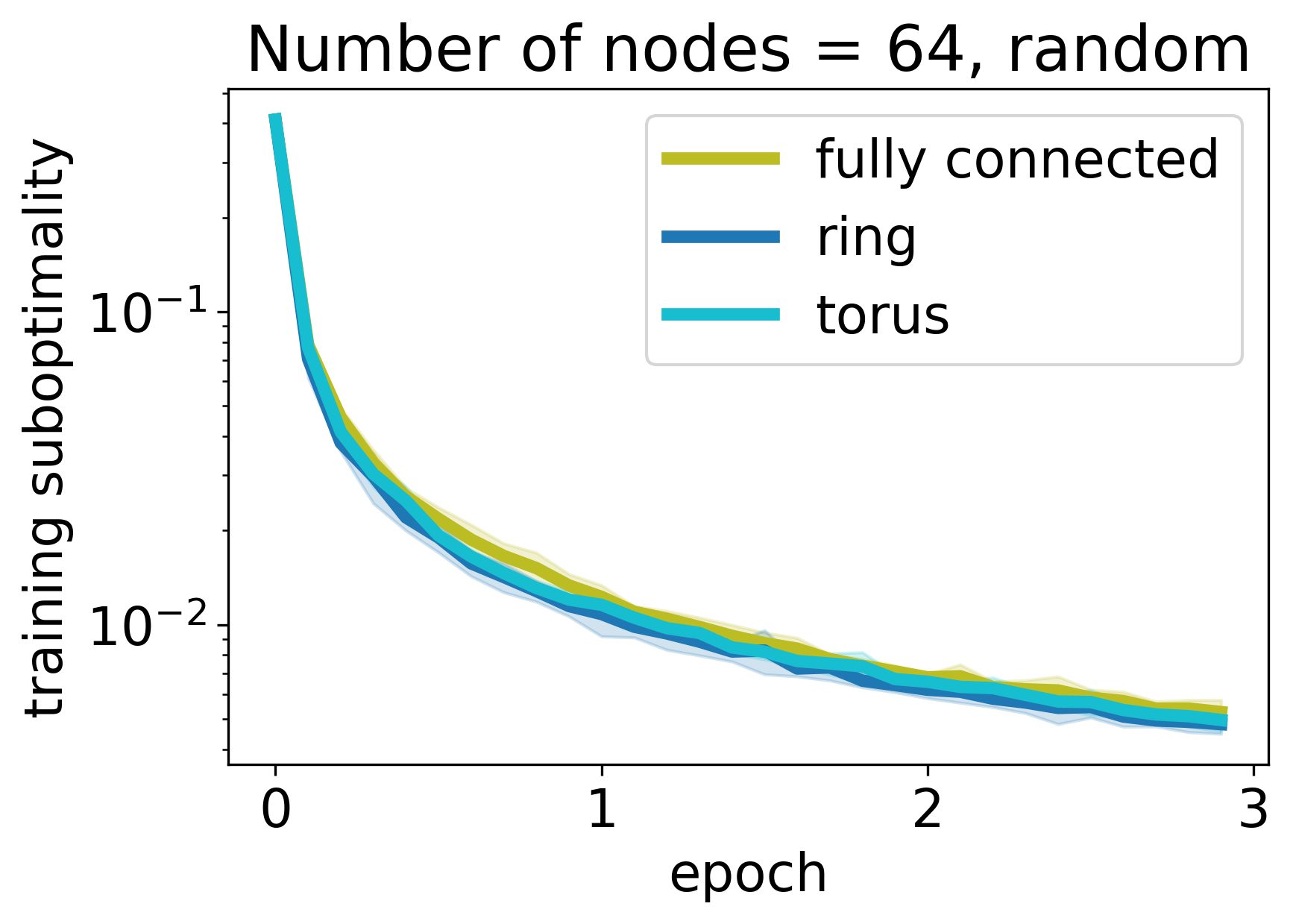}
		\caption{Performance of Algorithm~\ref{alg:all-to-all} on ring, torus and fully connected topologies for $n\in \{9,25,64\}$ nodes. \emph{Randomly shuffled} data between workers}%
		\label{fig:topologies_random_all}
\end{figure*}

\begin{figure}[h!]
	\centering
	\begin{minipage}{0.8\textwidth}
		\setlength{\smallfigwidth}{0.48\linewidth}
		\vspace{4mm}
		\includegraphics[width=\smallfigwidth]{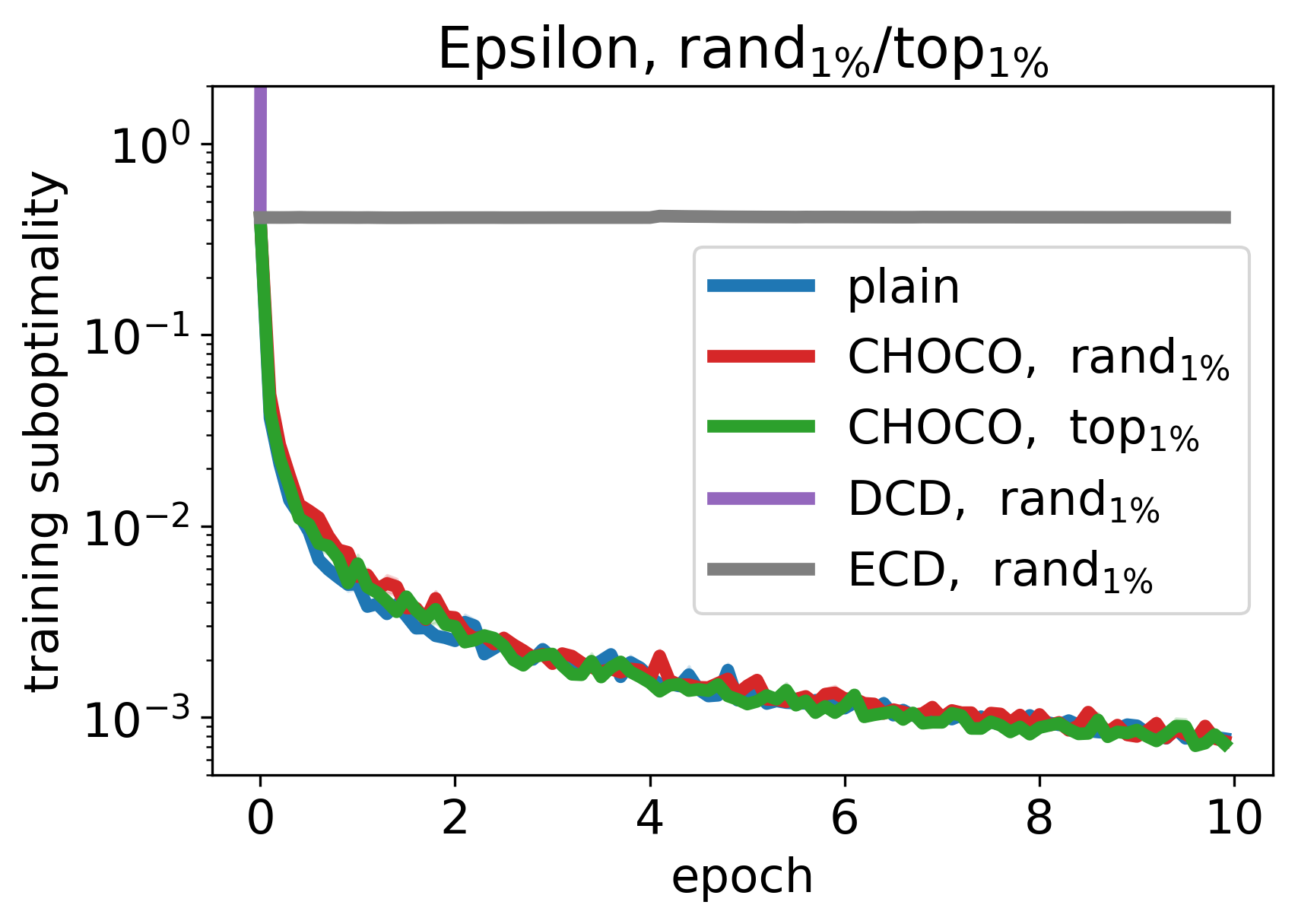}
		\hfill
		\includegraphics[width=\smallfigwidth]{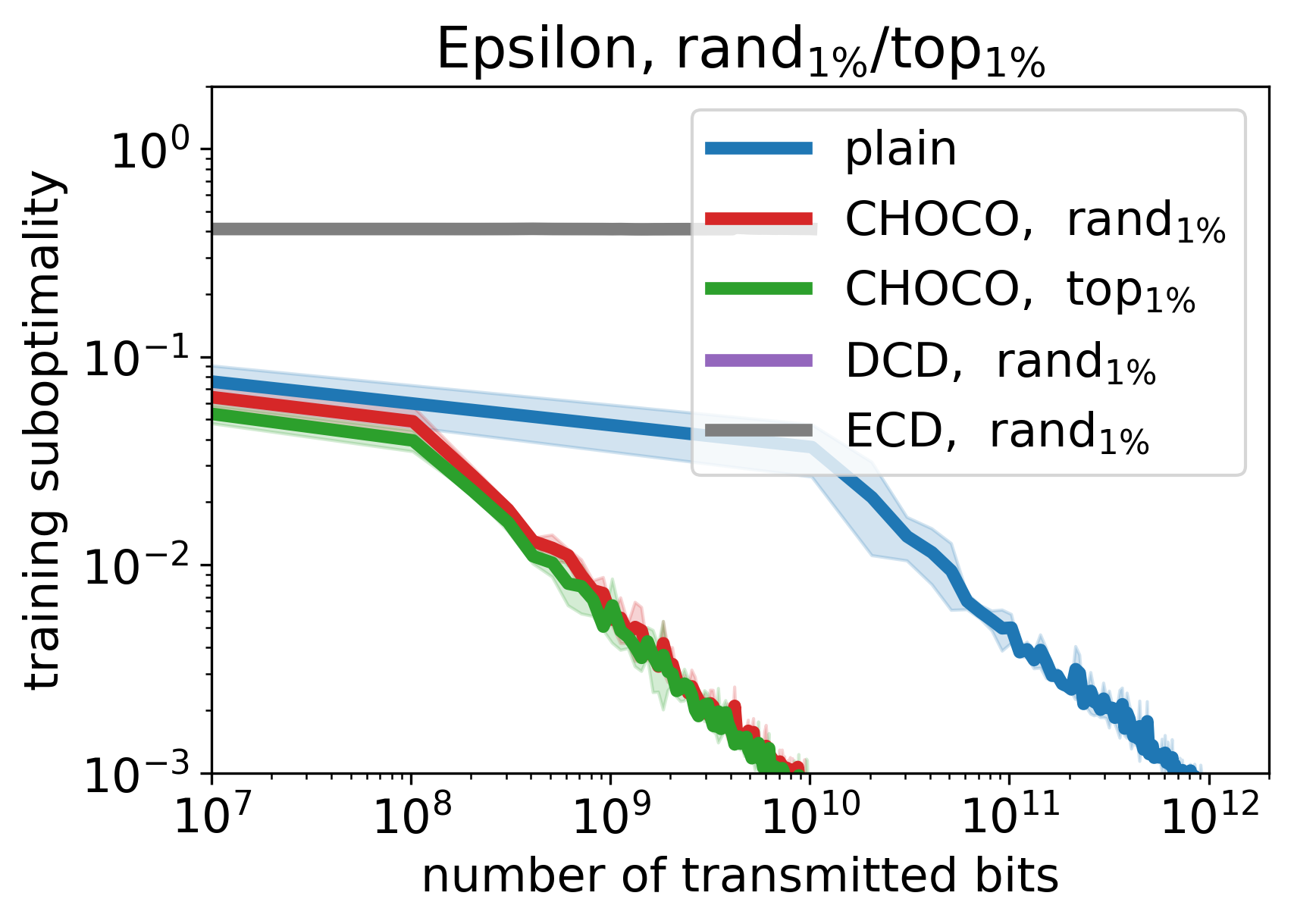}\\
		\includegraphics[width=\smallfigwidth]{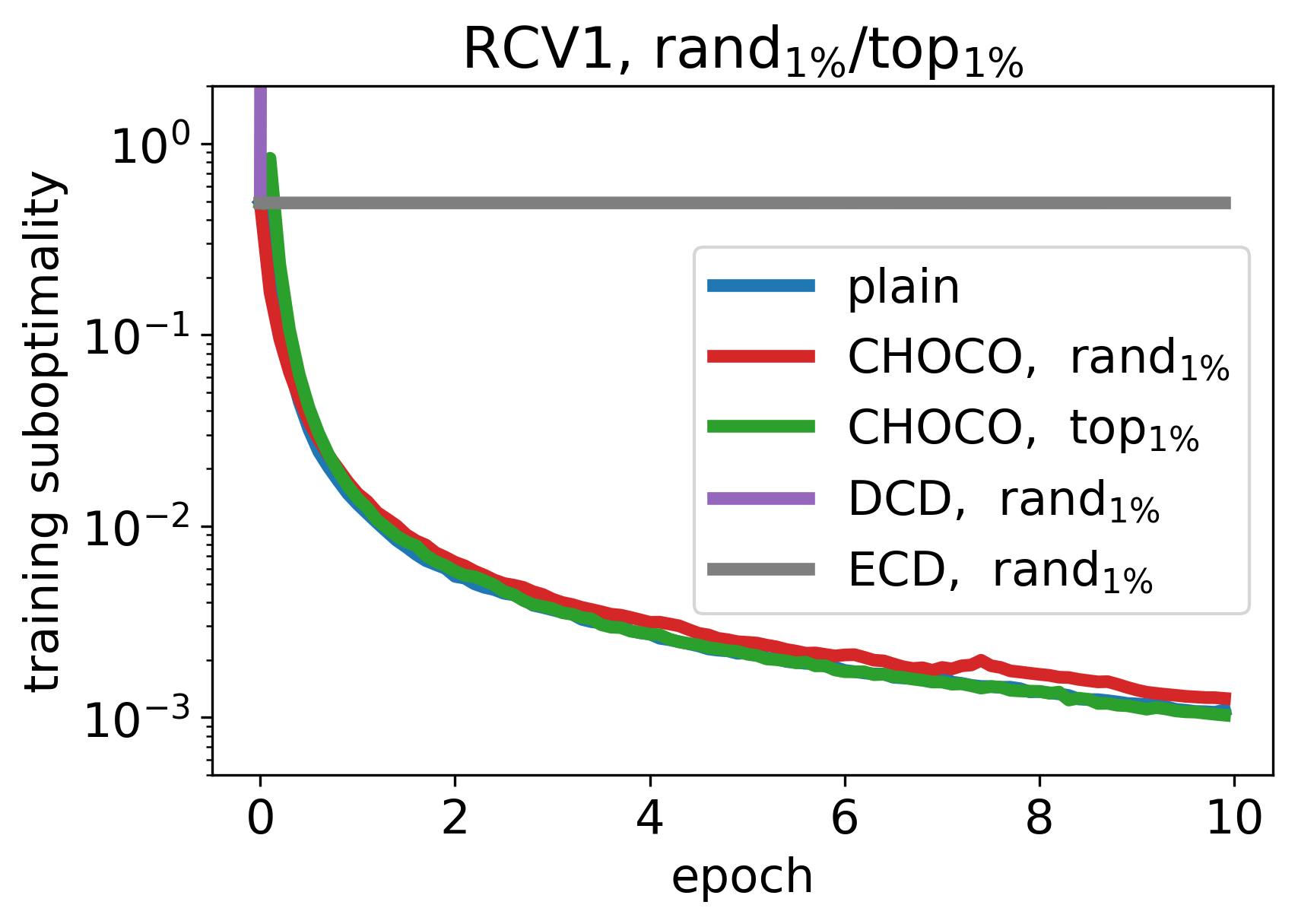}
		\hfill
		\includegraphics[width=\smallfigwidth]{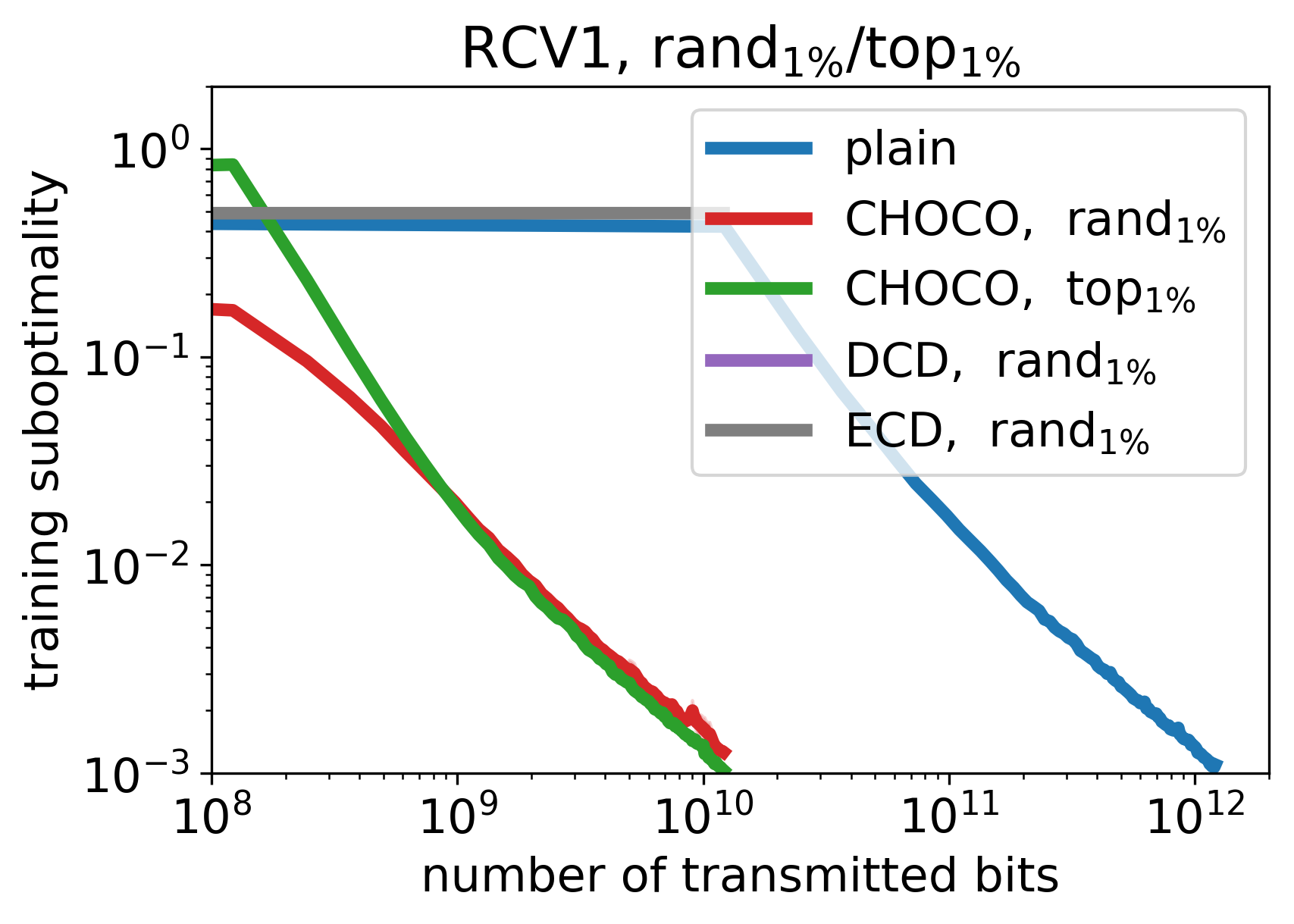}
		\caption{Comparison of Algorithm~\ref{alg:all-to-all} (plain), ECD-SGD, DCD-SGD and \algopt with ($\operatorname{rand}_{1\%}$) sparsification (in addition ($\operatorname{top}_{1\%}$) for \algopt), for $epsilon$ (top) and $rcv1$ (bottom) in terms of iterations (left) and communication cost (right). \emph{Randomly shuffled} data between workers.}
		\label{fig:sgd_random_20_random}
	\end{minipage}
\end{figure}
\begin{figure}
	\centering
	\begin{minipage}{0.8\textwidth}
		\setlength{\smallfigwidth}{0.48\linewidth}
		\includegraphics[width=\smallfigwidth]{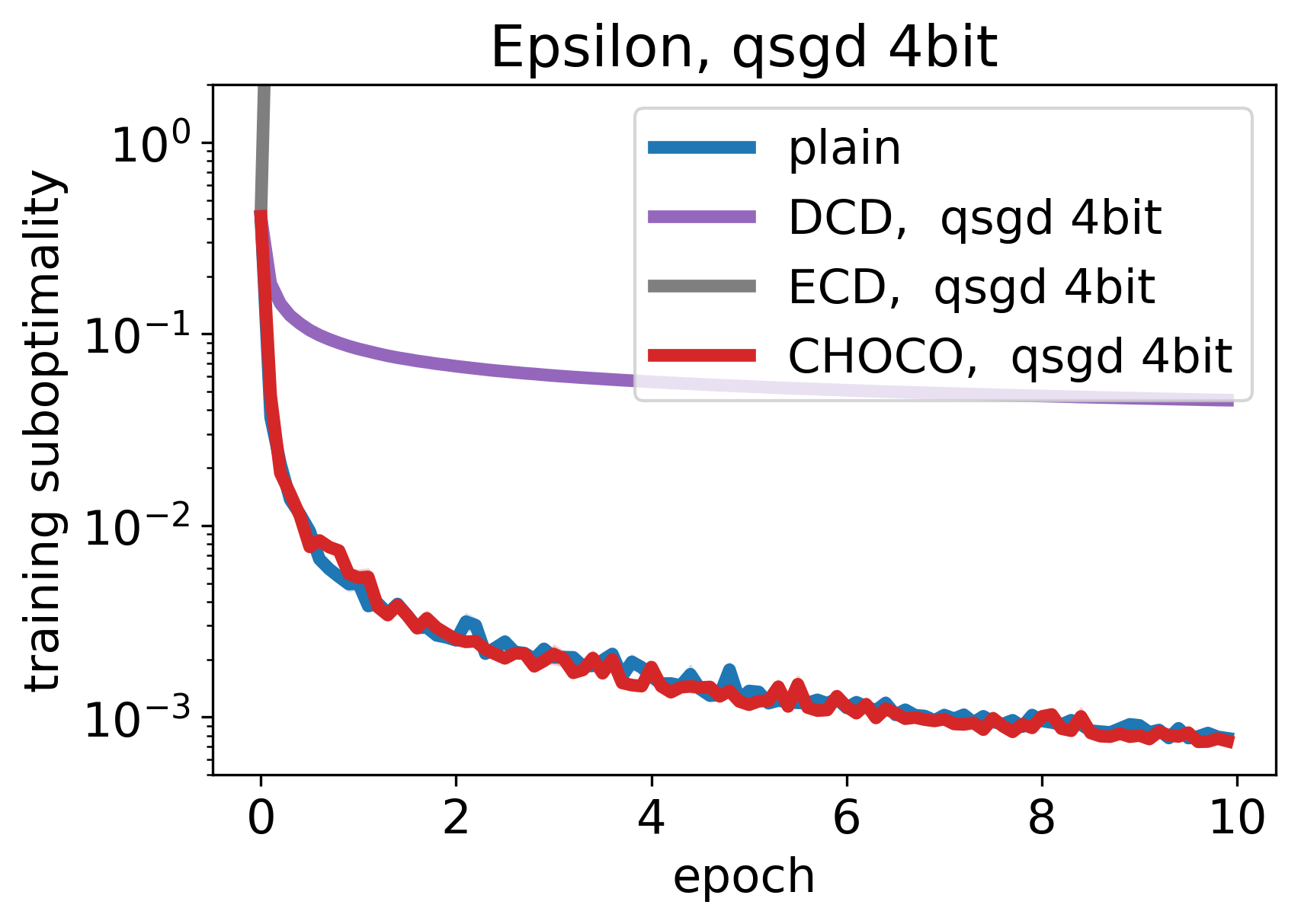}
		\hfill
		\includegraphics[width=\smallfigwidth]{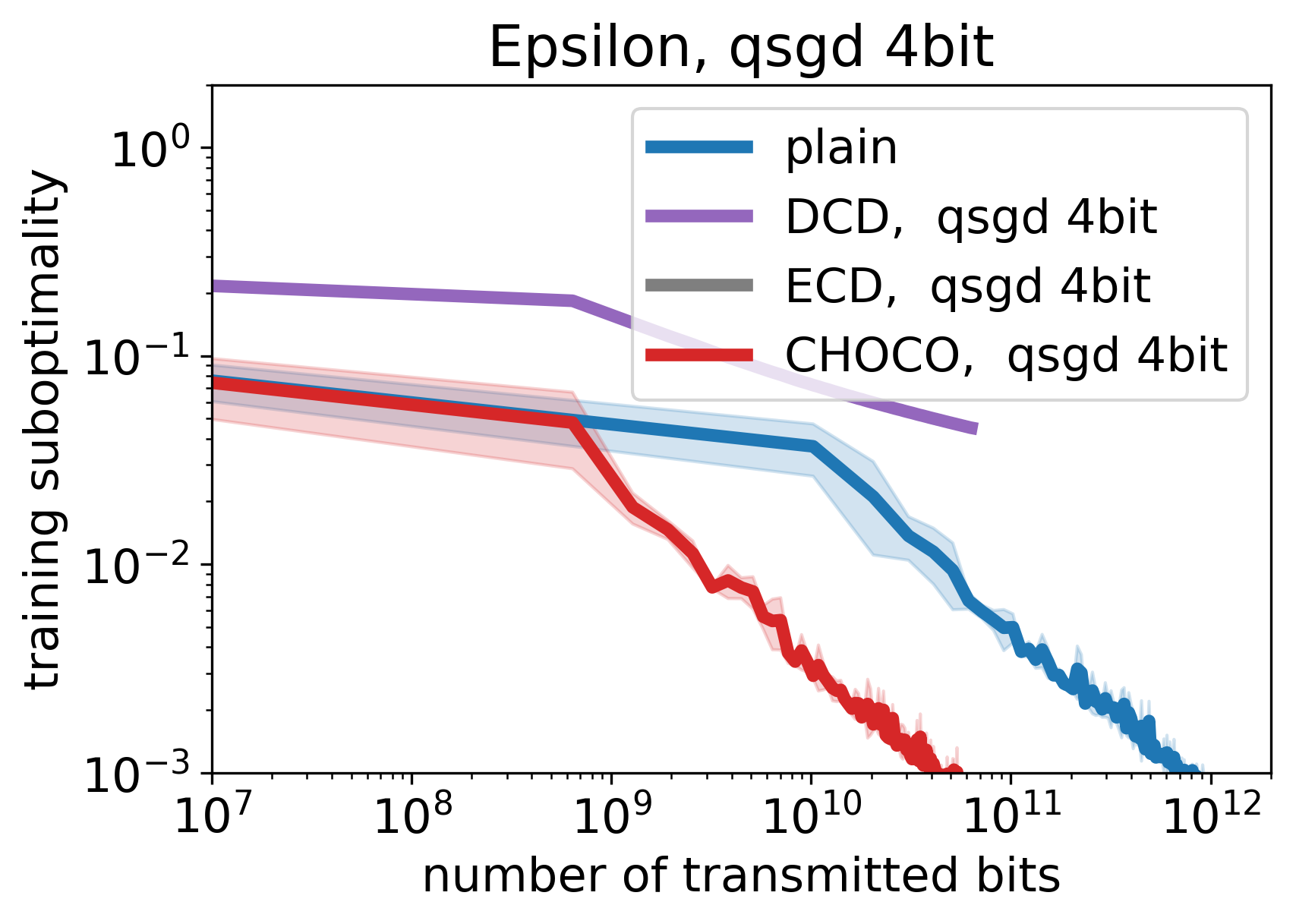}\\
		\includegraphics[width=\smallfigwidth]{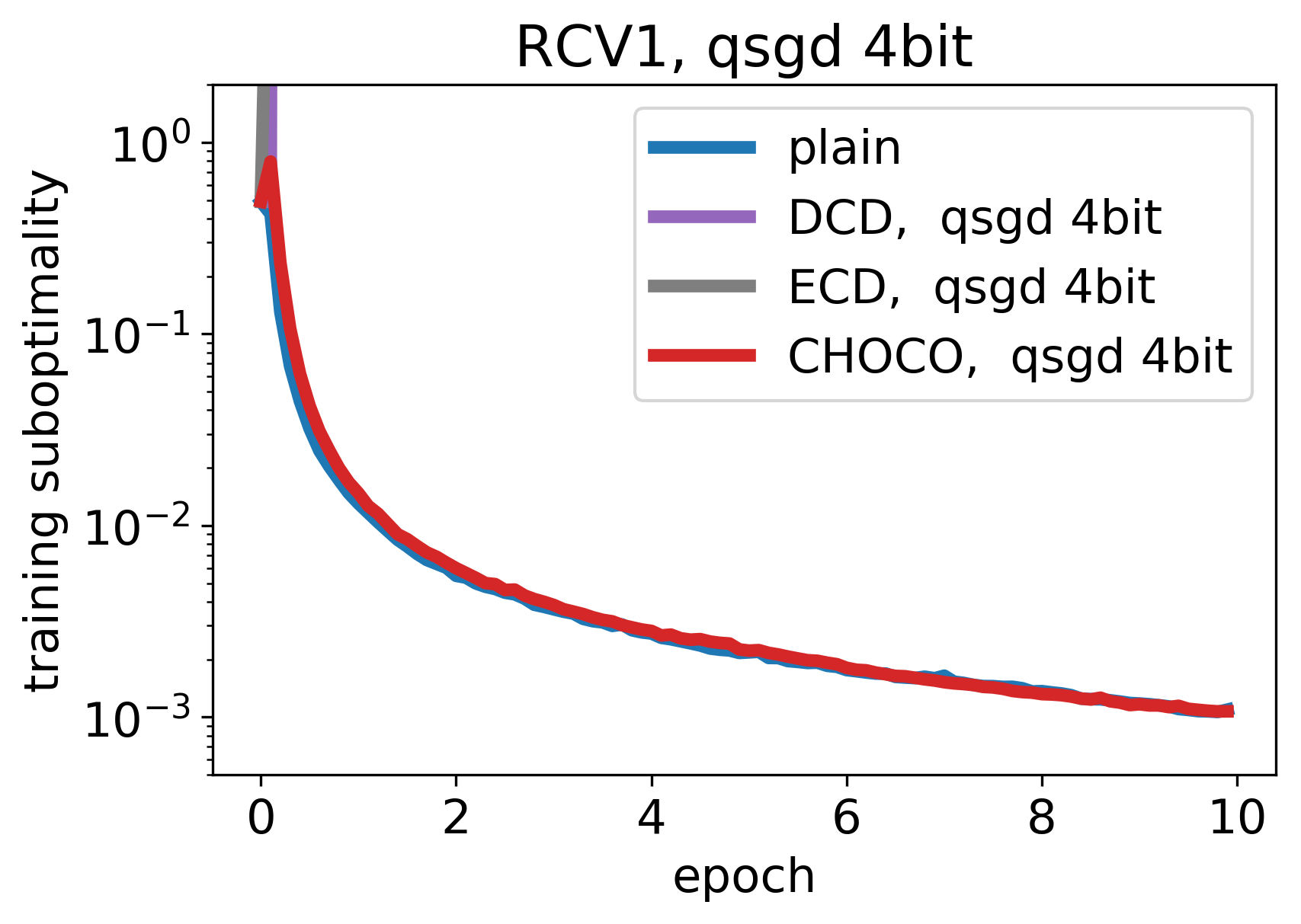}
		\hfill
		\includegraphics[width=\smallfigwidth]{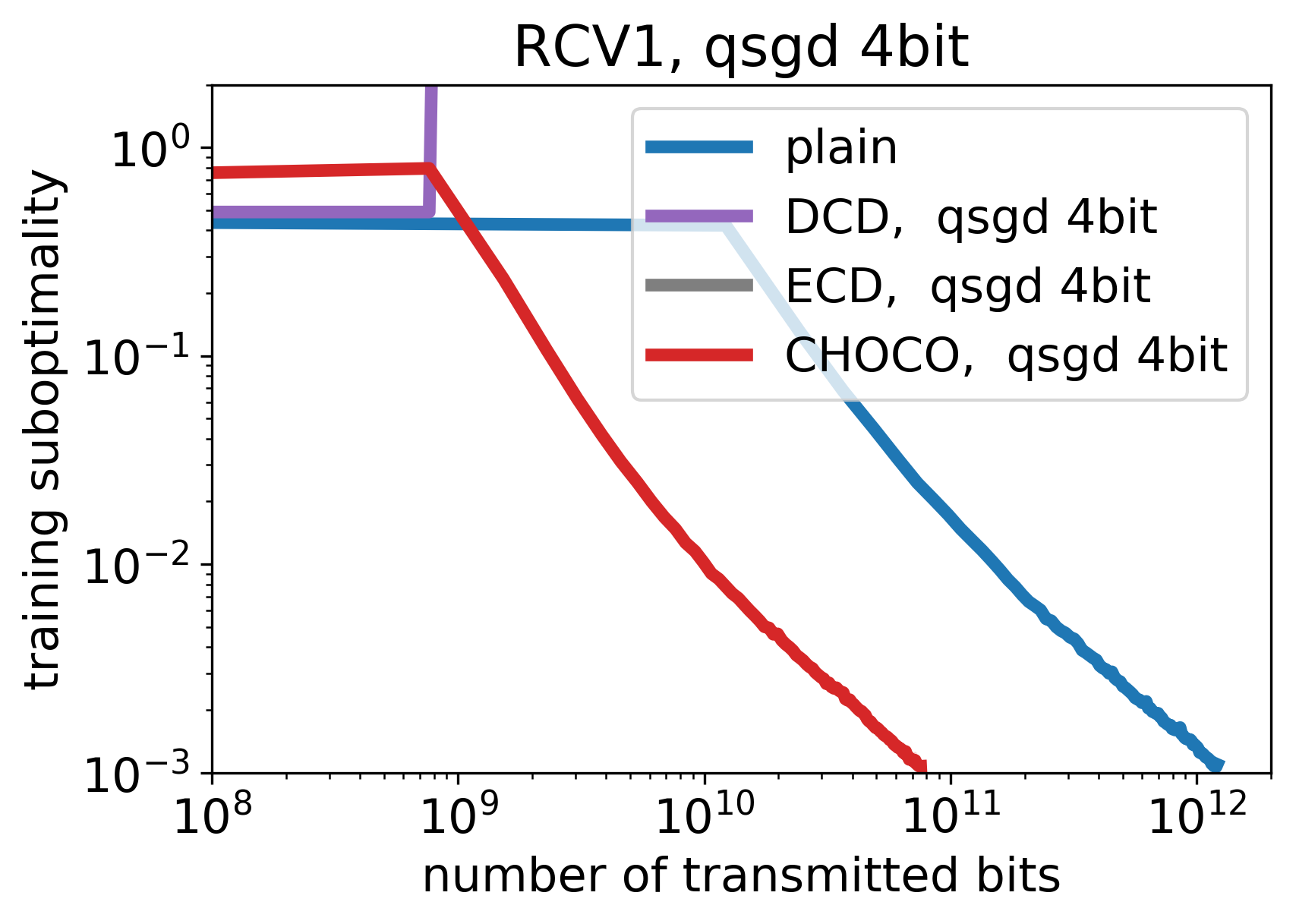}
		\caption{Comparison of Algorithm~\ref{alg:all-to-all} (plain), ECD-SGD, DCD-SGD and \algopt with ($\operatorname{qsgd}_{16}$) quantization, for $epsilon$ (top) and $rcv1$ (bottom) in terms of iterations (left) and communication cost (right). \emph{Randomly shuffled} data between workers.}\label{fig:sgd_qsgd_random}
	\end{minipage}
\end{figure}
\end{document}